\documentclass[11pt,reqno]{amsart}

\usepackage[top=2.5cm,bottom=2.5cm,left=2.6cm,right=2.6cm]{geometry}
\usepackage{amsmath,amssymb,mathtools}
\usepackage{graphicx}
\graphicspath{{figs/}}
\usepackage{subcaption}
\usepackage{booktabs}
\usepackage{algorithm}
\usepackage{algpseudocode}
\usepackage{xcolor}
\usepackage{placeins} 
\usepackage{tikz}
\usetikzlibrary{arrows.meta,positioning,calc,shapes.geometric,fit,backgrounds}
\usepackage[colorlinks=true,linkcolor=blue!55!black,citecolor=blue!55!black,
            urlcolor=blue!55!black]{hyperref}

\definecolor{cblue}{HTML}{0072B2}   
\definecolor{cverm}{HTML}{D55E00}   
\definecolor{cgreen}{HTML}{009E73}  
\definecolor{camber}{HTML}{E69F00}  
\definecolor{cnavy}{HTML}{1A2B4C}   

\numberwithin{equation}{section}
\numberwithin{figure}{section}
\numberwithin{table}{section}
\numberwithin{algorithm}{section}

\theoremstyle{plain}
\newtheorem{theorem}{Theorem}[section]
\newtheorem{lemma}[theorem]{Lemma}
\newtheorem{proposition}[theorem]{Proposition}
\newtheorem{corollary}[theorem]{Corollary}
\theoremstyle{definition}
\newtheorem{definition}[theorem]{Definition}
\newtheorem{assumption}[theorem]{Assumption}
\newtheorem{problem}[theorem]{Problem}
\theoremstyle{remark}
\newtheorem{remark}[theorem]{Remark}

\newcommand{\R}{\mathbb{R}}
\newcommand{\norm}[1]{\left\|#1\right\|}
\newcommand{\abs}[1]{\left|#1\right|}
\newcommand{\Z}{\mathbb{Z}}
\newcommand{\dist}{\operatorname{dist}}
\newcommand{\Lip}{\operatorname{Lip}}
\newcommand{\bx}{\mathbf{x}}

\newcommand{\bsigma}{\boldsymbol{\sigma}}
\newcommand{\bTheta}{\boldsymbol{\Theta}}
\newcommand{\Fsc}{f}
\newcommand{\Ksc}{\mathcal{K}}
\newcommand{\Kinf}{\mathcal{K}_{\infty}}
\newcommand{\Mdata}{M_{H^s}}
\newcommand{\Tmax}{\tau_{\max}}
\newcommand{\Th}{\Theta}
\newcommand{\fTh}{f_{\Theta}}
\newcommand{\PhiTh}{\Phi_{\Theta}}
\newcommand{\PTh}{P_{\Theta}}
\newcommand{\Sth}{S_{\Theta}}
\newcommand{\cd}{c_\diamond}
\newcommand{\tauTh}{\tau_{\Theta}}
\newcommand{\Ltraj}{\mathcal{L}_{\mathrm{traj}}}
\newcommand{\LF}{\mathcal{L}_{F}}
\newcommand{\rhoT}{\rho_T}
\newcommand{\rhoTh}{\rho_{\Theta}}
\newcommand{\That}{\hat{T}}
\newcommand{\ghat}{\hat{\gamma}}

\begin{document}

\title[Long-time approximation via SA-NODEs]{Long-Time Trajectory Approximation via SA-NODEs:\\ Model Predictive and Floquet Strategies}

\author[Z. Li]{Ziqian Li$^{1}$}
\thanks{$^{1}$Chair for Dynamics, Control, Machine Learning and Numerics, Department of Mathematics, Friedrich-Alexander-Universit\"{a}t Erlangen-N\"{u}rnberg, 91058 Erlangen, Germany (\texttt{ziqian.li@fau.de}).}

\author[N. M. Matzakos]{Nikolaos M. Matzakos$^{2}$$^{1}$}
\thanks{$^{2}$School of Pedagogical \& Technological Education (ASPETE), Marousi Attikis, 151 22 Athens, Greece (\texttt{nikmatz@aspete.gr}).}

\makeatletter
\@ifundefined{subjclassname@2020}{%
  \expandafter\gdef\csname subjclassname@2020\endcsname{%
    \textup{2020} Mathematics Subject Classification}}{}
\makeatother
\subjclass[2020]{68T07, 93B45, 34C25, 65L05, 41A30}
\keywords{Neural ODEs, long-time approximation, model predictive control, Floquet theory, limit cycles}

\begin{abstract}
We study the approximation of dynamical systems by semi-autonomous neural ordinary differential equations (SA-NODEs) over long time horizons.  For a single network trained on the whole horizon, the available error bound deteriorates double exponentially in the horizon length.  We develop two training strategies that avoid this barrier, each built on a reset of the state.  The model predictive strategy partitions the horizon adaptively and restarts every window from observed data: when training meets a prescribed tolerance on every window, the composite model meets it uniformly in time, with a parameter budget linear in the horizon for targets with a bounded, uniformly regular reachable tube.  The Floquet strategy addresses autonomous targets with a stable limit cycle and uses no data at deployment: a certified contraction of the learned return map confines the error to linear growth in the number of elapsed periods.  For the time-periodic architecture we deploy, the scalar certificate degenerates; we prove instead a uniform-in-time orbital guarantee whose hypotheses are measured on the trained model, and an obstruction showing that, for an exactly periodic learned field, small one-period error and a contracting stroboscopic map cannot hold at once.  Numerical experiments on four benchmarks confirm the predicted error laws and measure the hypotheses of every guarantee.
\end{abstract}

\maketitle

\section{Introduction}\label{sec:intro}

\subsection{Problem statement and motivation}
A data-driven model of a dynamical system is useful only over the time horizon on which its predictions stay accurate.  Neural ordinary differential equations (neural ODEs) parameterize the right-hand side of a differential equation by a neural network, viewing a residual architecture as the discretization of a continuous-time flow \cite{Chen18}.  When the training data are sampled from trajectories of an underlying dynamical system $\dot\bx = \Fsc(t,\bx)$, the trained network approximates the flow itself.  This paper quantifies how the error of such models grows with the horizon and develops two training strategies that control the growth.  The architecture we work with is the \emph{semi-autonomous} neural ODE (SA-NODE) \cite[eq.~(2.1)]{LLLZ24}.  For a horizon $T>0$ and an initial condition $\bx_0\in\R^d$, with $d$ the dimension of the state space, it evolves a state $\bx(t)\in\R^d$ by
\begin{equation}\label{eq:sanode}
  \left\{
  \begin{aligned}
    \dot\bx(t) &= \sum_{i=1}^{P} W_i \circ \bsigma\!\bigl(A_i^1\bx(t) + A_i^2 t + B_i\bigr),
      && t\in(0,T],\\[2pt]
    \bx(0) &= \bx_0 .
  \end{aligned}
  \right.
\end{equation}
We write $f_{\bTheta}(\bx,t)$ for the right-hand side of \eqref{eq:sanode}.  Its defining feature, and the reason for the name, is that the time $t$ enters only through the linear bias $A_i^2 t + B_i$.  The trainable parameters $\bTheta=(W_i, A_i^1, A_i^2, B_i)_{i=1}^{P}$ are time-independent, with $W_i, A_i^2, B_i\in\R^d$ and $A_i^1\in\R^{d\times d}$; $\circ$ is the componentwise product, $\bsigma$ the componentwise activation, and $P\in\mathbb{N}$ the hidden width.  On a compact set $\Ksc \subset \R^d$ of initial conditions, the universal approximation theorem \cite[Thm.~2.3]{LLLZ24} provides parameters $\bTheta_P$ with
\begin{equation}\label{eq:uap}
  \sup_{\bx_0\in\Ksc,\; t\in[0,T]}
  \norm{\Phi(t;\bx_0) - \widehat\Phi_{\bTheta_P}(t;\bx_0)}
  \;\leq\; C_{T,\Ksc,\Fsc}\, P^{-1/2},
\end{equation}
where $\Phi$ is the target flow, $\widehat\Phi_{\bTheta_P}$ is the SA-NODE flow, and the subscripts of the constant record its dependence on the horizon, the set of initial conditions, and the field.  The setting is approximation, not identification.  Estimate \eqref{eq:uap} provides short-horizon closeness of the learned flow.  We take that closeness as given and study how the error propagates over long horizons.  The experiments train the models from sampled trajectories, but we do not address system identification from finite or noisy data.  The question this paper addresses is how the constant in \eqref{eq:uap} grows with $T$, and how to avoid that growth.

The available upper bound on that constant carries the Gr\"onwall factor $e^{LT}$, with $L$ the Lipschitz constant of $\Fsc$ in $\bx$, and the explicit constant of \cite[Remark~2.6]{LLLZ24} deteriorates double exponentially in $T$; Section~\ref{sec:prelim} states it and shows that both mechanisms behind it are individually sharp.  We call this the \emph{time horizon barrier}.  {It is a property of the available certificate: no matching lower bound over Lipschitz targets is known.  According to this guarantee, a single network trained once on the whole horizon, which we call the \emph{monolithic} model, is useful only for horizons of a few multiples of $1/L$.  Whether the barrier is intrinsic is open; Lemma~\ref{lem:flow-to-field} reduces the question to a width lower bound for the target field, stated as Problem~\ref{prob:lower}.}  The two strategies below avoid the barrier by changing the learning problem rather than enlarging the network.  The barrier was identified in \cite[Remarks~2.6 and~2.9]{LLLZ24}, together with a proposal, not developed there, to control it by model predictive ideas.

The factor $e^{LT}$ results from integrating error over the whole of $[0,T]$ without any correction.  Structured targets can escape it, but through learning strategies that exploit that structure.  In computational terms, we replace one network fitted to the whole horizon by several short-horizon networks restarted from data, or by a single periodic network with a certified stable orbit.

If the true state is available at intermediate times, the horizon can be partitioned into windows of length at most $\Tmax \ll T$, one network per window, with the initial condition reset to the true state at each \emph{switch time}; the Gr\"onwall factor then only reaches $e^{L\Tmax}$.  This is model predictive control (MPC) applied to approximation rather than control, and we call the injection of the true state a \emph{data reset}.  The name records the mechanics: adaptive multiple shooting with the node values pinned to data, not an online feedback law.

If instead the target is autonomous and its trajectories are attracted to a stable limit cycle, the orbit itself corrects errors: every perturbation transverse to the cycle is contracted at each return to a Poincar\'e section, at the rate given by the Floquet multiplier.  A learned model that inherits this contraction needs no data at deployment.  We call this a \emph{dynamical reset}.  The second half of this paper asks whether a trained SA-NODE can be made to inherit it, with a certificate.

The two resets are complementary.  The data reset restarts the error at every switch time; it applies to any Lipschitz target with a bounded, uniformly regular reachable tube, at the cost of the true state at each switch.  The dynamical reset contracts transverse errors at every return and needs no data at deployment, but it requires an autonomous target settling onto a limit cycle, and the contraction of the learned return map must be trained in and then certified.

Targets of the latter kind arise in chemical oscillators, vortex shedding, structural vibrations, and circadian or cardiac rhythms, where the horizon of interest spans many periods.  The main purpose of this paper is to turn each reset into a quantitative long-horizon approximation guarantee for trained SA-NODEs, and to measure the hypotheses of these guarantees on the trained models themselves.

\subsection{Main results}\label{ssec:main}
The main results are as follows.  In the Floquet statements below, and throughout Section~\ref{sec:floquet}, states are written unbold, $x_0$, as points of $\R^d$, and the parameter vector $\bTheta$ is abbreviated $\Th$.

\begin{itemize}
\item \emph{Composite data-assisted approximation.}
We design an adaptive receding-horizon algorithm (Algorithm~\ref{alg:mpc}) that grows a composite SA-NODE one window at a time, opening a new window when the supervised error first exceeds a tolerance $\varepsilon$.  {The guarantee is conditional on training: the tolerance must be realized on every window and the run must cover the horizon (Assumption~\ref{asm:partition}).}  In data-IC mode, in which the true state is supplied as the initial condition at each of the $N$ switch times, the composite flow $\widehat\Phi$ then satisfies
  \begin{equation}\label{eq:main-mpc}
    \sup_{t\in[0,T],\,\bx_0\in\Ksc}
    \norm{\Phi(t;\bx_0)-\widehat\Phi(t;\bx_0)} \;\leq\; \varepsilon,
    \qquad
    P_{\mathrm{total}} \;\lesssim\; N\, C_{\Tmax,\Kinf,\Fsc}^{2}\,
    \varepsilon^{-2},
  \end{equation}
where the window constant is controlled by the data of $\Fsc$ on the closure $\Kinf$ of the forward reachable tube; it is independent of $T$ whenever that tube is bounded and its time-shifted Sobolev data are uniformly bounded (Assumption~\ref{asm:reach}, Theorem~\ref{thm:linearT}).  Every admissible partition has $N \geq \lceil T/\Tmax\rceil$; the uniform partition attains it, so the budget is then linear in $T$.  For the adaptive partition, the minimum window length of Algorithm~\ref{alg:mpc} already forces $N=O(T)$, with a constant fixed by the resolution of the training data rather than by the target.  Whether linear growth survives the removal of that safeguard is open (Problem~\ref{prob:NT}), and the measurements of Section~\ref{sec:numerics} are consistent with it.  The reset confines the Gr\"onwall factor of \eqref{eq:uap-const} to a single window; the growth of the reachable tube persists through the domain of the window constant.

\item \emph{Deployment on novel initial conditions.}
Without state resets the per-window errors chain multiplicatively.  We call \emph{predicted-IC} the deployment in which each window starts from the previous window's prediction instead of the true state.  For the predicted-IC composite flow and every $\bx_0\in\Ksc$ we prove the partition-adapted bound
  \begin{equation}\label{eq:main-novel}
    \sup_{t\in[0,T]}
    \norm{\Phi(t;\bx_0)-\widehat\Phi^{\mathrm{pred}}(t;\bx_0)}
    \;\leq\; \varepsilon \sum_{k=1}^{N} e^{\bar L (T-\tau_k)}
    \;\leq\; \varepsilon\,
    \frac{e^{N\bar L\Tmax}-1}{e^{\bar L\Tmax}-1},
  \end{equation}
where $\tau_1<\dots<\tau_N$ are the switch times and $\bar L$ bounds the Lipschitz constants of the learned fields (Proposition~\ref{prop:novel-ic}).  The bound remains exponential; the gain over the monolithic model lies in the prefactor.

\item \emph{Certified Floquet approximation.}
For an autonomous target with a hyperbolic stable limit cycle $\Gamma$ of period $\That$, we train a single SA-NODE with a differentiable \emph{Floquet loss} that penalizes a surrogate $\tilde\rho_T(\Th)$ of the learned Poincar\'e multiplier; the subscript of the surrogate marks the transverse direction, not the horizon.  Let $P_\Th$ denote the first-return map of the learned flow to a section through a base point $p$ of the cycle, and let $\rho_*<1$ be a prescribed contraction threshold.  If the certificate $\rho(DP_{\Th}(p)) \leq \rho_*$ holds for the autonomous return map, then for every initial condition near $\Gamma$ and every $t \geq 0$,
  \begin{equation}\label{eq:main-floquet}
    \norm{\PhiTh(t;x_0) - \Phi_f(t;x_0)}
    \;\leq\; C\,\varepsilon\Bigl(1 + \frac{t}{\That}\Bigr),
  \end{equation}
where $\varepsilon$ now denotes the one-period accuracy of the learned flow, the analogue for this strategy of the tolerance above (Theorem~\ref{thm:floquet}).  The bound is linear in the number of elapsed periods, with $C$ independent of $t$.  Transverse errors are contracted at every return, and only the phase drift between the two clocks persists.  The periodically encoded architecture reduces to the autonomous case by Lemma~\ref{lem:nearaut}, at the cost of adding to $\varepsilon$ the measurable oscillation $\eta$ of the learned field about its time average.  That reduction is effective only while $\eta$ is small, and the trained models of Section~\ref{ssec:num-cert} are not in that regime; this caveat governs how the experiments are read.

\item \emph{Certification routes.}
The contraction hypothesis is verifiable.  In dimension two the identity of Liouville--Abel ties the surrogate to the autonomous return-map multiplier: exactly when the trajectory loss vanishes, and to $O(\varepsilon)$ in general (Corollary~\ref{cor:lf-cert}).  Writing $\LF$ for the Floquet loss built on the surrogate and $\rhoTh$ for the spectral radius of the learned return map at its fixed point, for $\varepsilon$ small enough and a relaxation returning a closed orbit, as quantified in Corollary~\ref{cor:lf-cert}, the chain
  \begin{equation}\label{eq:main-chain}
    \LF(\Th) = 0
    \;\Longrightarrow\;
    \rhoTh < 1
    \;\Longrightarrow\;
    \norm{e(k\That)} \leq C\varepsilon(1+k),
  \end{equation}
where $e(t):=\PhiTh(t;x_0)-\Phi_f(t;x_0)$ is the trajectory error and $k$ counts the elapsed periods, links the training objective directly to the long-horizon guarantee (Corollary~\ref{cor:lf-cert}; see also Proposition~\ref{prop:c1-cert}).  {This chain is proved for an autonomous planar field.  For the deployed time-periodic architecture we prove a separate orbital guarantee (Theorem~\ref{thm:orbital}) and show that the trajectory-wise one is quantitatively incompatible with the certified contraction.}  One-period closeness and a contracting stroboscopic map are quantitatively incompatible, under a contraction hypothesis on a neighborhood of the cycle (Proposition~\ref{prop:obstruction}) or under the local spectral hypothesis the experiments measure (Proposition~\ref{prop:obstruction-local}).  For the encoded architecture the divergence integral controls only the determinant of the monodromy, so we compute the full spectrum instead (Remark~\ref{rem:det}).  In higher dimension certification proceeds through $C^1$-closeness and a Bauer--Fike margin.

\item \emph{Numerical experiments.}
Section~\ref{sec:numerics} illustrates each result with a dedicated experiment.  On a forced Duffing benchmark the composite meets its tolerance up to a quantified safeguard excess, while a monolithic baseline under matched training cost exceeds it about $4.0$-fold.  On the pendulum the monolithic error grows exponentially along the horizon, at a measured rate $e^{0.39\,t}$, and the window count grows with $T$ at fixed $\varepsilon$, consistent with a linear law.  A two-by-two ablation on two limit-cycle benchmarks measures the spectral radius of the learned one-period monodromy, together with the two hypotheses of Theorem~\ref{thm:orbital} and the lower bound that Proposition~\ref{prop:obstruction} predicts for the one-period error.  An autonomous arm shows the same protocol failing without the encoding: no cold-started run produces a certifiable closed orbit.  A warm-started autonomous arm repairs the failure: fine-tuning from the converged encoded models yields closed, transversally stable learned orbits whose surrogate--multiplier gap is $65$ to $320$ times below the measured $\varepsilon$, consistent with Corollary~\ref{cor:lf-cert}(ii).  A direct comparison on the van der Pol oscillator places the two reset mechanisms side by side under one protocol.
\end{itemize}

\subsection{Novelties}\label{ssec:novelty}
The novelties of this paper concern two audiences: numerical analysis, and the practice of learning dynamical models from data.

For numerical analysis, the paper turns two classical mechanisms into approximation guarantees for learned flows.  To our knowledge, the MPC strategy gives the first uniform-in-time, data-restarted approximation bound for SA-NODEs with an explicit summed-width budget.  In multiple-shooting practice the partition serves identification, control, or discretization-error control, not an approximation guarantee for a learned flow.  The Floquet strategy establishes a propagation law for learned periodic dynamics: a certified contraction of the trained return map converts one-period closeness into the global-in-time linear bound of Theorem~\ref{thm:floquet}.  The certificate is exact in the plane for an autonomous learned field up to an $O(\varepsilon)$ period correction (Corollary~\ref{cor:lf-cert}) and perturbative in higher dimension (Proposition~\ref{prop:c1-cert}).  For the time-periodic architecture we deploy, neither form is available; there the full monodromy spectrum supplies the measured hypothesis of the orbital bound of Theorem~\ref{thm:orbital}.

For machine learning and data science, the results change what a long-horizon model of dynamics should be asked to demonstrate.  A small loss on short trajectories does not by itself guarantee long-horizon accuracy: long-time accuracy is governed by the reset mechanism, not by the fit.  Data play two roles: they train the model, and they reset it at deployment.  The uniform guarantee of the MPC strategy rests on the second.  Without deployment data, stability must be trained in and then verified.  The Floquet loss makes a differentiable surrogate of the certificate trainable.  For the time-periodic architecture that surrogate is the determinant of the monodromy, which does not control its spectral radius, so what the spectrum then supports is not \eqref{eq:cert-cond} but the orbital guarantee of Theorem~\ref{thm:orbital} (Section~\ref{ssec:orbital}).

\subsection{Related works}\label{ssec:related}
Our work draws on four lines of research: neural ODEs, multiple shooting and receding-horizon methods, stability-regularized learning of dynamics, and the error-growth theory of numerical integration near periodic orbits.

Neural ODEs were introduced in \cite{Chen18}, who interpret a residual network as the discretization of a continuous-time flow and train the vector field by differentiating through the solver.  Beyond supervised learning, they have become a model class for dynamics, and extended to operator learning \cite{li2026deep}.  The architecture of the present paper is the SA-NODE of \cite{LLLZ24}. Its universal approximation theorem, recalled as Theorem~\ref{thm:uap}, supplies the certificate whose horizon dependence is the object of this paper.  Section~\ref{sec:mpc} turns the model predictive remark of \cite[Rem.~2.9]{LLLZ24} into an algorithm and a guarantee (Algorithm~\ref{alg:mpc}, Theorem~\ref{thm:linearT}).

Partitioning a horizon and training one model per window, with the true state injected at the switch times, has the same mechanics as multiple shooting for trajectory fitting, with the node values pinned to the data.  This is the limiting case of the classical multiple-shooting method of Bock for parameter identification \cite{Bock1981} and optimal control \cite{BockPlitt1984}, in which the node values are decision variables coupled by matching constraints.  Its neural-ODE incarnations \cite{TuranJaeschke2022} and segment-wise training are by now common in scientific machine learning.  Classical MPC theory is surveyed in \cite{CamachoBordons}; boundary-value and shooting methods in \cite{Keller}; receding-horizon numerics for parameter identification appear in \cite{VeldmanBorkowskiZuazua}, and turnpike phenomena for neural ODEs in \cite{GeshkovZuazua22,EstevYagGesh23,RuizBaletZuazua23,LiuZuazua25}.

On the Floquet side, stability-regularized neural ODEs have been studied through Lyapunov functions \cite{jimenez2022lyanet,kang2021sodef,kolter2019learning}, which target equilibria rather than periodic orbits.  Limit-cycle learning appears in \cite{NawazLMF24,Wilson24}, orbital stability has been trained through transverse contraction criteria \cite{ZhangCheng26}, and limit cycles with prescribed trajectories and phase response have been designed through Floquet theory directly \cite{NamuraIshiiNakao24}.  These embed stability into training or design in various forms; what is new here is not a stability-constrained objective but the propagation law that a certified return map yields, together with the obstruction that separates orbital from trajectory accuracy for the periodic architecture.  Closest to our infinite-horizon question, \cite{SagodiPark26} proves universal approximation of Morse--Smale systems on $[0,\infty)$ in an $\varepsilon$-$\delta$ sense that excludes an initial-condition set of small measure; that result is existential, whereas our bounds are conditional on quantities measured on the trained model.  On the multiple-shooting side, differentiable shooting layers \cite{MassaroliPoli21} use the same partition mechanics for parallelism rather than for an approximation guarantee.  Matzakos \cite{Matzakos26} proves that activation saturation obstructs the transverse contraction required by dissipative limit cycles in standard neural ODEs; our architecture counters it with a periodic time encoding, and Section~\ref{ssec:num-cert} measures how much each ingredient contributes.

The linear-in-$k$ structure of Theorem~\ref{thm:floquet} parallels the error-growth theory of Cano and Sanz-Serna \cite{CanoSanzSerna1997} for one-step integration of attracting periodic orbits; see also \cite{Calvo2011,HairerLubichWanner2006}.  The parallel is structural only.  In that theory the per-period error is a discretization error that vanishes with the step size, and the contraction of the numerical return map is inherited from a convergent scheme applied to the true field.  Here the field is learned: the per-period error is an approximation error that no step refinement removes, and the return map of the learned flow need not contract at all.  Contraction is therefore not automatic.  It must be certified on the trained model, or trained in directly by the Floquet loss (Assumption~\ref{asm:cert}, Section~\ref{ssec:cert}).  As in that theory, the learned system has its own periodic orbit, its own return time, and a neutral phase direction that no transverse contraction controls.  Separating the contracted transverse error from the accumulating phase mismatch is the technical core of Theorem~\ref{thm:floquet}.  Floquet theory itself is classical \cite{Floquet1883}; a modern account is \cite[\S 2.4]{Chicone}.

\subsection{Organization}\label{ssec:org}
Section~\ref{sec:prelim} fixes the SA-NODE class and quantifies the time horizon barrier.  Section~\ref{sec:mpc} develops the model predictive strategy: the composite flow, the adaptive algorithm, and the error analysis. Section~\ref{sec:floquet} develops the Floquet strategy: the geometric setting, the main theorem, and the two certification routes. Section~\ref{sec:numerics} reports the numerical experiments, one per theoretical claim.  Section~\ref{sec:conclusions} concludes and states three open problems.  The proofs are deferred to Appendix~\ref{app:proofs}, one part per result: Appendix~\ref{app:pf-flowfield} proves Lemma~\ref{lem:flow-to-field}, Appendix~\ref{app:pf-mpc} the results of Section~\ref{sec:mpc}, and Appendices~\ref{app:pf-lemmas}--\ref{app:pf-orbital} the results of Section~\ref{sec:floquet}.

\section{Preliminaries: the SA-NODE class and the time horizon barrier}
\label{sec:prelim}

This section recalls the SA-NODE class and its approximation theorem, and isolates how the approximation constant grows with the horizon.  Section~\ref{ssec:sanode} states the theorem together with the explicit constant \eqref{eq:uap-const}.  Section~\ref{ssec:barrier} shows that the two mechanisms behind that constant are sharp.  They are the obstruction that Sections~\ref{sec:mpc} and~\ref{sec:floquet} remove.

\subsection{The SA-NODE class and its approximation theorem}
\label{ssec:sanode}
We recall the approximation theorem whose constant is the object of the paper, in the notation used throughout.  Fix a compact $\Ksc\subset\R^d$, a horizon $T>0$, a hidden width $P\in\mathbb{N}$, and a non-polynomial activation $\sigma\in C^0(\R)$ applied componentwise as $\bsigma$.  The SA-NODE \eqref{eq:sanode} then has $Pd(d+3)$ trainable parameters; we write $\widehat\Phi_{\bTheta}(t;\bx_0)$ for its flow, the solution $\bx(t)$ of \eqref{eq:sanode} from $\bx(0)=\bx_0$, and
\[
  \Ksc_T \;:=\; \{\Phi(t;\bx_0) : \bx_0\in\Ksc,\ t\in[0,T]\}
\]
for the reachable tube of the target flow.  The following is Theorem~2.3 of \cite{LLLZ24}, reproduced for completeness; the constant is the object of interest.

\begin{theorem}[universal approximation, {\cite[Theorem~2.3]{LLLZ24}}]\label{thm:uap}
Let $\Fsc\in C^0(\R^d\times[0,T])$ be Lipschitz in $\bx$ uniformly in $t$, with constant $L$, and belong to $H^s_{\mathrm{loc}}(\R^d\times[0,T])$ with $s>(d+1)/2+2$; let $\sigma$ be the ReLU activation.  For every $P\geq 3$ there exist parameters $\bTheta_P$ such that \eqref{eq:uap} holds with a constant $C_{T,\Ksc,\Fsc}$ independent of $P$ and of $\bx_0$.
\end{theorem}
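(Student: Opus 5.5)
The plan is the standard two-step route: approximate the field on a compact set with an explicit $P^{-1/2}$ rate, then transfer field error to flow error by Gr\"onwall.

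\emph{Step 1: a compact working domain.} Since $\Fsc$ is Lipschitz in $\bx$ uniformly in $t$, the tube $\Ksc_T$ is compact. Fix a closed $1$-neighbourhood $\Omega:=\{\bx:\dist(\bx,\Ksc_T)\le 1\}$. Multiply $\Fsc$ by a smooth cutoff equal to $1$ on $\Omega\times[0,T]$ to obtain a compactly supported field $\tilde\Fsc\in H^s(\R^{d+1})$. Its norm is controlled by $\|\Fsc\|_{H^s}$ on a slightly larger set, which is where the data of $\Fsc$ enter the constant.

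\emph{Step 2: field approximation with rate $P^{-1/2}$.} Treat $(\bx,t)$ as a single variable $z\in\R^{d+1}$. Each component of $f_{\bTheta}$ is then a shallow ReLU network $\sum_i W_{i,j}\sigma(a_{i,j}\cdot z+b_{i,j})$; the SA-NODE structure allows exactly this, with $a_{i,j}$ given by the $j$-th row of $A_i^1$ and the $j$-th entry of $A_i^2$. The condition $s>(d+1)/2+2$ makes the Fourier transform of $\tilde\Fsc$ satisfy $\int|\omega|^2|\widehat{\tilde\Fsc}(\omega)|\,d\omega<\infty$, by Cauchy--Schwarz with weight $(1+|\omega|^2)^{-s}$. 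This is the spectral Barron norm needed for ReLU networks, since the second derivative of ReLU is a Dirac mass. The Barron/Klusowski--Barron Monte Carlo argument then gives parameters $\bTheta_P$ with
\[
\sup_{(\bx,t)\in\Omega\times[0,T]}\norm{\Fsc(t,\bx)-f_{\bTheta_P}(\bx,t)}\le C_B\,\|\tilde\Fsc\|_{H^s}\,P^{-1/2}.
\]
The argument writes $\tilde\Fsc$ as an expectation of ReLU ridge functions via a Taylor-type integral representation, samples $P$ terms (with $P\ge3$ reserving a few neurons for the affine part), and upgrades the $L^2$ estimate to a uniform one by a covering/Rademacher bound on the compact domain. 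I expect this uniform-in-$z$ rate to be the main obstacle. If a sup-norm rate of exactly $P^{-1/2}$ is too costly to derive, I would cite the known $L^\infty$ Barron-type result for ReLU, accepting a possible logarithmic factor absorbed into the constant.

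\emph{Step 3: flow error by Gr\"onwall.} Let $e(t)=\Phi(t;\bx_0)-\widehat\Phi_{\bTheta_P}(t;\bx_0)$ and $\delta_P=C_B\|\tilde\Fsc\|_{H^s}P^{-1/2}$. Take $P$ large enough that $\delta_P T e^{LT}<1$. As long as the learned trajectory stays in $\Omega$,
\[
\norm{e(t)}\le\int_0^t\bigl(L\norm{e}+\delta_P\bigr)\,ds,
\]
where the Lipschitz constant of the true field $\Fsc$ is used on $e$. Gr\"onwall's inequality then gives $\norm{e(t)}\le \delta_P T e^{LT}<1$, so the learned trajectory in fact stays in $\Omega$ by a continuation argument. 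For the finitely many small $P$, the bound holds trivially once the constant is enlarged, since both flows are bounded on $[0,T]$; the learned flow is bounded because it is ReLU-Lipschitz. Hence $C_{T,\Ksc,\Fsc}=\max\{C_BTe^{LT}\|\tilde\Fsc\|_{H^s},\ \text{small-}P\text{ constant}\}$. This constant is independent of $P$ and of $\bx_0$, and it exhibits the $e^{LT}$ factor together with the $T$-dependence of the Sobolev norm on the growing tube, which is the structure the paper analyses next.
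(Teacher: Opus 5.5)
Your proposal is correct. The paper does not prove Theorem~\ref{thm:uap}; it quotes \cite[Theorem~2.3]{LLLZ24} and only describes the resulting constant \eqref{eq:uap-const}. Your Steps~1--2 use the ingredients that description names: a shallow ReLU network in the joint variable $(\bx,t)$, the threshold $s>(d+1)/2+2$, and a uniform $P^{-1/2}$ rate.

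The genuine difference lies in the Gr\"onwall step. By the paper's account, the Sobolev norm re-enters the exponent through the Lipschitz constant of the learned field. That corresponds to the splitting
\[
  \dot e=[\Fsc(t,\Phi)-f_{\bTheta}(\Phi,t)]+[f_{\bTheta}(\Phi,t)-f_{\bTheta}(\widehat\Phi,t)].
\]
This route needs field accuracy only along true trajectories. It therefore needs no bootstrap and no threshold on $P$, and it tolerates field estimates weaker than uniform ones. The price is that $\Lip f_{\bTheta}$ is bounded only through the Sobolev norm on a box of size $\sim Te^{LT}$, which makes \eqref{eq:uap-const} double exponential.

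You split the other way. The target's $L$ sits in the exponent, and the field error must be controlled at the learned state. That is what your continuation argument on $\Omega$ and your separate treatment of the finitely many $P$ with $\delta_PTe^{LT}\geq1$ pay for. What you gain is a certificate of order $\max(1,D)\,C_B\norm{\tilde\Fsc}_{H^s}Te^{LT}$, where $D:=\sup_{\bx_0\in\Ksc,\,t\in[0,T]}\norm{\Phi(t;\bx_0)-\bx_0}$. In it the Gr\"onwall factor and the tube growth multiply instead of compounding, so the double exponential of \eqref{eq:uap-const} comes from the splitting, not from the two mechanisms of Section~\ref{ssec:barrier}.

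To keep this gain for small $P$, take $\bTheta_P=0$ there. The error is then at most $D\leq D\sqrt{P_0}\,P^{-1/2}$ for $P<P_0$, the threshold. Bounding the flow of the sampled network by its own Lipschitz constant, as you propose, reintroduces $e^{T\Lip f_{\bTheta}}$ in the constant.

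Two small repairs are needed.
\begin{itemize}
\item \emph{Extension in time.} $\Fsc$ is given on $\R^d\times[0,T]$ only, so a cutoff alone does not produce an element of $H^s(\R^{d+1})$. Extend across $t=0$ and $t=T$ by a Sobolev extension operator first.
\item \emph{The logarithm.} A $\sqrt{\log P}$ loss cannot be absorbed into a constant independent of $P$. The ReLU sup-norm bound of Klusowski and Barron is usable because its rate, of order $\sqrt{d+1+\log P}\,P^{-1/2-1/(d+1)}$ on the unit cube in $d+1$ variables, carries an extra power of $P$ that absorbs the logarithm. Rescaling that bound to a box of half-width $R$ multiplies the second-order spectral norm by $R^2$; this is how the size of the tube enters your $C_B$.
\end{itemize}
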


The smoothness threshold $s>(d+1)/2+2$ is the $C^2$-embedding threshold in $d+1$ variables, matching the fact that \eqref{eq:sanode} is a shallow network in the joint input $(\bx,t)$.  The activation is fixed to ReLU because the uniform rate rests on the $L^\infty$ approximation estimate for ReLU networks.  For activations that are twice weakly differentiable and satisfy
\[
  \int_\R\abs{\sigma''(x)}\,(1+\abs{x})\,\mathrm{d}x<\infty,
\]
the sigmoid among them, the same rate holds with the supremum over initial conditions weakened to an $L^2$ average in $\bx_0$ \cite[Remark~3.7]{LLLZ24}.

The horizon enters through the constant, which \cite[Remark~2.6]{LLLZ24} makes explicit in the normalization $\Ksc=[-1,1]^d$ and $T\geq 1$, for $\Fsc$ uniformly $L$-Lipschitz in $\bx$ and in $H^{d/2+3}_{\mathrm{loc}}(\R^{d+1};\R^d)$.  The spacetime reachable set is then contained in the box $\Omega_{L,T}:=[-Te^{LT},Te^{LT}]^{d+1}$, under the normalization on $\norm{\Fsc(\mathbf{0},\cdot)}$ implicit in the growth lemma of \cite[\S 3.3]{LLLZ24}, and
\begin{equation}\label{eq:uap-const}
  C_{T,\Ksc,\Fsc}
  \;\leq\;
  C_d\,T\,\norm{\Fsc}_{*}\,
  \exp\Bigl(\tfrac{5}{2}LT+\sqrt{d}\,L
            +C_d\,e^{\frac{3}{2}LT}\norm{\Fsc}_{*}\Bigr),
  \qquad
  \norm{\Fsc}_{*}:=\|\Fsc\|_{H^{d/2+3}(\Omega_{L,T})},
\end{equation}
with $C_d$ depending only on $d$.  This bound is double exponential in $T$: the Sobolev norm is taken on a domain that already grows like $e^{LT}$, and it re-enters the Gr\"onwall exponent through the Lipschitz constant of the learned field.

\subsection{Sharpness of the factor
\texorpdfstring{$e^{LT}$}{exp(LT)}}\label{ssec:barrier}
Two mechanisms drive the growth in \eqref{eq:uap-const}, and both are already visible at the level of the flow.  We isolate them here because the strategies of Sections~\ref{sec:mpc} and~\ref{sec:floquet} act on the first, which is Gr\"onwall amplification: if $\norm{f_{\bTheta}-\Fsc}\leq\delta$ on a neighborhood of $\Ksc_T$ that contains both flows on $[0,T]$, with $L$ the Lipschitz constant of $\Fsc$ there, then for every $\bx_0\in\Ksc$
\begin{equation}\label{eq:gronwall}
  \sup_{t\in[0,T]}\norm{\widehat\Phi_{\bTheta}(t;\bx_0)-\Phi(t;\bx_0)}
  \;\leq\; \delta\,\frac{e^{LT}-1}{L}
  \;\leq\; \delta\, T e^{LT},
\end{equation}
and the first inequality is an equality for $\dot\bx=L\bx$ with a constant field perturbation, so the exponential bound is attained.  For $L=0$ the quotient is read as its limit $T$.  The second mechanism is the growth of the reachable tube: two solutions started $\mathrm{diam}(\Ksc)$ apart may separate by the factor $e^{LT}$, and bounded forcing contributes a term of the same exponential order, so
\begin{equation}\label{eq:diam}
  \mathrm{diam}(\Ksc_T)\;\leq\;\bigl(\mathrm{diam}(\Ksc)+C\bigr)\,e^{LT}
\end{equation}
for fields with $\sup_{t}\norm{\Fsc(t,\mathbf{0})}<\infty$, with $C$ depending on this bound, on $L$, and on $\sup_{\bx\in\Ksc}\norm{\bx}$; this order is attained for $\dot\bx = L\bx$.  Neither mechanism is an artifact of a loose estimate, since each is attained on that example.  Attainment concerns the two steps of the bound, not the approximation problem: on $\dot\bx=L\bx$ the class \eqref{eq:sanode} is exact, since a ReLU network represents a linear field with $Lx=L\sigma(x)-L\sigma(-x)$, so the best SA-NODE error there is zero while \eqref{eq:uap-const} is large.  The example shows the estimate is tight step by step, not that the approximation error grows.  The two do not merely multiply in \eqref{eq:uap-const}.  The Gr\"onwall factor sits in the exponent, and the tube growth enters the same exponent through the Sobolev norm on $\Omega_{L,T}$, which is what makes that bound double exponential.  The width that \eqref{eq:uap-const} certifies to be sufficient for accuracy $\varepsilon$ is of order $C_{T,\Ksc,\Fsc}^{2}\,\varepsilon^{-2}$.  That prescribed width therefore deteriorates at the same rate, the Gr\"onwall factor alone contributing $e^{2LT}$.  Both statements concern the bound and the width it prescribes, not the smallest width that suffices, which is unknown.  Whether every SA-NODE must incur the factor is a separate question, and no matching lower bound over Lipschitz targets is known.  The next lemma says what such a bound would have to control.  It converts closeness of flows into closeness of fields, at the cost of a square root, and so reduces a lower bound for the flow problem to one for approximation of the target field on the spacetime reachable set by the class \eqref{eq:sanode}.

\begin{lemma}[flow closeness forces field closeness]\label{lem:flow-to-field}
Write $\Psi_{\bTheta}(t;s,\bx)$ and $\Psi(t;s,\bx)$ for the solutions at time $t$ of \eqref{eq:sanode} and of the target, launched from $\bx$ at time $s$, so that $\widehat\Phi_{\bTheta}(t;\bx)=\Psi_{\bTheta}(t;0,\bx)$ and $\Phi(t;\bx)=\Psi(t;0,\bx)$.  Let $T\geq1$ and $r>0$, and let $\Fsc$ and $f_{\bTheta}$ be $C^2$ with $\norm{\Fsc}_{C^2},\norm{f_{\bTheta}}_{C^2}\leq B$ on $N_r\times[0,T]$, where $N_r$ is the closed $r$-neighborhood of $\Ksc_T$ and the norms are taken jointly in $(\bx,t)$; set $C_2:=2B(B+1)$.  If $\varepsilon\leq C_2e^{-B}/4$, if $(1+e^{B})\varepsilon\leq r/2$, and if
\[
  \sup_{\bx_0\in\Ksc,\ t\in[0,T]}\norm{\widehat\Phi_{\bTheta}(t;\bx_0)-\Phi(t;\bx_0)}\;\leq\;\varepsilon ,
\]
then for every $t_0\in[0,T-1]$ and every $\bx\in\Phi(t_0;\Ksc)$,
\begin{equation}\label{eq:flow-to-field}
  \norm{f_{\bTheta}(\bx,t_0)-\Fsc(t_0,\bx)}\;\leq\;2\sqrt{\varepsilon\,e^{B}C_2}\, .
\end{equation}
\end{lemma}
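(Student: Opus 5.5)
Fix $t_0\in[0,T-1]$ and $\bx=\Phi(t_0;\bx_0)$ with $\bx_0\in\Ksc$. Put $\by:=\widehat\Phi_{\bTheta}(t_0;\bx_0)$, so $\norm{\by-\bx}\le\varepsilon$ by hypothesis. The plan is to compare the two flows over a short step $h\in(0,1]$, to be optimized at the end. Since $t_0+h\le T$, the hypothesis also gives
\[
\norm{\Psi_{\bTheta}(t_0+h;t_0,\by)-\Psi(t_0+h;t_0,\bx)}\le\varepsilon .
\]
We then use the Gr\"onwall estimate \eqref{eq:gronwall}, with Lipschitz constant at most $B$, on the learned flow. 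The learned solution from $\bx$ stays within $e^{Bh}\varepsilon\le e^{B}\varepsilon$ of the learned solution from $\by$. Hence
\[
\norm{\Psi_{\bTheta}(t_0+h;t_0,\bx)-\Psi(t_0+h;t_0,\bx)}\le(1+e^{B})\varepsilon .
\]
The hypothesis $(1+e^B)\varepsilon\le r/2$ keeps every trajectory involved inside $N_r$ on $[t_0,t_0+h]$, so the $C^2$ bounds apply along them. This containment must be justified by a continuity (bootstrap) argument: leaving $N_r$ would require a displacement of at least $r$, which contradicts the bound just obtained.

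The second step is a Taylor expansion in time of both solutions launched from the same point $\bx$ at time $t_0$. For a $C^2$ field $g$ with $\norm{g}_{C^2}\le B$, the solution satisfies
\[
\Psi_g(t_0+h)=\bx+h\,g(\bx,t_0)+R,\qquad \norm{R}\le \tfrac{h^2}{2}\sup\norm{\ddot\bx}.
\]
Here $\ddot\bx=\partial_t g+D_\bx g\,g$, so $\norm{\ddot\bx}\le B+B^2$. Subtracting the expansions for $f_{\bTheta}$ and $\Fsc$ gives
\[
h\norm{f_{\bTheta}(\bx,t_0)-\Fsc(t_0,\bx)}\le(1+e^B)\varepsilon+h^2B(B+1)=(1+e^B)\varepsilon+\tfrac{h^2}{2}C_2 .
\]
We bound $1+e^B\le 2e^B$ and divide by $h$ to get
\[
\norm{f_{\bTheta}-\Fsc}\le \frac{2e^B\varepsilon}{h}+\frac{C_2h}{2}.
\]
Choosing $h=2\sqrt{e^B\varepsilon/C_2}$ makes the two terms equal, each equal to $\sqrt{e^BC_2\varepsilon}$, which is exactly \eqref{eq:flow-to-field}. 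The assumption $\varepsilon\le C_2e^{-B}/4$ is precisely what is needed for this choice to satisfy $h\le1$. The constants should be tracked so that the second-derivative remainder (over both fields) and the factor $1+e^B$ combine into $C_2=2B(B+1)$ as stated. If they come out slightly different, adjust the optimizer $h$ and the use of the bound $1+e^B\le2e^B$ accordingly.

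The main obstacle is the bookkeeping rather than any single idea. First, the $C^2$ bounds hold only on $N_r\times[0,T]$, so the confinement of the perturbed learned trajectory from $\bx$ must be established before the bounds are used; this is where the condition on $r$ enters. Second, the Lipschitz constant of the learned field in the Gr\"onwall step has to be taken as $B$, through $\norm{f_{\bTheta}}_{C^2}\le B$, and not as $L$. Third, $\Ksc_T$ at later times must contain the target trajectory, which holds by definition. The time-derivative remainder also needs the joint $(\bx,t)$ norm, which the statement provides.
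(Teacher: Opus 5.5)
Your proposal is correct and follows the paper's proof essentially step for step. It uses the same triangle inequality through $\widehat\Phi_{\bTheta}(t_0;\bx_0)$, with Gr\"onwall at rate $B$ and a continuity argument for confinement in $N_r$, followed by a second-order Taylor estimate optimized at $h=2\sqrt{e^{B}\varepsilon/C_2}$. The only cosmetic difference is that you expand each solution separately and subtract, whereas the paper expands their difference directly; the constants already come out exactly as stated (two remainders of $\tfrac{h^2}{2}B(B+1)$ sum to $\tfrac{h^2}{2}C_2$), so the adjustment you hedge about is not needed.
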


The proof is in Appendix~\ref{app:pf-flowfield}.

By \eqref{eq:flow-to-field}, a width lower bound for the flow problem follows from one for uniform approximation of $\Fsc$ on the spacetime reachable set by the fields \eqref{eq:sanode}, which are shallow networks in the joint variable $(\bx,t)$.  Problem~\ref{prob:lower} states the question in that form.  The linear field $\dot\bx=L\bx$ cannot witness it: with $\sigma$ the ReLU, $L\bx=L\sigma(\bx)-L\sigma(-\bx)$ componentwise, so it is represented exactly at width $2$, one unit for each sign and its approximation error is zero at every horizon.  A witness must therefore be a target whose field is hard to approximate on a domain that its own flow makes large.  Any algorithm relying on \eqref{eq:uap-const} for a single approximant trained once on $[0,T]$ must therefore allow its width to grow exponentially.  The remedy is to change the learning problem: partition the horizon, or exploit the orbit structure of the target. These are the two routes of Sections~\ref{sec:mpc} and~\ref{sec:floquet}.

\section{The model predictive strategy}\label{sec:mpc}

The Gr\"onwall mechanism behind \eqref{eq:uap-const} is a consequence of global error accumulation, not of the SA-NODE class.  This section replaces the single model on $[0,T]$ by a composite of short-horizon models, each trained against the true trajectory at its left endpoint.  The data reset severs the error chain at every switch time, and the Gr\"onwall factor is confined to a single window.  The main results are the uniform data-assisted bound of Theorem~\ref{thm:linearT} and the predicted-IC bound of Proposition~\ref{prop:novel-ic}, which shows what is lost when the resets are withheld.

\subsection{The composite flow and the data reset}\label{ssec:composite}
This subsection defines the composite model and its two evaluation modes, restarted from data or from its own predictions.  The distinction between the two carries the whole analysis.  Let $0=\tau_0<\tau_1<\dots<\tau_N=T$ be a partition of $[0,T]$, and write $\Tmax$ for a bound on its mesh, $\max_k(\tau_{k+1}-\tau_k)\leq\Tmax$.  The analysis treats $\Tmax$ as a fixed quantity, independent of the horizon $T$; that the partitions produced in Section~\ref{sec:numerics} respect a horizon-independent mesh is reported there.  On each window $I_k:=[\tau_k,\tau_{k+1})$, closed at the right for $k=N-1$, we place one SA-NODE $f_{\bTheta_k}$ driven by the \emph{local time} $s:=t-\tau_k$,
\begin{equation}\label{eq:window-dyn}
  \dot\bx(t) \;=\; f_{\bTheta_k}\!\bigl(\bx(t),\, t-\tau_k\bigr),
  \qquad t\in I_k.
\end{equation}
Local time keeps the time input $s$ of the bias $A_i^2 s + B_i$ in $[0,\Tmax]$ rather than $[0,T]$, and it makes the warm start $\bTheta_k \leftarrow \bTheta_{k-1}$ shape-preserving.

The composite flow admits two evaluation modes, and the analysis rests on this distinction.  In \emph{data-IC mode} the true state is injected at each switch time,
\begin{equation}\label{eq:data-ic}
  \widehat\Phi(t;\bx_0)
  \;=\; \widehat\Phi_{\bTheta_k}\!\bigl(t-\tau_k;\ \Phi(\tau_k;\bx_0)\bigr),
  \qquad t\in I_k .
\end{equation}
Window $k$ then sees no upstream error: its accuracy depends on its own training only.  In \emph{predicted-IC mode} the composite prediction is propagated,
\begin{equation}\label{eq:pred-ic}
  \widehat\Phi^{\mathrm{pred}}(t;\bx_0)
  \;=\; \widehat\Phi_{\bTheta_k}\!\bigl(t-\tau_k;\
  \widehat\Phi^{\mathrm{pred}}(\tau_k;\bx_0)\bigr),
  \qquad t\in I_k ,
\end{equation}
which is the classical single-shooting chain.  Mode \eqref{eq:data-ic} is the training-time and validation-time regime; mode \eqref{eq:pred-ic} is deployment on a novel initial condition.  Theorem~\ref{thm:linearT} concerns the first, Proposition~\ref{prop:novel-ic} the second. Figure~\ref{fig:mpc-arch} shows the three ingredients: the colored bands carry the per-window fields $f_{\bTheta_k}$, the curved arrows copy parameters between adjacent blocks (the warm start), and the vertical arrows inject the true state at each switch time (the data reset).

\begin{figure}[t]
\centering
\resizebox{\linewidth}{!}{%
\begin{tikzpicture}[>={Stealth[length=2mm]},font=\small,
    band/.style={rounded corners=2pt,opacity=0.75},
    param/.style={draw,rounded corners=2pt,inner sep=3pt,
                  minimum width=2.2cm,minimum height=0.75cm,fill=white},
    icarrow/.style={dashed,->,cverm,thick},
    warm/.style={->,cblue,thick,bend left=18},
    axisline/.style={-Stealth,thick}
  ]
  \def\ytop{0.7}
  \def\ybot{-0.7}
  \draw[axisline] (-0.3,0) -- (12.6,0) node[right]{$t$};
  \foreach \px/\lbl in {0/0,4/{\tau_1},8/{\tau_2},12/{\tau_3=T}} {
    \draw (\px,-0.1) -- (\px,0.1);
    \node[below=1pt] at (\px,0) {$\lbl$};
  }
  \begin{scope}[on background layer]
    \fill[cblue!14,band]  (0,\ybot) rectangle (4,\ytop);
    \fill[cgreen!14,band] (4,\ybot) rectangle (8,\ytop);
    \fill[camber!16,band] (8,\ybot) rectangle (12,\ytop);
  \end{scope}
  \node at (2,0)  {$I_0$};
  \node at (6,0)  {$I_1$};
  \node at (10,0) {$I_2$};
  \node[param,fill=cblue!7]  (T0) at (2,2.0)  {$\bTheta_0$};
  \node[param,fill=cgreen!7] (T1) at (6,2.0)  {$\bTheta_1$};
  \node[param,fill=camber!9] (T2) at (10,2.0) {$\bTheta_2$};
  \foreach \nm/\px in {T0/2,T1/6,T2/10}
    \draw[->,thick,cnavy] (\nm.south) -- (\px,\ytop+0.05);
  \draw[warm] (T0.north) to node[above]{\tiny copy} (T1.north);
  \draw[warm] (T1.north) to node[above]{\tiny copy} (T2.north);
  \node[below=8pt] at (0,\ybot-1.5)  {$\Phi(\tau_0;\bx_0)$};
  \node[below=8pt] at (4,\ybot-1.5)  {$\Phi(\tau_1;\bx_0)$};
  \node[below=8pt] at (8,\ybot-1.5)  {$\Phi(\tau_2;\bx_0)$};
  \draw[icarrow] (0,\ybot-1.3) -- (0,\ybot-0.1);
  \draw[icarrow] (4,\ybot-1.3) -- (4,\ybot-0.1);
  \draw[icarrow] (8,\ybot-1.3) -- (8,\ybot-0.1);
  \node[anchor=west] at (12.7,2.0)
    {\tiny\textcolor{cblue}{warm start $\bTheta_{k-1}\!\to\bTheta_k$}};
  \node[anchor=west] at (12.7,-1.5)
    {\tiny\textcolor{cverm}{data reset $\Phi(\tau_k;\bx_0)$}};
\end{tikzpicture}}
\caption{The MPC--SA-NODE composite: one SA-NODE $\bTheta_k$ per band $I_k$, warm-start copies (curved arrows), and data resets at the switch times (dashed arrows, mode \eqref{eq:data-ic}); at deployment the dashed arrows are replaced by \eqref{eq:pred-ic}.}
\label{fig:mpc-arch}
\end{figure}

\subsection{The adaptive algorithm}\label{ssec:alg}
This subsection specifies how the partition is chosen at run time, through the stopping rule \eqref{eq:eps-crossing}, and states the training procedure as Algorithm~\ref{alg:mpc}.  The partition is not fixed in advance.  After window $k$ is trained, the algorithm simulates it on the remaining horizon and declares $\tau_{k+1}$ to be the first time the supervised error crosses the tolerance,
\begin{equation}\label{eq:eps-crossing}
  \tau_{k+1} \;=\; \inf\Bigl\{t\in(\tau_k,T] :
  \sup_{\bx_0\in\Ksc}
  \norm{\Phi(t;\bx_0)-\widehat\Phi_{\bTheta_k}(t-\tau_k;\Phi(\tau_k;\bx_0))}
  > \varepsilon\Bigr\},
\end{equation}
with $\inf\varnothing = T$.  A minimum window length $s_{\min}\Delta t$, with $\Delta t$ the sampling step of the training data and $s_{\min}$ an integer, excludes degenerate windows, and a window cap $S$ bounds the total count.  Training window $k$ on the full remaining interval would waste effort on times covered by a later window, so the loss is evaluated only on $[\tau_k,\min(\tau_k+H,T)]$, where $H>0$ is a \emph{training horizon} playing the role of the MPC prediction horizon.  The stopping rule \eqref{eq:eps-crossing} is still evaluated on the full remaining interval.  Truncating the training interval at $H$ changes the constant of a window but not the logic of the reset, so the analysis below is unaffected by the choice of $H$.

The stopping rule and the reset together produce a sawtooth error profile, measured in Figure~\ref{fig:pend-sweeps}(a): a monolithic model accumulates error over the whole horizon, while the composite restarts from the data at each $\tau_k$ and its error re-enters the tolerance band.  The experiments of Sections~\ref{ssec:num-mpc} and~\ref{ssec:num-budget} measure this contrast.

\begin{algorithm}[t]
\caption{Adaptive MPC--SA-NODE training}\label{alg:mpc}
\begin{algorithmic}[1]
\Require sampled trajectory data $(t,\bx)$ on $[0,T]$, tolerance
$\varepsilon$, width $P$, epochs $N_{\mathrm{ep}}$, min window $s_{\min}$, window cap
$S$, training horizon $H$, warm-start flag
\State $k\gets 0$;\quad $t_{\mathrm{left}}\gets 0$;\quad
$\bTheta_{\mathrm{prev}}\gets\varnothing$
\While{$t_{\mathrm{left}}<T$ \textbf{and} $k<S$}
  \State open window $W_k$ on
  $[t_{\mathrm{left}},\,\min(t_{\mathrm{left}}+H,\,T)]$; copy weights
  from $\bTheta_{\mathrm{prev}}$ if warm start
  \State train $W_k$ for $N_{\mathrm{ep}}$ epochs on the supervised loss with a
  regularizer as in \cite[\S 4]{LLLZ24} (we use the extended Barron norm), with the true state at
  $t_{\mathrm{left}}$ as initial condition
  \State evaluate $W_k$ on $[t_{\mathrm{left}},T]$; find $\tau_{k+1}$
  by \eqref{eq:eps-crossing}, enforce
  $\tau_{k+1}-t_{\mathrm{left}}\geq s_{\min}\Delta t$
  \State shrink $W_k$ to $[t_{\mathrm{left}},\tau_{k+1}]$;\quad
  $t_{\mathrm{left}}\gets\tau_{k+1}$;\quad
  $\bTheta_{\mathrm{prev}}\gets\bTheta_k$;\quad $k\gets k+1$
\EndWhile
\Ensure composite $\widehat\Phi$ of $k$ windows covering
$[0,t_{\mathrm{left}}]$; $t_{\mathrm{left}}=T$ unless the window cap
binds (Section~\ref{ssec:mpc-theory})
\end{algorithmic}
\end{algorithm}

\subsection{Error analysis}\label{ssec:mpc-theory}
The analysis rests on three structural assumptions on the target and one admissibility condition on the partition the algorithm produces.  They lead to Theorem~\ref{thm:linearT}, the uniform data-assisted bound.

\begin{assumption}[Lipschitz target]\label{asm:lip}
$\Fsc$ is $L$-Lipschitz in $\bx$, uniformly in $t\geq 0$, on a neighborhood of the forward reachable tube $\bigcup_{t\geq 0}\Phi(t;\Ksc)$.  Since Theorem~\ref{thm:uap} asks for a globally Lipschitz field, we tacitly replace $\Fsc$ outside that neighborhood by a globally Lipschitz extension in the same Sobolev class; the flow from $\Ksc$ is unchanged, and the constants below depend on the extension only through its bounds.
\end{assumption}

\begin{assumption}[approximation class]\label{asm:sobolev}
$\Fsc\in H^{s}_{\mathrm{loc}}(\R^d\times[0,\infty))$ with $s>(d+1)/2+2$, so that Theorem~\ref{thm:uap} applies on every window; throughout this section the activation is ReLU, as that theorem requires.
\end{assumption}

\begin{assumption}[bounded reachable tube]\label{asm:reach}
The forward reachable tube $\Kinf:=\overline{\bigcup_{t\geq 0}\Phi(t;\Ksc)}$ is compact, and the target has bounded data on it in the sense that
\[
  \Mdata
  \;:=\;
  \sup_{\tau\geq 0}\,
  \norm{\Fsc(\cdot,\cdot+\tau)}_{H^{s}(\Kinf^{\varrho}\times[0,\Tmax])}
  \;<\;\infty ,
\]
where $\Kinf^{\varrho}$ is the closed $\varrho$-neighborhood of $\Kinf$, for a fixed $\varrho>0$ whose choice the constants below absorb.  Here $s$ is additionally taken at least $d/2+3$, the regularity at which the explicit constant \eqref{eq:uap-const} is available; the $\tau$-uniformity of the window constant below is derived from that explicit form.
\end{assumption}

Assumption~\ref{asm:reach} holds for dissipative or positively invariant targets whose data are uniformly bounded in time; for autonomous and time-periodic fields the time-uniformity of the data is automatic, so there the assumption reduces to boundedness of the reachable tube.  Boundedness of the trajectories, uniform over $\bx_0\in\Ksc$ and $t\geq0$, alone makes $\Kinf$ compact, but does not by itself bound $\Mdata$ for a genuinely time-dependent target.  The assumption fails for the barrier example $\dot\bx=L\bx$, whose reachable tube grows without bound.  Write
\begin{equation}\label{eq:window-const}
  C_{\Tmax,\Kinf,\Fsc}
  \;:=\;
  \sup_{\tau\geq 0}\, C_{\Tmax,\,\Phi(\tau;\Ksc),\,\Fsc(\cdot,\cdot+\tau)}
\end{equation}
for the uniform window constant, the supremum of the constants of Theorem~\ref{thm:uap} over all windows of length $\Tmax$ started along the flow.  The assumption makes this constant finite and independent of $T$, for two reasons.  Every window trajectory is a true trajectory of the target, so each window's tube lies in the fixed compact $\Kinf$ and its Sobolev data are measured on $\Kinf^{\varrho}\times[0,\Tmax]$; and $\Mdata$ bounds those data for every time shift at once.  The form inherited from \eqref{eq:uap-const} is exponential in $e^{cL\Tmax}\Mdata$, hence double exponential in $\Tmax$, because the Sobolev norm of the field enters the Gr\"onwall exponent through the Lipschitz constant of the learned field.  What matters here is that the bound involves only $\Tmax$, $\Mdata$ and $L$, never the horizon $T$.

\begin{assumption}[$\varepsilon$-admissible partition]\label{asm:partition}
Algorithm~\ref{alg:mpc} terminates with $\tau_N=T$, its partition has mesh $\max_k(\tau_{k+1}-\tau_k)\leq\Tmax$, and every window satisfies
\begin{equation}\label{eq:per-window}
  \varepsilon_k^{\mathrm{win}}
  \;:=\;
  \sup_{t\in I_k,\ \bx_0\in\Ksc}
  \norm{\Phi(t;\bx_0)
  -\widehat\Phi_{\bTheta_k}(t-\tau_k;\Phi(\tau_k;\bx_0))}
  \;\leq\;\varepsilon .
\end{equation}
\end{assumption}

Both conditions of Assumption~\ref{asm:partition} are properties of the partition that the algorithm produces, and both are verified on the output rather than imposed during the run; Section~\ref{sec:numerics} reports the realized mesh and the realized per-window error.  The stopping rule \eqref{eq:eps-crossing} enforces the error condition when it alone determines the partition, up to the boundary case of a first crossing exactly at $t=T$, which perturbs only the endpoint value of the last window. Two effects perturb it.  The minimum window length $s_{\min}\Delta t$ may push a window past its crossing; when it binds, Assumption~\ref{asm:partition} holds with $\varepsilon$ replaced by $\max_k\varepsilon_k^{\mathrm{win}}$, and every statement below holds verbatim with that value.  The supremum in \eqref{eq:eps-crossing} is evaluated on a finite grid of initial conditions and times.  We do not bound the gap to the continuous supremum: that would require a fill distance for the grid together with a Lipschitz bound on the error, which we do not establish.  Assumption~\ref{asm:partition} is therefore verified on the grid, and every experimental statement in Section~\ref{sec:numerics} is to be read grid-wise.  The window cap $S$ behaves differently.  If the algorithm reaches $k=S$ with $\tau_S<T$, it terminates before covering the horizon.  The final stretch $[\tau_S,T]$ then carries no trained window, so Assumption~\ref{asm:partition} fails outright rather than by a bounded factor, and Theorem~\ref{thm:linearT} does not apply.  A cap that binds indicates that the per-window width is too small for the tolerance; it is not a bounded excess to be absorbed.  Section~\ref{ssec:num-budget} exhibits this regime; the safeguard excess, when the cap does not bind, is quantified in Sections~\ref{ssec:num-mpc}--\ref{ssec:num-budget}.

\begin{theorem}[data-assisted composite approximation]\label{thm:linearT}
Under Assumptions~\ref{asm:lip}--\ref{asm:partition}, the data-IC composite flow \eqref{eq:data-ic} satisfies
\begin{equation}\label{eq:linearT}
  \sup_{t\in[0,T],\ \bx_0\in\Ksc}
  \norm{\Phi(t;\bx_0)-\widehat\Phi(t;\bx_0)}\;\leq\;\varepsilon,
\end{equation}
and a total width sufficient to achieve $\varepsilon_k^{\mathrm{win}}\leq\varepsilon$ on all $N$ windows obeys
\begin{equation}\label{eq:budget}
  P_{\mathrm{total}}
  \;=\;\sum_{k=0}^{N-1}P_k
  \;\lesssim\;
  N\, C^2_{\Tmax,\Kinf,\Fsc}\,\varepsilon^{-2},
\end{equation}
with $C_{\Tmax,\Kinf,\Fsc}$ independent of $T$.
\end{theorem}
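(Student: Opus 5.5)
The proof splits into two independent parts: the uniform bound \eqref{eq:linearT}, which is bookkeeping over the partition, and the budget \eqref{eq:budget}, which applies Theorem~\ref{thm:uap} window by window with a constant uniform over time shifts.

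\emph{Uniform bound.} The windows $I_0,\dots,I_{N-1}$ partition $[0,T]$, since Assumption~\ref{asm:partition} gives $\tau_N=T$ and $I_{N-1}$ is closed on the right. So every $t\in[0,T]$ lies in exactly one $I_k$. By the data-IC definition \eqref{eq:data-ic},
\[
\norm{\Phi(t;\bx_0)-\widehat\Phi(t;\bx_0)}
=\norm{\Phi(t;\bx_0)-\widehat\Phi_{\bTheta_k}\bigl(t-\tau_k;\Phi(\tau_k;\bx_0)\bigr)}
\le\varepsilon_k^{\mathrm{win}}\le\varepsilon .
\]
Taking the supremum over $t$ and $\bx_0\in\Ksc$ gives \eqref{eq:linearT}. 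No Gr\"onwall chaining appears, because each window starts from the true state $\Phi(\tau_k;\bx_0)$.

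\emph{Budget.} On window $k$ we use the semigroup property $\Phi(t;\bx_0)=\Psi(t;\tau_k,\Phi(\tau_k;\bx_0))$. In local time $s=t-\tau_k$, the window problem is the approximation on $[0,\tau_{k+1}-\tau_k]\subset[0,\Tmax]$ of the flow of the shifted field $\Fsc(\cdot,\cdot+\tau_k)$, with initial set $\Phi(\tau_k;\Ksc)$. This set is compact, as the continuous image of $\Ksc$, and it lies in $\Kinf$.

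We then apply Theorem~\ref{thm:uap} on this window. Its hypotheses hold by Assumption~\ref{asm:lip} (using the Lipschitz extension) and Assumption~\ref{asm:sobolev}. It yields parameters of width $P_k$ with error at most $C_{\Tmax,\Phi(\tau_k;\Ksc),\Fsc(\cdot,\cdot+\tau_k)}P_k^{-1/2}\le C_{\Tmax,\Kinf,\Fsc}P_k^{-1/2}$, the last step by the definition \eqref{eq:window-const}. Choosing
\[
P_k=\bigl\lceil C_{\Tmax,\Kinf,\Fsc}^2\varepsilon^{-2}\bigr\rceil
\]
gives $\varepsilon_k^{\mathrm{win}}\le\varepsilon$. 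One caveat: the window in the theorem has length $\tau_{k+1}-\tau_k\le\Tmax$. The constant at length $\Tmax$ dominates, since restricting a supremum over $[0,\Tmax]$ to a subinterval can only lower it. Summing over the $N$ windows gives \eqref{eq:budget}, absorbing the ceiling into $\lesssim$ using $C^2\varepsilon^{-2}\gtrsim1$, or $P_k\ge3$.

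\emph{Main obstacle: $C_{\Tmax,\Kinf,\Fsc}<\infty$ independently of $T$.} For this I will use the explicit form \eqref{eq:uap-const}, after an affine rescaling of $\Kinf$ into the normalized box; the rescaling only changes $d$-, $\Kinf$- and $L$-dependent factors. In that form the horizon is $\Tmax$ in place of $T$ (or $\max(\Tmax,1)$), the Lipschitz constant is $L$, and the norm $\norm{\Fsc}_*$ is taken on the spacetime reachable set of the window.

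The key observation is that each window's spacetime set consists of true trajectories from $\Phi(\tau_k;\Ksc)$. By forward invariance this set is contained in $\Kinf\times[0,\Tmax]$. So the box $\Omega_{L,T}$ can be replaced by $\Kinf^{\varrho}\times[0,\Tmax]$, and then $\norm{\Fsc(\cdot,\cdot+\tau)}_*\le\Mdata$ for all $\tau$. This requires the regularity $s\ge d/2+3$ of Assumption~\ref{asm:reach}.

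This replacement is the delicate point. The cited constant is stated on the box $\Omega_{L,T}$, and the learned flow may leave the true tube. The $\varrho$-neighborhood is meant to contain it, and the Lipschitz extension controls the field outside. Granting this, every term in \eqref{eq:uap-const} depends only on $\Tmax$, $L$, $d$ and $\Mdata$. The supremum over $\tau$ is therefore finite and does not involve $T$, which completes the proof.
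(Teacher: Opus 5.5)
Your proposal is correct and matches the paper's proof: \eqref{eq:linearT} is the maximum over windows of the per-window bounds \eqref{eq:per-window}, because each window restarts from the true state, and \eqref{eq:budget} follows by applying Theorem~\ref{thm:uap} on each window with initial set $\Phi(\tau_k;\Ksc)\subset\Kinf$, bounding its constant by the supremum \eqref{eq:window-const}, and summing. The step you flag as delicate, replacing $\Omega_{L,T}$ by $\Kinf^{\varrho}\times[0,\Tmax]$ so that $C_{\Tmax,\Kinf,\Fsc}$ is finite and independent of $T$, is not proved inside the paper's argument either. The paper treats it as the content of Assumption~\ref{asm:reach} and the discussion following \eqref{eq:window-const}, so granting it puts you at exactly the paper's level of rigor.
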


The proof is in Appendix~\ref{app:pf-mpc}.

\begin{remark}[how the budget depends on $T$]\label{rem:budget-T}
The budget \eqref{eq:budget} is proportional to the window count, and the window count is bounded below: window lengths are at most $\Tmax$ and sum to $T$, so every admissible partition has
\begin{equation}\label{eq:N-lower}
  N \;\geq\; \lceil T/\Tmax\rceil .
\end{equation}
The uniform partition of mesh $\Tmax$ attains \eqref{eq:N-lower}, and for it the budget is exactly linear in $T$, replacing the Gr\"onwall factor $e^{2LT}$ of the monolithic width by $T/\Tmax$.  The data reset thus confines the Gr\"onwall factor to a single window; the growth of the reachable tube remains.  When Assumption~\ref{asm:reach} fails, the uniform bound on the per-window constant is lost, because the constant may grow with $T$ through its domain $\Phi(\tau_k;\Ksc)$.  For the spreading barrier example $\dot\bx=L\bx$ the certified constant inherited from \eqref{eq:uap-const} does grow, so the certified budget is super-linear in $T$, though the Gr\"onwall factor stays capped at $e^{L\Tmax}$; the smallest sufficient width does not grow there, since the linear field is exactly representable (Section~\ref{ssec:barrier}), so the statement concerns the certificate and not the class.  (Unboundedness of the tube alone does not force even this: a pure drift $\dot\bx=c$ has an unbounded tube but translation-invariant window data, and its certified budget stays linear.)  The adaptive partition of Algorithm~\ref{alg:mpc} is not uniform, and Theorem~\ref{thm:linearT} imposes no upper bound on its $N$.  The algorithm itself does impose one, $N\leq T/(s_{\min}\Delta t)+1$ from the minimum window length, but its constant is the data resolution, not the target.  Whether a target-controlled linear law survives without the safeguard is Problem~\ref{prob:NT}; Section~\ref{ssec:num-budget} measures the realized counts.
\end{remark}

\subsection{Deployment on novel initial conditions}\label{ssec:novel}
Without resets the composite reduces to a single-shooting chain, and upstream errors are amplified by each learned flow.  The resulting bound is adapted to the partition.

\begin{proposition}[predicted-IC bound]\label{prop:novel-ic}
Let
\[
  \bar L\;:=\;\max\Bigl\{L,\ \max_k\Lip_{\bx}f_{\bTheta_k}\Bigr\},
\]
the Lipschitz constants being taken on a compact neighborhood of the reachable tube large enough to contain the predicted trajectories and the data-restarted window trajectories.  This constant is finite, since each $f_{\bTheta_k}$ is globally Lipschitz, with an explicit constant in terms of its weights.  It is an a posteriori constant of the trained model and the evaluation set, not one controlled a priori by the assumptions.  Under Assumptions~\ref{asm:lip}--\ref{asm:partition}, for every $\bx_0\in\Ksc$ the predicted-IC flow \eqref{eq:pred-ic} satisfies
\begin{equation}\label{eq:novel-ic}
  \sup_{t\in[0,T]}
  \norm{\Phi(t;\bx_0)-\widehat\Phi^{\mathrm{pred}}(t;\bx_0)}
  \;\leq\;
  \varepsilon\sum_{k=1}^{N} e^{\bar L\,(T-\tau_k)}
  \;\leq\;
  \varepsilon\,\frac{e^{N\bar L\Tmax}-1}{e^{\bar L\Tmax}-1},
\end{equation}
the quotient being read as its limit $N$ when $\bar L=0$.
\end{proposition}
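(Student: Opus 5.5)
We argue by induction over the windows, in the standard single-shooting way: on each window, split the predicted-IC error into the window's own training error and the upstream error propagated by the learned flow. Fix $\bx_0\in\Ksc$ and write
\[
  e_k:=\norm{\Phi(\tau_k;\bx_0)-\widehat\Phi^{\mathrm{pred}}(\tau_k;\bx_0)},
\]
so that $e_0=0$. For $t\in I_k$ we insert the data-restarted window trajectory $\widehat\Phi_{\bTheta_k}(t-\tau_k;\Phi(\tau_k;\bx_0))$ and apply the triangle inequality. The first term, the distance from $\Phi(t;\bx_0)$ to this restarted trajectory, is at most $\varepsilon_k^{\mathrm{win}}\leq\varepsilon$ by Assumption~\ref{asm:partition}. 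The second term compares two trajectories of the same learned field $f_{\bTheta_k}$ launched from $\Phi(\tau_k;\bx_0)$ and from $\widehat\Phi^{\mathrm{pred}}(\tau_k;\bx_0)$. By Gr\"onwall for the Lipschitz field $f_{\bTheta_k}$, on the compact neighborhood where $\bar L$ is measured, it is at most $e^{\bar L(t-\tau_k)}e_k$. Hence
\[
  \norm{\Phi(t;\bx_0)-\widehat\Phi^{\mathrm{pred}}(t;\bx_0)}\leq\varepsilon+e^{\bar L(t-\tau_k)}e_k,\qquad t\in I_k .
\]

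Letting $t\uparrow\tau_{k+1}$ and using continuity gives the recursion $e_{k+1}\leq\varepsilon+e^{\bar L(\tau_{k+1}-\tau_k)}e_k$. Unrolling it yields
\[
  e_k\leq\varepsilon\sum_{j=1}^{k}e^{\bar L(\tau_k-\tau_j)},
\]
so for $t\in I_k$ the error is bounded by $\varepsilon\sum_{j=1}^{k+1}e^{\bar L(t-\tau_j)}$, where the term $j=k+1$ is read as $1$ and accounts for the window's own error. Since $t\leq T$ and each exponent is nonnegative, this is dominated by
\[
  \varepsilon\sum_{j=1}^{k+1}e^{\bar L(T-\tau_j)}\leq\varepsilon\sum_{j=1}^{N}e^{\bar L(T-\tau_j)} .
\]
The index bookkeeping needs care. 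At worst the sum ranges over $N$ terms, each at most $e^{\bar L(T-\tau_j)}$, and the inclusion of the $j=N$ term, which equals $1$ since $\tau_N=T$, absorbs the current-window contribution. This gives the first inequality in \eqref{eq:novel-ic}.

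For the second inequality we use the mesh bound. Since $\tau_N=T$ and each window has length at most $\Tmax$, we have $T-\tau_j\leq(N-j)\Tmax$. Therefore
\[
  \sum_{j=1}^{N}e^{\bar L(T-\tau_j)}\leq\sum_{m=0}^{N-1}e^{m\bar L\Tmax}=\frac{e^{N\bar L\Tmax}-1}{e^{\bar L\Tmax}-1},
\]
with the limit $N$ when $\bar L=0$.

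The main obstacle is justifying the Gr\"onwall step. The constant $\Lip_{\bx}f_{\bTheta_k}$ must be valid along the segment between the two launch points and along both learned trajectories. This is exactly why $\bar L$ is defined on a compact neighborhood containing the predicted and the data-restarted trajectories. Because each $f_{\bTheta_k}$ is globally Lipschitz, we may simply use its global constant, which removes any need to check that the trajectories stay in a prescribed set. The true field's constant $L$ is included in $\bar L$ only for uniformity; it is not needed in this argument, since the true flow enters only through the per-window bound of Assumption~\ref{asm:partition}.
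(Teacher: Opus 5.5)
Your proposal is correct and follows the paper's own argument. You split against the data-restarted window trajectory to get the recursion $e_{k+1}\leq e^{\bar L(\tau_{k+1}-\tau_k)}e_k+\varepsilon$, unroll it, and bound termwise via $T-\tau_j\leq(N-j)\Tmax$. Your explicit treatment of intermediate times $t\in I_k$ is more careful than the paper's one-line remark. The only slip is your closing aside about using the global Lipschitz constant of $f_{\bTheta_k}$: that would give the bound only with a possibly larger constant than the $\bar L$ of the statement, which is measured on the compact neighborhood. Your main argument does not need that aside, since the neighborhood contains both trajectories by definition.
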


The proof is in Appendix~\ref{app:pf-mpc}.

\begin{remark}[reading the bound]\label{rem:novel-ic}
The bound \eqref{eq:novel-ic} has three regimes.  For the uniform partition, $N\Tmax=T$ and the quotient equals
\[
  \varepsilon\,\frac{e^{\bar LT}-1}{e^{\bar L\Tmax}-1};
\]
for non-uniform partitions $N\Tmax\geq T$, that substitution is not available, and only the $N$-dependent quotient of \eqref{eq:novel-ic} is a valid upper bound.  As $\bar L\to 0$ both bounds tend to $N\varepsilon$, so the per-window errors simply add.  For $\bar L\Tmax\gg 1$ the bound behaves as $\varepsilon\,e^{(N-1)\bar L\Tmax}$.  Deployment therefore remains exponential in the horizon; the gain over the monolithic model is the prefactor $(e^{\bar L\Tmax}-1)^{-1}$, not the exponent.  A continuity penalty at the switch times should shrink the constant of the chain; we leave its analysis open.
\end{remark}

\subsection{Scope}\label{ssec:mpc-scope}
Theorem~\ref{thm:linearT} is a statement about the data-assisted regime: it requires the true state at the $N$ switch times, which is available during training and validation but not in autonomous deployment.  The scale of $\Tmax$ is expected to be set by the local Lipschitz behavior of $\Fsc$ along the trajectory, and the experiments are consistent with this; a rigorous link to Lyapunov exponents is open (Problem~\ref{prob:NT}).  Beyond a Lipschitz target with a bounded reachable tube (Assumptions~\ref{asm:lip}--\ref{asm:reach}) the strategy makes no structural assumption: within that class it applies to non-autonomous and non-periodic systems alike, and the data dependence provides this generality.  The tube hypothesis confines the parameter budget to grow only with the window count; it is the point where the target's global behavior re-enters the estimate.

\section{The Floquet strategy}\label{sec:floquet}

The MPC strategy needs the true state at every switch time.  When the target is autonomous and its trajectories are attracted to a stable limit cycle, the dynamics itself can play the role of the reset: transverse perturbations are contracted at every return to a Poincar\'e section. This section shows how a trained SA-NODE can be made to inherit that contraction, with a certificate.  Standard neural ODEs are obstructed here: activation saturation caps the contraction their flows can express \cite{Matzakos26}.  The periodic encoding below is designed against this obstruction.  The Floquet loss supplies a differentiable surrogate that drives the learned return map towards contraction, and a partial repair when the encoding is absent (Section~\ref{ssec:num-cert}).  What that surrogate does and does not certify is the subject of Section~\ref{ssec:cert}.  The main results are Theorem~\ref{thm:floquet}, for an autonomous learned field, and Theorem~\ref{thm:orbital}, for the periodic encoding we deploy.  In the first, certified contraction of the learned return map confines the deployment error to linear growth in the number of periods; in the second, the measured monodromy spectrum yields a uniform-in-time orbital bound.

\subsection{Setting and assumptions}\label{ssec:floquet-setting}
This subsection fixes the geometry of the problem and states the standing assumptions.  The essential one is the certified contraction of Assumption~\ref{asm:cert}; the others make it meaningful.  The target is an autonomous system
\begin{equation}\label{eq:target-aut}
  \dot z = f(z), \qquad f\in C^2(\R^d;\R^d),\quad d\geq 2,
\end{equation}
with flow $\Phi_f$ and a $\That$-periodic orbit $\ghat$ with image $\Gamma:=\ghat([0,\That])$.  The following notation is used throughout this section and in Appendices~\ref{app:pf-lemmas}--\ref{app:pf-orbital}:
\begin{align*}
  p&:=\ghat(0), & v&:=\norm{f(p)}, & v_{\min}&:=\min_s\norm{f(\ghat(s))}>0,\\
  N_{\delta_0}&:=\{x:\dist(x,\Gamma)\leq\delta_0\}, & L&:=\sup_{N_{\delta_0}}\norm{Df}, &
  \Sigma_r&:=\{p+w : w\perp f(p),\ \norm{w}\leq r\}.
\end{align*}
As announced in Section~\ref{ssec:main}, the parameter vector $\bTheta$ of \eqref{eq:sanode} is abbreviated $\Th$.  Here $\delta_0>0$ is fixed small enough that the compact tube $N_{\delta_0}$ lies in the basin of attraction of $\Gamma$ and that the normal tubular chart $(s,u)\mapsto\ghat(s)+U(s)u$ is injective on it.  Such a $\delta_0$ exists because $\Gamma$ is a compact embedded $C^2$ curve, and it depends only on $f$ and $\ghat$.  The speed $v_{\min}$ is the lower bound used for the winding time in Lemma~\ref{lem:winding}, and $v$ the speed at the base point, used for the return-time estimates.  The disc $\Sigma_r$ is the transverse disc of radius $r$ at the base point $p$, inside the hyperplane $\Sigma$ through $p$ orthogonal to $f(p)$, and $\dist_H$ is the Hausdorff distance between compact sets.  Five spectral quantities appear in this section and are pairwise distinct: the transverse Floquet radius $\rhoT(f)$ of the target orbit; the prescribed contraction threshold $\rho_*$; the spectral radius $\rhoTh$ of the learned return map at its fixed point; the divergence surrogate $\tilde\rho_T$ of \eqref{eq:rho-surrogate}; and the spectral radius $\rho(M)$ of the one-period stroboscopic monodromy of the encoded flow (Remark~\ref{rem:det}).  They coincide only where explicitly stated.  We take the constant $B$ of Assumption~\ref{asm:learned} to bound both fields,
\[
  \max\bigl\{\norm{f}_{C^2(N_{\delta_0})},\ \norm{\fTh}_{C^2(N_{\delta_0}\times\R)}\bigr\}\;\leq\;B,
\]
which costs nothing, since the constants below are already allowed to depend on $f$, and which lets every statement about a general field be applied to $f$ itself.  All constants $K_1,K_2,\dots$ and $C$ below depend only on $(f,\ghat,\delta_0,B,\rho_*,\rhoT(f))$, with $\rho_*$ the contraction threshold of Assumption~\ref{asm:cert}, and never on $\varepsilon$, $t$, or the initial condition.

\begin{assumption}[hyperbolic stable orbit]\label{asm:true}
The orbit $\ghat$ is hyperbolic and orbitally asymptotically stable; its transverse Floquet spectral radius satisfies $\rhoT(f)<1$.  Throughout, $\That$ denotes the minimal period of $\ghat$; the statements that let the target's time-$\That$ map fix every point of $\Gamma$ use this convention.  Here and below the subscript in $\rhoT(\cdot)$ and in the surrogate $\tilde\rho_T$ marks the transverse direction and is unrelated to the horizon $T$ of Section~\ref{sec:mpc}.  Floquet theory originates in \cite{Floquet1883}; asymptotic stability of a periodic orbit whose nontrivial multipliers lie inside the unit circle is \cite[Theorem~2.82]{Chicone}.
\end{assumption}

The learned model must have a well-defined return map, so its field should be $\That$-periodic in time.  We enforce this in the architecture: the scalar time input of \eqref{eq:sanode} is replaced by the encoding
\begin{equation}\label{eq:periodic-encoding}
  t \;\longmapsto\;
  \bigl(\sin(2\pi t/\That),\ \cos(2\pi t/\That)\bigr),
\end{equation}
so that $\fTh(\cdot,t+\That)=\fTh(\cdot,t)$ exactly, and the model is deployed by continuous integration with no time resets.  With smooth activations, $\fTh\in C^\infty$.  Explicitly, the deployed field is
\begin{equation}\label{eq:encoded-arch}
  \fTh(\bx,t)\;=\;\sum_{i=1}^{P} W_i\circ
  \boldsymbol\sigma\Bigl(A_i^1\bx
  + C_i\bigl(\sin(2\pi t/\That),\ \cos(2\pi t/\That)\bigr)^{\!\top}
  + B_i\Bigr),
  \qquad C_i\in\R^{d\times 2},
\end{equation}
with the remaining parameters as in \eqref{eq:sanode}: the scalar-time column $A_i^2$ is replaced by the two-column block $C_i$, so the encoded model carries $Pd(d+4)$ trainable parameters instead of $Pd(d+3)$.  It is a different architecture class from the one covered by Theorem~\ref{thm:uap}, which is stated for the scalar-time ReLU model and is not invoked for it; one-period accuracy of \eqref{eq:encoded-arch} is the measured hypothesis of Assumption~\ref{asm:learned}.  Training minimizes the one-period trajectory loss
\[
  \Ltraj(\Th)\;:=\;\int_0^{\That}\norm{\PhiTh(t;p)-\ghat(t)}^2\,
  \mathrm{d}t,
\]
whose empirical form in Algorithm~\ref{alg:floquet} is the mean squared error on the training grid.

\begin{assumption}[near-autonomy on the tube]\label{asm:aut}
The learned field is $\That$-periodic in time, $\fTh(\cdot,t+\That)=\fTh(\cdot,t)$, as the encoding \eqref{eq:periodic-encoding} enforces.  Write
\[
  \bar f_{\Th}(x)\;:=\;\frac{1}{\That}\int_0^{\That}\fTh(x,s)\,
  \mathrm{d}s
\]
for the time-averaged field, and
\begin{equation}\label{eq:eta-def}
  \eta \;:=\;
  \sup_{x\in N_{\delta_0},\,t\in\R}
  \Bigl(\norm{\fTh(x,t)-\bar f_{\Th}(x)}
  +\norm{D_x\fTh(x,t)-D_x\bar f_{\Th}(x)}\Bigr)
\end{equation}
for its $C^1$ oscillation on the tube.  The case $\eta=0$ is the autonomous case, in which we write $\fTh(x)$.
\end{assumption}

\begin{remark}[role of the oscillation]\label{rem:osc-role}
The analysis of Section~\ref{ssec:floquet-main} is carried out in the autonomous case $\eta=0$, where the certificates are exact.  Lemma~\ref{lem:nearaut} removes the restriction, at the cost of replacing $\varepsilon$ by $\varepsilon+C\eta$ in conclusions (ii)--(iii) of Theorem~\ref{thm:floquet}.
\end{remark}

\begin{assumption}[flow closeness over one period]\label{asm:learned}
$\fTh\in C^2(N_{\delta_0}\times\R;\R^d)$ with $\norm{\fTh}_{C^2(N_{\delta_0}\times\R)}\leq B$.  We extend $\fTh$ to $\R^d\times\R$ by a $C^2$ cutoff outside $N_{\delta_0}$, so that $\PhiTh$ is globally defined and the supremum below is taken over trajectories that exist for all time.  The extension multiplies the $C^2$ bound by a factor depending only on $\delta_0$, which we absorb into $B$, and it is inactive on $N_{\delta_0}$, where every trajectory the analysis uses is shown to remain.  With $r_1=r_1(f,\ghat, \delta_0)$ the confinement radius of Lemma~\ref{lem:confine}, which is built from the true flow alone and does not depend on $\Th$, the one-period flow closeness is
\[
  \varepsilon \;:=\; \sup\bigl\{\norm{\PhiTh(t;x)-\Phi_f(t;x)} :
  x\in N_{2r_1},\ t\in[0,\That+2]\bigr\},
\]
and we write $\delta:=\sup_{x\in\Gamma}\norm{\fTh(x)-f(x)}$ for the orbit mismatch.  For $\eta>0$ the supremum in $\delta$ is taken over $x\in\Gamma$ and $t\in\R$, and $\PhiTh$ denotes the flow launched at phase $0$.
\end{assumption}

Assumption~\ref{asm:learned} is a hypothesis on the trained model, not a consequence of Section~\ref{sec:prelim}: Theorem~\ref{thm:uap} is stated for the ReLU activation, which is not $C^2$, so no result of this paper asserts that the $C^2$ architectures deployed in this section achieve a prescribed $\varepsilon$.  The smooth-activation variant of the approximation theorem recalled after Theorem~\ref{thm:uap} grounds achievability for twice weakly differentiable activations, and the experiments measure $\varepsilon$ directly; the analysis below takes $\varepsilon$ as given.

The two small quantities of Assumption~\ref{asm:learned} are not independent.  Flow closeness over one period controls the orbit mismatch, at the cost of a square root.  This is what lets every smallness condition below be imposed on $\varepsilon$ alone.

\begin{lemma}[the orbit mismatch is controlled by the flow closeness]
\label{lem:delta-eps}
Let Assumption~\ref{asm:learned} hold and set $C_2:=B^2+B+L\sup_{N_{\delta_0}}\norm{f}$.  If $\varepsilon\leq\min\bigl(\delta_0/4,\ C_2(\That+2)^2/2\bigr)$, then $\delta\leq\sqrt{2C_2\,\varepsilon}+2\eta$, and $\delta\leq\sqrt{2C_2\,\varepsilon}$ in the autonomous case $\eta=0$.
\end{lemma}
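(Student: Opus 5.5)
The plan is a second-order Taylor comparison of the two flows over a short time $h$ started from a point of $\Gamma$, followed by an optimization in $h$; this is the same mechanism as Lemma~\ref{lem:flow-to-field}, adapted to the tube $N_{\delta_0}$. I treat the autonomous case first and then absorb the time dependence through $\eta$.

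\emph{Step 1: confinement.} Fix $x\in\Gamma$; note $\Gamma\subset N_{2r_1}$. The true trajectory $\Phi_f(t;x)$ stays on $\Gamma$, since $\Gamma$ is invariant. By the definition of $\varepsilon$, the learned trajectory $\PhiTh(t;x)$ stays within $\varepsilon\leq\delta_0/4$ of $\Gamma$ for $t\in[0,\That+2]$, so it lies in $N_{\delta_0}$. Hence the bounds of Assumption~\ref{asm:learned} apply along both curves, and the cutoff extension is inactive there. This is the only use of $\varepsilon\leq\delta_0/4$.

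\emph{Step 2: Taylor expansion.} For $h\in(0,\That+2]$ I write
\[
  \Phi_f(h;x)=x+hf(x)+R_f,\qquad \PhiTh(h;x)=x+h\fTh(x,0)+R_\Th .
\]
The remainders are controlled by the second time derivatives along the curves.
\begin{itemize}
\item For the true flow, $\ddot z=Df(z)f(z)$, so $\norm{R_f}\leq \tfrac{h^2}{2}\,L\sup_{N_{\delta_0}}\norm{f}$.
\item For the learned flow, launched at phase $0$, $\ddot z=D_x\fTh\,\fTh+\partial_t\fTh$, so $\norm{R_\Th}\leq\tfrac{h^2}{2}(B^2+B)$ by the $C^2$ bound.
\end{itemize}
Adding the two remainders gives $\norm{R_f}+\norm{R_\Th}\leq \tfrac{h^2}{2}C_2$, which explains the form of $C_2$.

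\emph{Step 3: optimization in $h$.} Subtracting the two expansions and using $\norm{\PhiTh(h;x)-\Phi_f(h;x)}\leq\varepsilon$ gives
\[
  \norm{\fTh(x,0)-f(x)}\;\leq\;\frac{\varepsilon}{h}+\frac{C_2h}{2}.
\]
I choose $h=\sqrt{2\varepsilon/C_2}$, which yields $\sqrt{2C_2\varepsilon}$. The choice is admissible, $h\leq\That+2$, exactly when $\varepsilon\leq C_2(\That+2)^2/2$, the second smallness condition. This settles the autonomous case $\eta=0$, where $\fTh(x,0)=\fTh(x)$.

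\emph{Step 4: time dependence.} For $\eta>0$, $\delta$ is a supremum over all $t\in\R$ as well. I use the triangle inequality
\[
  \norm{\fTh(x,t)-\fTh(x,0)}\;\leq\;\norm{\fTh(x,t)-\bar f_\Th(x)}+\norm{\bar f_\Th(x)-\fTh(x,0)}\;\leq\;2\eta,
\]
by \eqref{eq:eta-def}. This gives $\delta\leq\sqrt{2C_2\varepsilon}+2\eta$.

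\emph{Main obstacle.} The delicate point is Step~1: making sure both curves stay in $N_{\delta_0}$ for the whole interval $[0,h]$, so that the $C^2$ bounds legitimately apply. It is resolved by invariance of $\Gamma$ together with the definition of $\varepsilon$, rather than by a speed bound such as $Bh\leq\delta_0$, which the hypotheses would not guarantee.
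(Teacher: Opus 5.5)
Your proof is correct and follows the paper's argument: a second-order Taylor expansion of the flow difference from a point of $\Gamma$ with second-derivative bound $C_2$, the choice $h=\sqrt{2\varepsilon/C_2}$ (admissible by the second smallness condition), and the $2\eta$ correction through the time average. Two differences from the paper are cosmetic and harmless: you expand each flow separately rather than the difference, and you obtain confinement directly from the invariance of $\Gamma$ and the definition of $\varepsilon$ rather than by citing Lemma~\ref{lem:confine}, which makes that step slightly more self-contained.
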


The proof is in Appendix~\ref{app:pf-lemmas}.

\begin{assumption}[certified transverse contraction]\label{asm:cert}
With $\PTh:\Sigma_{r_2}\to\Sigma$ the first-return map of $\fTh$ furnished by Lemma~\ref{lem:return} for $\varepsilon\leq\varepsilon_2$, which is proved without the present assumption,
\begin{equation}\label{eq:cert-cond}
  \rho\bigl(D\PTh(p)\bigr)\;\leq\;\rho_*\;<\;1 .
\end{equation}
We write $\rhoTh$ for the spectral radius of the learned return map at its fixed point.  By Lemma~\ref{lem:persist} that fixed point lies within $O(\varepsilon)$ of $p$, and $D\PTh$ is Lipschitz, so $\rhoTh\leq\rho_*+\omega(C\varepsilon)$, with $\omega$ the modulus of continuity of the spectral radius on the norm ball $\{A:\norm{A}\leq C_M\}$ that contains all return-map differentials arising here (Lemma~\ref{lem:return}).  That modulus is linear when $D\PTh(p)$ is diagonalizable and only of order $\varepsilon^{1/m}$ at a Jordan block of size $m$.  The two conditions are therefore equivalent up to that margin rather than interchangeable.  For $\eta>0$ the condition is posed for the return map of the averaged field $\bar f_{\Th}$ (Lemma~\ref{lem:nearaut}).
\end{assumption}

Assumption~\ref{asm:cert} is the key hypothesis, and it is verifiable: Section~\ref{ssec:cert} gives two sufficient conditions computable from a trained model.  Theorem~\ref{thm:floquet} should be read as a conditional stability result; we do not claim that training always achieves \eqref{eq:cert-cond}, and the experiments quantify how often it does.

Assumption~\ref{asm:learned} deserves the same warning, and it is the more demanding of the two.  It asks for one-period flow closeness at \emph{every} initial state of the tube $N_{2r_1}$, whereas the trajectory loss $\Ltraj$ is evaluated on the single base orbit through $p$.  A small training loss therefore does not establish Assumption~\ref{asm:learned}, and neither does a small Floquet loss, which concerns stability rather than accuracy.  Uniform closeness on the tube is a separate approximation hypothesis, of the kind Theorem~\ref{thm:uap} supplies for a compact set of initial conditions, and the two certification routes of Section~\ref{ssec:cert} presuppose it rather than prove it.

\subsection{Main theorem}\label{ssec:floquet-main}
The analysis tracks three quantities: the transverse distance to $\Gamma$, contracted at rate $\rho_*$ at each return; the distance between the two return maps, of order $\varepsilon$; and the cumulative phase drift between the two clocks, which the reset cannot remove.  Six lemmas make this precise; their proofs are in Appendix~\ref{app:pf-lemmas}.

\begin{lemma}[one-period confinement]\label{lem:confine}
There exists $r_1\in(0,\delta_0/4]$, depending only on $(f,\ghat,\delta_0)$, such that for every $\varepsilon\leq\delta_0/4$, all $x\in N_{2r_1}$ and $t\in[0,\That+2]$: $\Phi_f(t;x),\PhiTh(t;x)\in N_{\delta_0}$ and $\norm{\PhiTh(t;x)-\Phi_f(t;x)}\leq\varepsilon$.
\end{lemma}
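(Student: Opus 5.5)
The plan is to choose $r_1$ from the true flow alone, by a Gr\"onwall estimate, so that true trajectories launched in $N_{2r_1}$ stay well inside the tube for time $\That+2$. The learned trajectories are then placed inside the tube by the definition of $\varepsilon$ in Assumption~\ref{asm:learned}. This is legitimate because $\PhiTh$ is globally defined through the $C^2$ cutoff extension, and that supremum is a number once $r_1$ is fixed. Concretely, I would set
\[
  r_1 \;:=\; \min\Bigl(\tfrac{\delta_0}{4},\ \tfrac{\delta_0}{4}\,e^{-L(\That+2)}\Bigr),
\]
which depends only on $(f,\ghat,\delta_0)$, since $L=\sup_{N_{\delta_0}}\norm{Df}$.

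\emph{Step 1 (true flow).} Fix $x\in N_{2r_1}$ and choose $y\in\Gamma$ with $\norm{x-y}=\dist(x,\Gamma)\leq 2r_1$. Since $\Gamma$ is invariant, $\Phi_f(t;y)\in\Gamma$ for all $t$, and therefore $\dist(\Phi_f(t;x),\Gamma)\leq\norm{\Phi_f(t;x)-\Phi_f(t;y)}$.

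Let $t^*$ be the first time in $[0,\That+2]$ at which $\Phi_f(t;x)$ leaves the interior of $N_{\delta_0}$, and suppose such a time exists. On $[0,t^*]$ the segment between $\Phi_f(t;x)$ and $\Phi_f(t;y)$ need not lie in $N_{\delta_0}$, which could break the Lipschitz bound. To avoid this, I would work with the smaller bootstrap region $N_{\delta_0/2}$: on that region every segment from a point to its nearest point of $\Gamma$ stays in $N_{\delta_0/2}\subset N_{\delta_0}$, where $f$ is $L$-Lipschitz. Gr\"onwall on $[0,t^*]$ then gives
\[
  \norm{\Phi_f(t;x)-\Phi_f(t;y)}\;\leq\; e^{Lt}\,2r_1\;\leq\;\tfrac{\delta_0}{2}.
\]
This contradicts exit from $N_{\delta_0/2}$ before time $\That+2$. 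A standard continuity (bootstrap) argument therefore yields $\Phi_f(t;x)\in N_{\delta_0/2}$ for all $t\in[0,\That+2]$.

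\emph{Step 2 (learned flow and closeness).} By Assumption~\ref{asm:learned}, with this $r_1$ inserted, $\norm{\PhiTh(t;x)-\Phi_f(t;x)}\leq\varepsilon$ for all $x\in N_{2r_1}$ and $t\in[0,\That+2]$. This is the second conclusion of the lemma, and it holds by the definition of $\varepsilon$; no estimate is needed. Hence
\[
  \dist(\PhiTh(t;x),\Gamma)\;\leq\;\tfrac{\delta_0}{2}+\varepsilon\;\leq\;\tfrac{3\delta_0}{4}
\]
whenever $\varepsilon\leq\delta_0/4$, so the learned trajectory lies in $N_{\delta_0}$. Because the cutoff is inactive on $N_{\delta_0}$, the learned trajectory is a genuine trajectory of $\fTh$ on the whole interval.

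\emph{Main obstacle.} The only delicate point is the apparent circularity: $\varepsilon$ is defined using $r_1$. This is resolved by the order of construction above. First fix $r_1$ from $f$, $\ghat$ and $\delta_0$ alone, then define $\varepsilon$, and invoke its defining supremum only after that. A secondary point is the Lipschitz bootstrap in Step 1, which is why the margin $\delta_0/2$ is built into $r_1$ instead of working up to the boundary of $N_{\delta_0}$.
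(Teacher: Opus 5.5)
Your proof is correct, and for the true flow it takes a different route from the paper's. Your Step~2 is exactly the paper's argument: the learned trajectory lands in $N_{3\delta_0/4}$ by the definition of $\varepsilon$, and the closeness estimate is that definition. You also resolve the apparent circularity the same way the paper does, by fixing $r_1$ from the true flow before $\varepsilon$ is defined.

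The difference is Step~1. The paper obtains $r_1$ qualitatively, from orbital asymptotic stability of $\Gamma$ and compactness. You obtain an explicit radius from Gr\"onwall and the invariance of $\Gamma$ alone. Your radius is $r_1=\tfrac{\delta_0}{4}e^{-L(\That+2)}$, since the minimum in your definition is always its second entry. This buys a quantitative constant and shows that Assumption~\ref{asm:true} is not needed for this lemma. The price is that the radius shrinks exponentially with the window length and says nothing beyond $t=\That+2$, whereas a stability argument could confine the true flow for all $t\geq0$. The lemma asks only for $[0,\That+2]$, so your route suffices.

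One justification in your bootstrap should be repaired. For $t>0$ the comparison point $\Phi_f(t;y)$ is not the nearest point of $\Gamma$ to $\Phi_f(t;x)$. So knowing $\Phi_f(t;x)\in N_{\delta_0/2}$ does not by itself place the segment between these two points inside $N_{\delta_0}$, where the bound $L$ holds. The fix is to bootstrap on $\norm{\Phi_f(t;x)-\Phi_f(t;y)}\leq\delta_0$ instead:
\begin{itemize}
\item While this holds, the segment lies in the closed ball of that radius about $\Phi_f(t;y)\in\Gamma$, hence in $N_{\delta_0}$, where $f$ is $L$-Lipschitz.
\item Gr\"onwall then gives $\norm{\Phi_f(t;x)-\Phi_f(t;y)}\leq 2r_1e^{Lt}\leq\delta_0/2<\delta_0$.
\item This strict margin closes the bootstrap on all of $[0,\That+2]$ and gives $\Phi_f(t;x)\in N_{\delta_0/2}$, as you claimed.
\end{itemize}
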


\begin{lemma}[return maps and return times]\label{lem:return}
There exist $r_2\in(0,r_1]$, $\varepsilon_2>0$ and $K_1:=1+2B/v$ such that for $\varepsilon\leq\varepsilon_2$ the first-return maps $P_f,\PTh:\Sigma_{r_2}\to\Sigma$ and return times $\tau_f,\tauTh$ are well defined and $C^1$, with $DP_f, D\PTh$ Lipschitz, $P_f(p)=p$, $\tau_f(p)=\That$, and uniformly on $\Sigma_{r_2}$
\[
  \abs{\tauTh-\tau_f}\leq\tfrac{2}{v}\varepsilon, \qquad
  \norm{\PTh-P_f}\leq K_1\varepsilon, \qquad
  \tau_f,\tauTh\in[\That/2,\That+1].
\]
\end{lemma}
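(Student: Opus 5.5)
The plan is to construct both return maps through the implicit function theorem applied to a crossing function, and to compare them using the one-period closeness of Lemma~\ref{lem:confine}. Let $n:=f(p)/v$ be the unit normal of $\Sigma$, and define
\[
  G_f(t,x):=\langle\Phi_f(t;x)-p,n\rangle,\qquad G_\Th(t,x):=\langle\PhiTh(t;x)-p,n\rangle .
\]
We have $G_f(\That,p)=0$ and $\partial_tG_f(\That,p)=\langle f(p),n\rangle=v>0$. Since $f\in C^2$, the flow $\Phi_f$ is $C^2$ in $(t,x)$. The implicit function theorem therefore gives $r_2\le r_1$ and a $C^1$ function $\tau_f$ on $\Sigma_{r_2}$ with $\tau_f(p)=\That$ and $G_f(\tau_f(x),x)=0$. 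Its derivative is Lipschitz, because the data are $C^2$. Shrinking $r_2$, I would keep $\tau_f\in[\That-\tfrac12,\That+\tfrac12]$ and also
\[
  \partial_tG_f(t,x)=\langle f(\Phi_f(t;x)),n\rangle\ge v/2 \quad\text{for } |t-\tau_f(x)|\le 1 .
\]
I would then set $P_f(x):=\Phi_f(\tau_f(x);x)$. It is $C^1$ with Lipschitz derivative, and $P_f(p)=p$. To exclude earlier returns (the ``first'' return), I would use that on $[\That/2,\That-\tfrac12]$, and for small positive times, the orbit of $x$ stays a fixed distance from $\Sigma_{r_2}$. This follows by continuity from the same property of $\Gamma$, since $\That$ is the minimal period and the orbit is embedded. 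Taking $\That/2\le\That-\tfrac12$ assumes $\That\ge1$; otherwise I would use the window $[\That/2,\That+1]$ directly, with the same argument.

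For the learned field I would not invoke the implicit function theorem at $(\That,p)$. Instead I would get the crossing from the closeness estimate. By Lemma~\ref{lem:confine}, $|G_\Th-G_f|\le\varepsilon$ on $[0,\That+2]\times\Sigma_{r_2}$. Moreover,
\[
  |\partial_tG_\Th-\partial_tG_f|\le\norm{\fTh-f}+L\varepsilon
\]
along the flow. To keep $\partial_tG_\Th\ge v/4$ near $\tau_f$, I would use Lemma~\ref{lem:delta-eps} ($\delta\lesssim\sqrt\varepsilon$) together with $\norm{D\fTh}\le B$ to propagate the orbit mismatch into a tube. This requires $\varepsilon\le\varepsilon_2$, with $\varepsilon_2$ chosen from $v$, $B$ and $C_2$. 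On $I_x:=[\tau_f(x)-1,\tau_f(x)+1]$, the function $t\mapsto G_\Th(t,x)$ is then strictly increasing. Its values at the endpoints are at least $v/2-\varepsilon$ in absolute value and have opposite signs, so it has a unique zero $\tauTh(x)\in I_x$. The mean value theorem gives
\[
  v/2\,|\tauTh-\tau_f|\le|G_\Th(\tau_f,x)-G_f(\tau_f,x)|\le\varepsilon,
\]
that is, $|\tauTh-\tau_f|\le 2\varepsilon/v$. Because $\partial_tG_\Th\neq0$ at the zero and $\fTh\in C^2$, $\tauTh$ is $C^1$ with Lipschitz derivative, and so is $\PTh$. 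Earlier crossings are excluded as for $f$: they are $\varepsilon$-perturbations of configurations at positive distance from $\Sigma$. This also yields $\tauTh\in[\That/2,\That+1]$.

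For the map estimate I would split
\[
  \PTh(x)-P_f(x)=\bigl[\PhiTh(\tauTh;x)-\PhiTh(\tau_f;x)\bigr]+\bigl[\PhiTh(\tau_f;x)-\Phi_f(\tau_f;x)\bigr].
\]
The second bracket is at most $\varepsilon$ by Lemma~\ref{lem:confine}. The first is at most $B|\tauTh-\tau_f|\le 2B\varepsilon/v$, since $\norm{\fTh}\le B$ on $N_{\delta_0}$. Together these give $K_1=1+2B/v$.

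The main obstacle is the uniform lower bound on $\partial_tG_\Th$, i.e.\ transversality of the learned flow to $\Sigma$. Flow closeness controls positions but not velocities directly, so this is where Lemma~\ref{lem:delta-eps} and the $C^2$ bound $B$ must be combined carefully. If that route is awkward, the fallback is an interpolation inequality, which bounds $\norm{\fTh-f}$ along trajectories by $O(\sqrt{B\varepsilon})$ from position closeness and the $C^2$ bounds.
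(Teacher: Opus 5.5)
Your route is the paper's: the implicit function theorem for $\tau_f$, a perturbed crossing function $G_\Th$ whose slope is bounded below through Lemma~\ref{lem:delta-eps} and the $C^2$ bound, the mean value theorem for $\abs{\tauTh-\tau_f}$, and the same splitting of $\PTh-P_f$. The step that fails is your choice of window.

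You cannot get $\partial_tG_f(t,x)\ge v/2$ for $\abs{t-\tau_f(x)}\le1$ by shrinking $r_2$. Shrinking $r_2$ only brings $\Phi_f(t;x)$ close to $\ghat(t)$, so the slope tends to $\langle f(\ghat(s)),n\rangle$ with $\abs{s}\le1$. That limit is a property of $\Gamma$, not of $r_2$. Since $\int_0^{\That}\langle f(\ghat(s)),n\rangle\,\mathrm{d}s=\langle\ghat(\That)-p,n\rangle=0$, this function is negative somewhere in every interval of length $\That$. Your claim is therefore false whenever $\That\le2$. Moreover, if $\tau_f(x)\le1$, your window $I_x$ contains $t=0$, where $G_\Th(0,x)=0$ trivially. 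The same absolute offsets ($\tfrac12$, $1$) also undermine $\tauTh\ge\That/2$ when $\That\le1$.

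The repair, which is what the paper does, is to tie the window to $\Gamma$:
\begin{itemize}
\item Bound $\partial_tG_\Th=\langle\fTh(\PhiTh(t;x)),n\rangle\ge v-\delta-B\norm{\PhiTh(t;x)-p}$ directly.
\item Choose $r>0$ with $2Br+\delta\le v/2$, and $\varsigma\in(0,\That/4]$ with $\norm{\ghat(s)-p}\le r$ for $\abs{s}\le\varsigma$.
\item Shrink $r_2$ and $\varepsilon_2$ so that both trajectories stay within $2r$ of $p$ on the end intervals $[0,\varsigma]$ and $[\That-\varsigma,\That+\varsigma]$.
\end{itemize}
There $\partial_tG_f,\partial_tG_\Th\ge v/2$. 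The sign change across the window then needs $\varepsilon_2$ below a multiple of $v\varsigma$, not merely $\varepsilon<v/2$.

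Three further points need fixing.

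\emph{The constants.} As written you secure only $\partial_tG_\Th\ge v/4$, but you use $v/2$ in the mean value step. With $v/4$ your argument gives $\abs{\tauTh-\tau_f}\le4\varepsilon/v$ and $K_1=1+4B/v$. These are not the constants the lemma asserts and that Lemmas~\ref{lem:clocks} and~\ref{lem:hitting} reuse. The direct bound above supplies slope $v/2$, hence $2/v$.

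\emph{Exclusion of earlier returns.} A trajectory from $x\in\Sigma_{r_2}$ starts on the section, so it is not at positive distance from $\Sigma_{r_2}$ for small positive times. Also, the times between those and $\That/2$ are not covered at all. The correct split has two parts:
\begin{itemize}
\item strict monotonicity of $G_f$ and $G_\Th$ on the end intervals;
\item on $[\varsigma,\That-\varsigma]$, the bound $\norm{\Phi_f(t;x)-p}\ge d_1-e^{L\That}r_2$, where $d_1:=\min_{[\varsigma,\That-\varsigma]}\norm{\ghat(t)-p}>0$ by minimality of the period, lowered by $\varepsilon$ for the learned flow.
\end{itemize}
Choosing $r_2,\varepsilon_2$ so that this exceeds $r_2$ keeps both trajectories off the disc $\Sigma_{r_2}$. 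That is all one can claim: $\Gamma$ generally re-crosses the hyperplane $\Sigma$ far from $p$. So ``positive distance from $\Sigma$'' in your learned-flow step must read ``from $\Sigma_{r_2}$''.

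\emph{Uniformity in $\Th$.} Record that the Lipschitz constant of $D\PTh$ depends only on $(B,\That,v)$. This follows from the implicit function theorem with slope at least $v/2$ and the $C^2$ bounds on $\PhiTh$. Lemma~\ref{lem:persist} needs this uniformity in $\Th$ to fix $r_3$.
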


\begin{lemma}[persistence of the orbit]\label{lem:persist}
Let Assumptions~\ref{asm:true}, \ref{asm:learned} and~\ref{asm:cert} hold with $\eta=0$.  Set $\bar\rho:=(1+\rho_*)/2$ and $\bar\rho_f:=(1+\rhoT(f))/2$.  There are adapted norms on $T_p\Sigma$ with equivalence constants $c_1,c_2$ and $c_1',c_2'$, and radii $r_3,r_3'\in(0,r_2]$, such that $\norm{D\PTh}_*\leq\bar\rho$ on $\Sigma_{r_3}$ and $\norm{DP_f}_{*f}\leq\bar\rho_f$ on $\Sigma_{r_3'}$.  There is $\varepsilon_3\in(0,\varepsilon_2]$ such that for $\varepsilon\leq\varepsilon_3$ the map $\PTh$ has a unique fixed point $x_\Th^*\in\Sigma_{r_3}$ with $\norm{x_\Th^*-p}\leq K_2\varepsilon$, $K_2:=c_2K_1/(c_1(1-\bar\rho))$; the orbit $\gamma_\Th:=\{\PhiTh(t;x_\Th^*): 0\leq t\leq T_\Th\}$ with period $T_\Th:=\tauTh(x_\Th^*)$ is periodic, orbitally exponentially stable, and
\[
  \dist_H(\gamma_\Th,\Gamma)\leq K_3\varepsilon, \qquad
  \abs{T_\Th-\That}\leq K_4\varepsilon .
\]
If $\Ltraj(\Th)=0$ then $x_\Th^*=p$, $\gamma_\Th=\Gamma$, $T_\Th=\That$.
\end{lemma}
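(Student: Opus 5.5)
The plan is to treat the statement as a perturbed contraction-mapping problem on the section $\Sigma$, using Lemma~\ref{lem:return} for closeness of the return maps and times.

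\textbf{Step 1: adapted norms.} By Assumption~\ref{asm:cert}, $\rho(D\PTh(p))\leq\rho_*<\bar\rho$. The standard construction gives a norm $\norm{\cdot}_*$ on $T_p\Sigma$ with $\norm{D\PTh(p)}_*\leq(\rho_*+\bar\rho)/2$. For instance, one can take $\norm{w}_*:=\sum_{j\geq0}\bar\rho^{-j}\norm{D\PTh(p)^jw}$, or a Jordan-form rescaling. The point to watch is that the equivalence constants $c_1,c_2$ must be uniform in $\Th$. I would obtain this from the bound $\norm{D\PTh(p)}\leq C_M$ in Lemma~\ref{lem:return} together with compactness of the set of matrices with $\norm{A}\leq C_M$ and $\rho(A)\leq\rho_*$: on that compact set, $\sup_j\bar\rho^{-j}\norm{A^j}$ is uniformly bounded by Gelfand's formula and a covering argument. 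Since $D\PTh$ is Lipschitz with a constant depending only on $B$, there is a $\Th$-independent radius $r_3$ with $\norm{D\PTh}_*\leq\bar\rho$ on $\Sigma_{r_3}$. For $P_f$ the same construction applies with $\rhoT(f)$, since the nontrivial Floquet multipliers are the eigenvalues of $DP_f(p)$. This gives $\norm{\cdot}_{*f}$ and $r_3'$.

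\textbf{Step 2: fixed point.} On the ball $\{\norm{x-p}_*\leq\rho\}$ I will show that $\PTh$ is a self-map and a $\bar\rho$-contraction. For the self-map property,
\[
\norm{\PTh(x)-p}_*\leq\norm{\PTh(x)-\PTh(p)}_*+\norm{\PTh(p)-P_f(p)}_*\leq\bar\rho\,\rho+c_2K_1\varepsilon ,
\]
so $\rho=c_2K_1\varepsilon/(1-\bar\rho)$ works. Choose $\varepsilon_3$ so that this ball lies in $\Sigma_{r_3}$. Banach's fixed point theorem then gives a unique fixed point $x_\Th^*$. Converting back to the Euclidean norm yields $\norm{x_\Th^*-p}\leq K_2\varepsilon$.

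\textbf{Step 3: the periodic orbit.} The orbit through $x_\Th^*$ closes at time $T_\Th=\tauTh(x_\Th^*)$, and so is periodic. For the period,
\[
\abs{T_\Th-\That}\leq\abs{\tauTh(x^*_\Th)-\tau_f(x^*_\Th)}+\abs{\tau_f(x^*_\Th)-\tau_f(p)}\leq\tfrac{2}{v}\varepsilon+\Lip(\tau_f)K_2\varepsilon .
\]
For the Hausdorff bound, with $t\in[0,T_\Th]$ I split
\[
\norm{\PhiTh(t;x^*_\Th)-\ghat(t)}\leq\norm{\PhiTh(t;x^*_\Th)-\Phi_f(t;x^*_\Th)}+\norm{\Phi_f(t;x^*_\Th)-\Phi_f(t;p)}\leq\varepsilon+e^{L(\That+1)}K_2\varepsilon ,
\]
using Lemma~\ref{lem:confine} and Gr\"onwall. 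This bounds the one-sided distance from $\gamma_\Th$ to $\Gamma$. The reverse direction follows symmetrically, with the extra arc $\abs{T_\Th-\That}$ costing $B K_4\varepsilon$.

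\textbf{Step 4: stability and the exact case.} Orbital exponential stability follows because $D\PTh(x_\Th^*)$ has $\norm{\cdot}_*\leq\bar\rho<1$, so the nontrivial multipliers of $\gamma_\Th$ lie inside the unit circle; then \cite[Theorem~2.82]{Chicone} applies. If $\Ltraj=0$, then $\PhiTh(t;p)=\ghat(t)$ on $[0,\That]$. So $p$ returns to itself at time $\That$, and it is a first return because $\That$ is the minimal period. Hence $p$ is a fixed point of $\PTh$, and by uniqueness $x_\Th^*=p$, $\gamma_\Th=\Gamma$ and $T_\Th=\That$.

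\textbf{Main obstacle.} I expect the main obstacle to be the $\Th$-uniformity of the adapted-norm constants and of $r_3$ in Step 1, because near a Jordan block they degenerate. The compactness argument above is what I would try first.
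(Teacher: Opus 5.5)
Your proposal is correct and follows the paper's proof step by step. The common steps are: a $\Th$-uniform adapted norm from Gelfand's formula and a finite cover of the compact family $\{A:\rho(A)\leq\rho_*,\ \norm{A}\leq C_M\}$; Banach's theorem on the ball of radius $c_2K_1\varepsilon/(1-\bar\rho)$; confinement plus Gr\"onwall for $K_3$; return-time closeness for $K_4$; and uniqueness for the exact case. The paper builds the norm as the finite sum $\sum_{i<j_*}\theta^{-i}\norm{A^ix}$ with a common exponent $j_*$. In your infinite series the weight should be $\theta^{-j}$ with $\theta\in(\rho_*,\bar\rho)$, not $\bar\rho^{-j}$, to keep the margin for the Lipschitz radius and ensure uniform summability. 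The only real difference is orbital exponential stability: you read it off the multipliers via \cite[Theorem~2.82]{Chicone}, whereas the paper combines the section contraction with Lemma~\ref{lem:winding}\textup{(i)} and a one-period Gr\"onwall factor. Your route is shorter and suffices for the qualitative claim, while the paper's gives a basin and rate controlled by $\Th$-independent constants.
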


\begin{lemma}[monotone winding and uniqueness]\label{lem:winding}
There exist $r_5'\in(0,\delta_0/4]$, $\delta_1>0$ and $C_U\geq 1$, depending only on $(f,\ghat,\delta_0,B)$, with the following properties.

\textup{(i)} \emph{Winding and confinement.}  Let $g\in C^2(N_{\delta_0}\times\R;\R^d)$ satisfy $\norm{g}_{C^2}\leq B$ and $\sup_{\Gamma\times\R}\norm{g-f}\leq\delta\leq\delta_1$.  Then for every $r_0\leq r_5'$ and every $g$-trajectory $x(\cdot)$ starting in $N_{r_0}$, one has $\dist(x(t),\Gamma)\leq C_U(r_0+\delta)$ for all $t\in[0,\That+1]$, so the trajectory remains in the tube $N_{\delta_0}$ \emph{on that interval}, and within that same time it crosses $\Sigma$ transversally, at a point at distance at most $C_U(r_0+\delta)$ from $p$.  Nothing is claimed beyond the first winding, and nothing can be: the hypotheses bound $g-f$ on $\Gamma$ only, so the transverse dynamics of $g$ may expand, and re-entering the estimate at the crossing replaces $r_0$ by $C_U(r_0+\delta)\geq r_0$.  Confinement for all $t\geq0$ is recovered only through the contraction of Assumption~\ref{asm:cert}, one winding at a time.  This part uses no certificate: it holds for any field within $\delta_1$ of $f$ on $\Gamma$, in particular for $f$ itself, with $\delta=0$.

\textup{(ii)} \emph{Uniqueness.}  Set $r_5:=\min\bigl(r_5',\,r_3/(2C_U)\bigr)$, which depends on $\rho_*$ through $r_3$ but on nothing else beyond $(f,\ghat,\delta_0,B)$, and suppose in addition that Assumption~\ref{asm:cert} holds and that $\delta\leq r_3/(2C_U)$, which by Lemma~\ref{lem:delta-eps} follows from a smallness condition on $\varepsilon$ alone.  Then $C_U(r_0+\delta)\leq r_3$ for every $r_0\leq r_5$, and every periodic orbit of $\fTh$ contained in $N_{r_0}$ equals $\gamma_\Th$; if moreover $K_3\varepsilon\leq r_0$, then $\gamma_\Th\subset N_{r_0}$ by Lemma~\ref{lem:persist} and $\gamma_\Th$ is that unique orbit.  In particular $r_5$ does not depend on $\Th$ or on $\varepsilon$.
\end{lemma}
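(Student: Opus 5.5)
Part (i) is a perturbative statement about trajectories of $g$ near $\Gamma$. I will work in the normal tubular chart $(s,u)\mapsto\ghat(s)+U(s)u$, which is injective on $N_{\delta_0}$ by the choice of $\delta_0$. Write a $g$-trajectory as $x(t)=\ghat(s(t))+U(s(t))u(t)$.

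Differentiating and using $g=f+(g-f)$, I will derive
\[
\dot s = 1+O(\norm{u}+\delta), \qquad \dot u = A(s,t)u+O(\norm{u}^2+\delta).
\]
Here $A$ is bounded by a constant depending on $B$ and $L$. The $\delta$-term comes from transferring the bound on $\Gamma$ to nearby points: $\norm{(g-f)(x)}\le\delta+2B\dist(x,\Gamma)$ by the $C^1$ bounds on both fields. Hence $\norm{g-f}$ contributes $O(\delta+\norm{u})$, and the $\norm{u}$ part is absorbed into $A$.

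A Gr\"onwall argument on $[0,\That+1]$, run while $\norm{u}\le\delta_0/2$, gives $\norm{u(t)}\le e^{c(\That+1)}(r_0+(\That+1)\delta)$. This yields $C_U$. A continuation argument then keeps the trajectory inside $N_{\delta_0}$, once $r_5'$ and $\delta_1$ are chosen so that $C_U(r_5'+\delta_1)\le\delta_0/2$.

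From $\dot s\ge 1/2$ (after further shrinking $r_5'$ and $\delta_1$), the phase $s$ advances by $\That$ within time $2\That$. To stay within $\That+1$, I will instead use $\dot s\ge 1-c(r_0+\delta)$, so $s$ reaches the next multiple of $\That$ within time $\That/(1-c(\cdot))\le\That+1$ for small radii. At that moment $x$ lies in the normal plane at $\ghat(0)=p$, which is $\Sigma$, and its distance to $p$ is $\norm{u}\le C_U(r_0+\delta)$.

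Transversality follows because $\langle g(x),f(p)\rangle\ge v^2-\norm{f(p)}(2B\norm{x-p}+\delta)>0$. Using $v_{\min}$ rather than $v$ in the chart speed makes the constants uniform in $s$. The case $g=f$, $\delta=0$ is included automatically.

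For (ii), the inequality $C_U(r_0+\delta)\le r_3$ is arithmetic from $r_0\le r_3/(2C_U)$ and $\delta\le r_3/(2C_U)$.

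Now let $\gamma$ be a periodic orbit of $\fTh$ (autonomous case) contained in $N_{r_0}$. By (i), $\gamma$ crosses $\Sigma$ at some $q$ with $\norm{q-p}\le r_3$, so $q\in\Sigma_{r_3}$. The main obstacle is to show that $q$ is a fixed point of $\PTh$, and not merely a periodic point of some higher order.

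For this I will apply (i) repeatedly. Each crossing of $\gamma$ with $\Sigma$ is again a point of $\gamma\subset N_{r_0}$. Re-applying (i) from that point, the next crossing again lies in $\Sigma_{r_3}$, since the bound is in terms of $r_0$ and does not compound. So the forward iterates $\PTh^k(q)$ all stay in $\Sigma_{r_3}$.

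Periodicity of $\gamma$ gives $\PTh^m(q)=q$ for some $m\ge1$. On $\Sigma_{r_3}$ we have $\norm{D\PTh}_*\le\bar\rho<1$ by Lemma~\ref{lem:persist}. Since $\Sigma_{r_3}$ is convex in the adapted norm up to the equivalence constants, $\PTh$ is a contraction there along segments. Hence
\[
\norm{q-x_\Th^*}_*=\norm{\PTh^m(q)-\PTh^m(x_\Th^*)}_*\le\bar\rho^m\norm{q-x_\Th^*}_*,
\]
which forces $q=x_\Th^*$. I expect some care to be needed so that the segment between $q$ and $x_\Th^*$ remains in the region where $\PTh$ is defined; this holds because both points lie in $\Sigma_{r_3}\subset\Sigma_{r_2}$. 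It follows that $\gamma$ passes through $x_\Th^*$, and therefore $\gamma=\gamma_\Th$.

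Finally, if $K_3\varepsilon\le r_0$, Lemma~\ref{lem:persist} gives $\gamma_\Th\subset N_{r_0}$, so $\gamma_\Th$ is the unique such orbit. The radius $r_5$ involves only $r_5'$, $C_U$ and $r_3$, so it is independent of $\Th$ and $\varepsilon$.
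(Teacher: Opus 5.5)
Your proposal is correct and follows the paper's proof. The shared steps are the tubular chart, a Gr\"onwall bound $|\dot u|\le c|u|+\delta$ closed by continuation, and a phase-speed lower bound that gives the section crossing within time $\That+1$. For (ii) you likewise re-apply (i) at each crossing to keep the periodic point in $\Sigma_{r_3}$, where the adapted-norm contraction forces it to equal $x_\Th^*$. The differences are cosmetic: you derive the phase-speed bound after the Gr\"onwall step, and you obtain transversality from an inner-product estimate rather than from $\dot s>0$. The convexity you need is just that of the Euclidean disc $\Sigma_{r_3}$, not a property of the adapted norm.
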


\begin{lemma}[two contractions track each other]\label{lem:track}
Set $K_5:=(c_2'/c_1')\cdot 2K_1/(1-\bar\rho_f)$.  There exist $r_4\in(0,\min(r_3,r_3',r_5)]$ and $\varepsilon_4\in(0,\varepsilon_3]$ such that for $\varepsilon\leq\varepsilon_4$ and all $y_f,y_\Th\in\Sigma_{r_4}$ with $\norm{y_f-y_\Th}\leq K_1\varepsilon$, the iterates remain in $\Sigma_{r_3}\cap\Sigma_{r_3'}$ and
\[
  \norm{P_f^k(y_f)-\PTh^k(y_\Th)}\;\leq\;K_5\varepsilon
  \qquad\text{for all } k\geq 0 .
\]
\end{lemma}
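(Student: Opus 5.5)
The plan is a discrete Gr\"onwall argument in the adapted norm of the \emph{target} return map, preceded by an invariance step that keeps both orbits inside the discs where the norm bounds of Lemma~\ref{lem:persist} hold. Write $a_k:=P_f^k(y_f)$, $b_k:=\PTh^k(y_\Th)$ and $e_k:=a_k-b_k$. Normalize the equivalence constants of Lemma~\ref{lem:persist} as $c_1'\norm{w}\leq\norm{w}_{*f}\leq c_2'\norm{w}$ on $T_p\Sigma$, and similarly for $\norm{\cdot}_*$.

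\emph{Step 1 (invariance).} I would show that both sequences stay in $\Sigma_{r_3}\cap\Sigma_{r_3'}$; this fixes $r_4$ and $\varepsilon_4$.

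For the target map, $P_f(p)=p$ and $\norm{DP_f}_{*f}\leq\bar\rho_f$ on the convex disc $\Sigma_{r_3'}$. The mean value inequality along segments then gives $\norm{P_f(y)-p}_{*f}\leq\bar\rho_f\norm{y-p}_{*f}$. By induction, $\norm{a_k-p}\leq (c_2'/c_1')\,r_4$.

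For the learned map I would do the same around its fixed point $x_\Th^*$, which satisfies $\norm{x_\Th^*-p}\leq K_2\varepsilon$, using $\norm{D\PTh}_*\leq\bar\rho$ on $\Sigma_{r_3}$. This yields
\[
  \norm{b_k-p}\;\leq\;(c_2/c_1)\,(r_4+K_2\varepsilon)+K_2\varepsilon .
\]
Choosing $r_4$ and then $\varepsilon_4\leq\varepsilon_3$ small makes both right-hand sides at most $\min(r_3,r_3',r_5)$, and in particular at most $r_2$. Then every iterate lies in the domains of both return maps and in the discs where the adapted bounds hold. The induction must be run jointly: at step $k$ one needs $a_k,b_k\in\Sigma_{r_3}\cap\Sigma_{r_3'}$ to apply both bounds. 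Since all the discs are balls centred at $p$, the segment $[a_k,b_k]$ also lies there.

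\emph{Step 2 (error recursion).} Split
\[
  e_{k+1}=\bigl[P_f(a_k)-P_f(b_k)\bigr]+\bigl[P_f(b_k)-\PTh(b_k)\bigr].
\]
The first bracket is bounded by $\bar\rho_f\norm{e_k}_{*f}$ by the mean value inequality on the segment $[a_k,b_k]\subset\Sigma_{r_3'}$. The second is at most $K_1\varepsilon$ in the Euclidean norm by Lemma~\ref{lem:return}, since $b_k\in\Sigma_{r_2}$, hence at most $c_2'K_1\varepsilon$ in $\norm{\cdot}_{*f}$. This gives the recursion
\[
  \norm{e_{k+1}}_{*f}\;\leq\;\bar\rho_f\norm{e_k}_{*f}+c_2'K_1\varepsilon ,
  \qquad \norm{e_0}_{*f}\leq c_2'K_1\varepsilon .
\]
Summing the geometric series, $\norm{e_k}_{*f}\leq c_2'K_1\varepsilon\bigl(1+1/(1-\bar\rho_f)\bigr)\leq 2c_2'K_1\varepsilon/(1-\bar\rho_f)$. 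Returning to the Euclidean norm gives $\norm{e_k}\leq K_5\varepsilon$ for all $k\geq0$, with exactly the constant $K_5=(c_2'/c_1')\cdot 2K_1/(1-\bar\rho_f)$ of the statement.

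\emph{Main obstacle.} I expect the difficulty to lie in Step~1, not in the recursion. The return maps land in the hyperplane $\Sigma$, not a priori in $\Sigma_{r_2}$, so staying in the domain has to come from the contraction itself. For the learned map, that contraction is centred at the displaced fixed point $x_\Th^*$ rather than at $p$, and it is measured in a different adapted norm. The order of choices matters: first fix $r_4$ in terms of $r_3,r_3',r_5$ and the ratios $c_2/c_1$, $c_2'/c_1'$; only then shrink $\varepsilon_4$ so that the $O(\varepsilon)$ displacement $K_2\varepsilon$ of the fixed point is absorbed. Because the recursion in Step~2 uses the target map's norm and contraction, the learned map enters there only through the uniform closeness $\norm{\PTh-P_f}\leq K_1\varepsilon$. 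Its own contraction is needed only to keep $b_k$ in the disc.
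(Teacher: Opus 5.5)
Your proposal is correct and follows essentially the paper's proof. Step~1 keeps each orbit in its disc through its own contraction, in an adapted norm centred at its own fixed point, with $r_4$ fixed before $\varepsilon_4$ so that the offset $K_2\varepsilon$ is absorbed. Step~2 is the recursion $\norm{e_{k+1}}_{*f}\leq\bar\rho_f\norm{e_k}_{*f}+c_2'K_1\varepsilon$ in the target's adapted norm, which yields exactly $K_5$. The paper phrases Step~1 as the invariance of explicit balls $B_*$ and $B_{*f}$ contained in $\Sigma_{\min(r_3,r_3')}$, which is the same argument.
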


\begin{lemma}[clock comparison]\label{lem:clocks}
In the setting of Lemma~\ref{lem:track}, define the cumulative return times $t_0^f:=0$, $t_{n+1}^f:=t_n^f+\tau_f(P_f^ny_f)$ and analogously $t_n^\Th$.  Then for all $n\geq 0$,
\[
  \abs{t_n^\Th-t_n^f}\;\leq\;K_6\,n\,\varepsilon,
  \qquad K_6:=\tfrac{2}{v}+\Lip(\tau_f)\,K_5 .
\]
\end{lemma}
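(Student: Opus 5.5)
The plan is a telescoping argument over the individual returns. Write $y_n^f:=P_f^n(y_f)$ and $y_n^\Th:=\PTh^n(y_\Th)$. By the recursive definition of the cumulative return times,
\[
  t_n^\Th-t_n^f\;=\;\sum_{j=0}^{n-1}\bigl(\tauTh(y_j^\Th)-\tau_f(y_j^f)\bigr),
\]
so it suffices to bound each summand by $K_6\varepsilon$ uniformly in $j$ and then sum the $n$ terms.

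Each summand splits as
\[
  \tauTh(y_j^\Th)-\tau_f(y_j^f)=\bigl(\tauTh(y_j^\Th)-\tau_f(y_j^\Th)\bigr)+\bigl(\tau_f(y_j^\Th)-\tau_f(y_j^f)\bigr).
\]
The first term is a comparison of the two return times at a common point. Lemma~\ref{lem:return} bounds it by $\tfrac{2}{v}\varepsilon$ uniformly on $\Sigma_{r_2}$. This requires $y_j^\Th\in\Sigma_{r_2}$, and Lemma~\ref{lem:track} supplies it: all iterates stay in $\Sigma_{r_3}\cap\Sigma_{r_3'}\subset\Sigma_{r_2}$. The second term compares the single return time $\tau_f$ at two points. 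Since $\tau_f$ is $C^1$ on $\Sigma_{r_2}$ by Lemma~\ref{lem:return}, it has a Lipschitz constant $\Lip(\tau_f)$ there. Lemma~\ref{lem:track} gives $\norm{y_j^f-y_j^\Th}\leq K_5\varepsilon$, so the second term is at most $\Lip(\tau_f)K_5\varepsilon$. Adding the two bounds gives $K_6\varepsilon$ per return, and summing over $j<n$ gives the claim. The case $n=0$ is trivial.

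The one point needing care is that $\Lip(\tau_f)$ must be a finite constant depending only on the admissible data and not on $\Th$. To secure this I would restrict to the compact closed disc $\Sigma_{r_3}\cap\Sigma_{r_3'}$, on which the $C^1$ function $\tau_f$ has a bounded gradient. By the implicit function theorem applied to the crossing condition $\langle \Phi_f(\tau;y)-p,f(p)\rangle=0$, that gradient is controlled by $B$, $v$ and the one-period Gr\"onwall factor $e^{L(\That+1)}$. The discs are convex, so the mean value inequality applies along the segment joining $y_j^f$ to $y_j^\Th$. No other step is delicate: the bound grows linearly in $n$ because the per-return errors add without any contraction acting on the phase.
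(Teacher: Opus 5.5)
Your proposal is correct and matches the paper's proof. It uses the same telescoping sum over returns, with each increment split at the common point $\PTh^k y_\Th$ into the return-time comparison of Lemma~\ref{lem:return} and the Lipschitz estimate of $\tau_f$ fed by the tracking bound of Lemma~\ref{lem:track}. Your remark on why $\Lip(\tau_f)$ is finite and independent of $\Th$ supplies a detail the paper leaves implicit.
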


\begin{theorem}[linear long-horizon bound for autonomous targets with a stable periodic orbit]
\label{thm:floquet}
Let Assumptions~\ref{asm:true}--\ref{asm:cert} hold with $\eta=0$ in Assumption~\ref{asm:aut}.  There exist $\varepsilon_0,r_0>0$ and $C\geq 1$, depending only on $(f,\ghat,\delta_0,B,\rho_*,\rhoT(f))$ and never on $\varepsilon$, $t$, or $x_0$, such that for all $\varepsilon\leq\varepsilon_0$:
\begin{enumerate}
\item[\textup{(i)}] \textbf{Orbit persistence.}  $\fTh$ has a unique
periodic orbit $\gamma_\Th\subset N_{r_0}$, orbitally exponentially stable, with $\dist_H(\gamma_\Th,\Gamma)\leq C\varepsilon$ and $\abs{T_\Th-\That}\leq C\varepsilon$.
\item[\textup{(ii)}] \textbf{Transverse contraction.}  For every $x_0$
with $\dist(x_0,\Gamma)\leq r_0$ and every $t\geq 0$,
  \[
    \dist\bigl(\PhiTh(t;x_0),\Gamma\bigr)
    \;\leq\;
    C\Bigl(\bar\rho^{\,t/(\That+1)}\dist(x_0,\Gamma)+\varepsilon\Bigr).
  \]
\item[\textup{(iii)}] \textbf{Linear bound.}  For every such $x_0$ and
$t\geq 0$,
  \begin{equation}\label{eq:floquet-main}
    \norm{\PhiTh(t;x_0)-\Phi_f(t;x_0)}
    \;\leq\;
    C\,\varepsilon\Bigl(1+\frac{t}{\That}\Bigr).
  \end{equation}
\end{enumerate}
\end{theorem}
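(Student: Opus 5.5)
The plan is to assemble the six lemmas in the order they are stated. We reduce every trajectory to its sequence of returns to $\Sigma$, control the transverse part by the two contractions, and control the phase part by the clock comparison. We fix $\varepsilon_0\le\varepsilon_4$, small enough that Lemma~\ref{lem:delta-eps} gives $\delta\le\min(\delta_1,r_3/(2C_U))$ and $K_3\varepsilon_0\le r_0$. We choose $r_0\le\min(r_5,r_1)$ small enough that $C_U(r_0+\delta)\le r_4$ whenever $\delta$ obeys this bound.

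Part (i) then follows directly. Lemma~\ref{lem:persist} gives existence, the $K_3\varepsilon$ and $K_4\varepsilon$ bounds and orbital exponential stability. Lemma~\ref{lem:winding}(ii) gives uniqueness in $N_{r_0}$.

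For (ii) and (iii), start from $x_0$ with $\dist(x_0,\Gamma)\le r_0$.

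\textbf{First winding.} Apply Lemma~\ref{lem:winding}(i) once to $f$ (with $\delta=0$) and once to $\fTh$. Within time $\That+1$ both trajectories hit $\Sigma$ at points $y_f,y_\Th\in\Sigma_{r_4}$, at times $s_f,s_\Th$. Lemma~\ref{lem:confine} gives $\norm{\PhiTh-\Phi_f}\le\varepsilon$ on $[0,\That+2]$. Transversality ($f(p)$ has speed $v$) then turns this into $\abs{s_\Th-s_f}\le (2/v)\varepsilon$ and $\norm{y_\Th-y_f}\le K_1\varepsilon$, by the same argument as in Lemma~\ref{lem:return}.

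\textbf{Iterating the returns.} Lemma~\ref{lem:track} keeps all iterates in $\Sigma_{r_3}\cap\Sigma_{r_3'}$, with $\norm{P_f^n y_f-\PTh^n y_\Th}\le K_5\varepsilon$. Lemma~\ref{lem:clocks} gives $\abs{t_n^\Th-t_n^f}\le K_6 n\varepsilon$. The contraction $\norm{D\PTh}_*\le\bar\rho$ gives $\norm{\PTh^n y_\Th-x_\Th^*}\le (c_2/c_1)\bar\rho^{\,n}r_4$. With Lemma~\ref{lem:persist} this yields $\dist(\PTh^n y_\Th,\Gamma)\lesssim\bar\rho^{\,n}\norm{y_\Th-x_\Th^*}+K_2\varepsilon$. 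Between returns, Gronwall over time at most $\That+1$ extends this to all $t$. Since return times lie in $[\That/2,\That+1]$, we have $n\ge t/(\That+1)-1$, which gives (ii). For the sharper dependence on $\dist(x_0,\Gamma)$ rather than $r_4$, rerun the argument with $r_0$ replaced by $\dist(x_0,\Gamma)$ in the winding estimate.

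\textbf{Linear bound.} For $t\in[t_n^f,t_{n+1}^f]$, write
\[
\PhiTh(t;x_0)-\Phi_f(t;x_0)=\bigl[\PhiTh(t-t^\Th_n;\PTh^n y_\Th)-\Phi_f(t-t^\Th_n;P_f^n y_f)\bigr]+\bigl[\Phi_f(t-t^\Th_n;P_f^ny_f)-\Phi_f(t-t^f_n;P_f^ny_f)\bigr].
\]
The first bracket is a one-period comparison from nearby starting points. By Lemma~\ref{lem:confine} and Lipschitz dependence of $\Phi_f$ over time $\That+2$, it is $\le e^{L(\That+2)}K_5\varepsilon+\varepsilon$. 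The second bracket is bounded by $\sup\norm{f}\cdot\abs{t^\Th_n-t^f_n}\le B K_6 n\varepsilon$. Using $n\le 2t/\That+1$ gives \eqref{eq:floquet-main}. The initial segment $[0,\max(s_f,s_\Th)]$ is covered by Lemma~\ref{lem:confine}.

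\textbf{Main obstacle.} The hardest step is the phase bookkeeping. The time $t$ lies in windows indexed differently for the two flows. When $\abs{t^\Th_n-t^f_n}$ grows beyond a period, the learned trajectory is evaluated at a shifted time, so $t-t^\Th_n$ may be negative or exceed one return time. To handle this, I would express both points through $\Phi_f$ along $\Gamma$ and use the bound $\norm{\Phi_f(a;y)-\Phi_f(b;y)}\le B\abs{a-b}$ for arbitrary real $a,b$. That bound holds because the flow stays in $N_{\delta_0}$ for all positive times from $\Sigma_{r_4}$, by iterated confinement. A second concern is the uniformity of all constants in $x_0$, which we secure by choosing $r_0$ and $\varepsilon_0$ from the lemma constants alone.
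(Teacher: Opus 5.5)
\textbf{Overall.} Your assembly matches the paper's. Part (i) comes from Lemmas~\ref{lem:persist} and~\ref{lem:winding}. Your ``first winding'' step is what the paper isolates as Lemma~\ref{lem:hitting}. Part (ii) is geometric decay of $\PTh^n y_\Th$ to $x_\Th^*$, followed by one Gr\"onwall step and the Hausdorff bound. Part (iii) is the same three-term triangle inequality, fed by Lemmas~\ref{lem:track} and~\ref{lem:clocks}.

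\textbf{Where you place the phase mismatch in (iii).} This is the one real difference, and as written the decomposition does not close, as you noticed. You index by the true returns, $t\in[t^f_n,t^f_{n+1}]$, so you compare the two flows at the learned local time $t-t^\Th_n$. That time is negative whenever $t^f_n\le t<t^\Th_n$, and it leaves $[0,\That+2]$ once $K_6n\varepsilon>1$; Lemma~\ref{lem:confine} says nothing there.

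Your remedy becomes a proof if you index by the learned returns instead. Use absolute crossing times $T^\Th_m:=s_\Th+t^\Th_m$ and $T^f_m:=s_f+t^f_m$; your clocks omit these offsets. For $t\in[T^\Th_m,T^\Th_{m+1})$:
\begin{itemize}
\item The one-period comparison from $\PTh^m y_\Th$ and $P_f^m y_f$ happens at local time $\tau:=t-T^\Th_m\in[0,\That+1)$, where Lemma~\ref{lem:confine} applies.
\item The whole phase error is $\norm{\Phi_f(\tau+T^f_m;x_0)-\Phi_f(t;x_0)}\le B\abs{T^\Th_m-T^f_m}\le B(K_6m\varepsilon+2\varepsilon/v)$. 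This is a time shift along the true forward trajectory from $x_0$, which stays in $N_{\delta_0}$.
\item Both time arguments are nonnegative, so the shift bound holds for a mismatch of any size. Your claim for ``arbitrary real $a,b$'' is too strong, since backward orbits leave the tube, but you never need negative times.
\item Learned return times are at least $\That/2$, so $m\le 2t/\That+1$.
\end{itemize}
The paper does the opposite. It indexes by the true returns and puts the shift $\delta_n$ on the learned flow, bounding it by $B\abs{\delta_n}$ through the global bound on $\fTh$ from the cutoff. It then disposes of $K_6n\varepsilon>1$ with $\operatorname{diam}(N_{\delta_0})$. Both routes work; the corrected version of yours needs no case split.

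\textbf{Two smaller repairs.}

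First, obtain $y_\Th$ from $y_f$ by the simple-zero argument at $s_f$, not from Lemma~\ref{lem:winding}(i) applied to $\fTh$. That lemma only yields some crossing in $[0,\That+1]$, not necessarily the one near $s_f$. Its location bound is $C_U(\dist(x_0,\Gamma)+\delta)$, and Lemma~\ref{lem:delta-eps} only gives $\delta\le\sqrt{2C_2\varepsilon}$. So ``rerunning the winding estimate'' for $\fTh$ in (ii) would leave a term of order $\bar\rho^{\,n}\sqrt\varepsilon$. That term is not bounded by $C(\bar\rho^{\,n}\dist(x_0,\Gamma)+\varepsilon)$ when $x_0\in\Gamma$ and $n$ is bounded. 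Going through $y_f$, with winding for $f$ and $\delta=0$, gives $\norm{y_\Th-x_\Th^*}\le C_U\dist(x_0,\Gamma)+(K_1+K_2)\varepsilon$, which is what (ii) needs.

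Second, the condition $C_U(r_0+\delta)\le r_4$ cannot be met by shrinking $r_0$ alone. Your bound $\delta\le r_3/(2C_U)$ allows $C_U\delta=r_3/2\ge r_4$. After the first repair you no longer need it. Choose $r_0\le\min(r_5,r_1)$ with $C_Ur_0\le r_4/2$ first. Then choose $\varepsilon_0$ with $K_1\varepsilon_0\le r_4/2$, $K_3\varepsilon_0\le r_0$, and the $\delta$-smallness conditions already folded into $\varepsilon_3$.
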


The proof is in Appendix~\ref{app:pf-floquet}; we summarize the mechanism.  Gr\"onwall is applied only over single periods, contributing the bounded factor $e^{L(\That+1)}$.  Transverse errors do not accumulate: Lemma~\ref{lem:track} caps them at $K_5\varepsilon$ uniformly in the number of returns.  The accumulating quantity is the phase: each period the two clocks drift apart by $O(\varepsilon)$ (Lemma~\ref{lem:clocks}), and after $n\leq 2t/\That+1$ returns the drift is $O(n\varepsilon)$, whence \eqref{eq:floquet-main}. Figure~\ref{fig:mechanism} displays the two mechanisms.

\begin{figure}[t]
\centering
\begin{subfigure}[t]{0.48\textwidth}
  \centering
  \includegraphics[width=\linewidth]{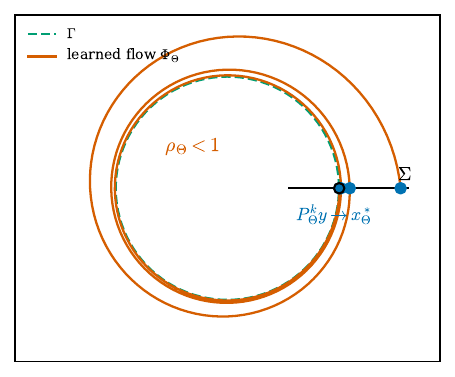}
\caption{transverse contraction}
  \label{sfig:mech-contract}
\end{subfigure}\hfill
\begin{subfigure}[t]{0.48\textwidth}
  \centering
  \includegraphics[width=\linewidth]{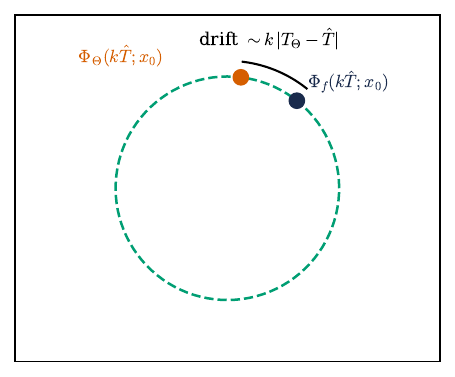}
\caption{phase drift}
  \label{sfig:mech-drift}
\end{subfigure}
\caption{The Poincar\'e reset mechanism.  (a)~Certified contraction $\rhoTh<1$: successive returns of the learned flow to the section $\Sigma$ converge geometrically to the fixed point, so transverse error cannot accumulate.  (b)~The non-resettable error: the learned clock runs at period $T_\Th\neq\That$, so the phase drifts by $k\,\abs{T_\Th-\That}$ after $k$ periods, the linear term in \eqref{eq:floquet-main}.}
\label{fig:mechanism}
\end{figure}

\begin{remark}[when the linear term is active]\label{rem:tight}
The linear term has two sources, which saturate differently.  Near the threshold $\rhoTh\to 1^-$ the transverse geometric sums lose their gap and the transverse error itself grows linearly, so the linear term is expected to be attained there, and a pure period mismatch attains it.  In polar coordinates on $\R^2$, let the target be $\dot r=-(r-1)$, $\dot\theta=2\pi/\That$, whose unit circle $\Gamma$ is a hyperbolic limit cycle with transverse multiplier $e^{-\That}$.  Let the learned field share the radial part, with $\dot\theta=2\pi/((1+\delta)\That)$ for small $\delta>0$.  Both fields are autonomous and $C^2$ near $\Gamma$, and both return maps on the section $\{\theta=0\}$ contract at rate at most $e^{-\That}$, so Assumption~\ref{asm:cert} holds.  By Gr\"onwall, the one-period flow closeness on a tube around $\Gamma$ is $\varepsilon\leq c_1\delta$, with $c_1$ depending only on $\That$.  For $x_0\in\Gamma$ both solutions remain on $\Gamma$ and differ only in phase.  After $k$ periods the angular mismatch is $2\pi k\delta/(1+\delta)$, so $\norm{\PhiTh(k\That;x_0)-\Phi_f(k\That;x_0)}=2\abs{\sin\bigl(\pi k\delta/(1+\delta)\bigr)}\geq 4k\delta/(1+\delta)$ as long as $k\delta/(1+\delta)\leq\tfrac12$, by $\sin(\pi s)\geq 2s$ on $[0,\tfrac12]$.  Since $\varepsilon\leq c_1\delta$, the stroboscopic error is at least $(2/c_1)\,\varepsilon\,k$ until the phase wraps: the envelope $C\varepsilon(1+k)$ of \eqref{eq:floquet-main} is attained up to constants.  Well inside the certified regime the transverse error saturates at $O(\varepsilon/(1-\rhoTh))$ and the phase drift dominates.  The error at period $k$ then grows like $k\abs{T_\Th-\That}$ until the phase wraps around the orbit, after which it oscillates at the level of the orbit diameter, still below the envelope \eqref{eq:floquet-main}.  When the fitted period matches $\That$ closely, the drift per period is negligible and the error saturates far below the envelope; the deployments of Section~\ref{ssec:num-cert} lie outside the scope of this remark, being governed by Theorem~\ref{thm:orbital} rather than by \eqref{eq:floquet-main}.
\end{remark}

The encoded architecture is $\That$-periodic, not autonomous, so Theorem~\ref{thm:floquet} does not apply to it verbatim.  The following lemma closes the gap.  Its proof, in Appendix~\ref{app:pf-nearaut}, has two steps.  First, the averaged field $\bar f_{\Th}$ is autonomous and satisfies Assumption~\ref{asm:learned} with $\varepsilon+C\eta$ in place of $\varepsilon$, the constant being made explicit in the lemma.  Once its return map is certified, which is the hypothesis of part~(ii) of Lemma~\ref{lem:nearaut} below, Theorem~\ref{thm:floquet} applies to it.  Second, the proof of part~(iii) of Theorem~\ref{thm:floquet} reruns with $\bar f_{\Th}$ as reference and the encoded field as the perturbed system, once the semigroup identities of the perturbed flow are replaced by evolution-family identities for the phase-indexed flows $\PhiTh^{(s)}$, whose closeness part~(i) controls uniformly in the launch phase.

\begin{lemma}[reduction of the periodic encoding]\label{lem:nearaut}
Let Assumptions~\ref{asm:true}, \ref{asm:aut} and \ref{asm:learned} hold, and set
\[
  C_B:=(\That+2)\bigl(2+B(\That+2)\bigr)e^{2B(\That+2)} .
\]
Assume that $\varepsilon+C_B\eta\leq\delta_0/4$ and that $\delta+\eta\leq v_{\min}/(2(\That+1))$.  For $s\in\R$ write $\PhiTh^{(s)}(t;x)$ for the solution of $\dot y=\fTh(y,s+t)$, $y(0)=x$: the flow launched at phase $s$.
\begin{enumerate}
\item[\textup{(i)}] For every $s\in\R$, $x\in N_{2r_1}$ and
$t\in[0,\That+2]$,
  \[
    \norm{\PhiTh^{(s)}(t;x)-\Phi_{\bar f_{\Th}}(t;x)}
    +\norm{D_x\PhiTh^{(s)}(t;x)-D_x\Phi_{\bar f_{\Th}}(t;x)}
    \;\leq\; C_B\,\eta .
  \]
In particular $\bar f_{\Th}$ satisfies Assumption~\ref{asm:learned} with closeness $\varepsilon+C_B\eta$ and orbit mismatch $\delta+\eta$.
\item[\textup{(ii)}] Suppose in addition that the averaged field is
certified, $\rho(D\bar P(p))\leq\rho_*<1$ with $\bar P$ its first-return map (the proof uses this hypothesis only through the contraction at rate $\bar\rho=(1+\rho_*)/2$ that it induces, via the adapted norm of Lemma~\ref{lem:persist}, on a neighborhood of the fixed point) and that $\varepsilon+\eta\leq\varepsilon_6$ for a threshold $\varepsilon_6>0$ depending only on $(f,\ghat,\delta_0,B,\rho_*,\rhoT(f))$.  Then three conclusions hold.  Conclusion \textup{(i)} of Theorem~\ref{thm:floquet} holds for $\bar f_{\Th}$, with $\varepsilon+C_B\eta$ in place of $\varepsilon$.  Every trajectory of the encoded flow with $\dist(x_0,\Gamma)\leq r_0$ satisfies $\dist\bigl(\PhiTh^{(0)}(t;x_0),\bar\gamma_\Th\bigr)\leq C\bigl(\bar\rho^{\,t/(\That+1)}\dist(x_0,\bar\gamma_\Th)+\eta\bigr)$ for all $t\geq0$, where $\bar\gamma_\Th$ is the orbit of $\bar f_{\Th}$; in particular the distance is eventually of order $\eta$.  Conclusions \textup{(ii)}--\textup{(iii)} hold for the encoded flow itself: for every $x_0$ with $\dist(x_0,\Gamma)\leq r_0$ and $t\geq 0$,
  \begin{equation}\label{eq:nearaut-main}
    \norm{\PhiTh(t;x_0)-\Phi_f(t;x_0)}
    \;\leq\;
    C\,(\varepsilon+\eta)\Bigl(1+\frac{t}{\That}\Bigr).
  \end{equation}
\end{enumerate}
\end{lemma}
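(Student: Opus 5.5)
For part (i) I would compare the phase-$s$ flow of the encoded field with the flow of the averaged field by a Grönwall argument on the state and on its variational equation. Write $y(t)=\PhiTh^{(s)}(t;x)$ and $\bar y(t)=\Phi_{\bar f_{\Th}}(t;x)$. Then
\[
  \dot y-\dot{\bar y}=\bigl(\fTh(y,s+t)-\bar f_{\Th}(y)\bigr)+\bigl(\bar f_{\Th}(y)-\bar f_{\Th}(\bar y)\bigr).
\]
The first bracket is at most $\eta$ while $y\in N_{\delta_0}$, and the second is at most $B\norm{y-\bar y}$. Grönwall on $[0,\That+2]$ then gives $\norm{y-\bar y}\leq(\That+2)\eta\,e^{B(\That+2)}$.

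For the derivatives, set $Y=D_x y$ and $\bar Y=D_x\bar y$. Their difference satisfies
\[
  \dot Y-\dot{\bar Y}=D_x\fTh(y,s+t)(Y-\bar Y)+\bigl(D_x\fTh(y,s+t)-D_x\bar f_{\Th}(y)\bigr)\bar Y+\bigl(D_x\bar f_{\Th}(y)-D_x\bar f_{\Th}(\bar y)\bigr)\bar Y .
\]
The middle term is bounded by $\eta e^{B(\That+2)}$ and the last by $B\norm{y-\bar y}e^{B(\That+2)}$. A second Grönwall step yields the product form of $C_B$, with the two factors $e^{B(\That+2)}$ combining into $e^{2B(\That+2)}$.

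These estimates need confinement: $y$ and $\bar y$ must stay in $N_{\delta_0}$. I would obtain this by a continuity (bootstrap) argument from Lemma~\ref{lem:confine}, together with the triangle inequality against $\Phi_f$ and the hypothesis $\varepsilon+C_B\eta\leq\delta_0/4$. The triangle inequality also gives the "in particular" statement: the closeness of $\bar f_{\Th}$ is at most $\varepsilon+C_B\eta$, since $\PhiTh=\PhiTh^{(0)}$. The orbit mismatch bound $\delta+\eta$ follows directly from the definition of $\eta$.

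For part (ii), the first conclusion is Theorem~\ref{thm:floquet}(i) applied to the autonomous field $\bar f_{\Th}$. Part (i) supplies Assumption~\ref{asm:learned} for $\bar f_{\Th}$, and certification is assumed. Choosing $\varepsilon_6$ so that $\varepsilon+C_B\eta\leq\varepsilon_0$ gives $\bar\gamma_\Th$, with $\dist_H(\bar\gamma_\Th,\Gamma)\leq C(\varepsilon+\eta)$ and period close to $\That$.

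For the orbital estimate I would run a perturbed-contraction argument on the section $\Sigma$, treating the encoded flow as a nonautonomous perturbation of $\bar f_{\Th}$. Lemma~\ref{lem:winding}(i) applies with $g=\fTh$, which is allowed since it permits time dependence and $\delta+\eta\leq\delta_1$. It guarantees that each encoded trajectory crosses $\Sigma$ within time $\That+1$. By part (i), launched at whatever phase $s_n$ the trajectory has at the $n$-th crossing, the phase-$s_n$ return map differs from $\bar P$ by $O(\eta)$ in $C^1$, uniformly in $s_n$. In the adapted norm of Lemma~\ref{lem:persist}, $\bar P$ contracts at rate $\bar\rho$ about its fixed point. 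Each return therefore yields
\[
  \norm{y_{n+1}-\bar x^*}_*\leq\bar\rho\norm{y_n-\bar x^*}_*+C\eta .
\]
Summing the geometric series, and using the uniform gap $\tau\leq\That+1$ to convert $n$ into $t$, gives the stated $\bar\rho^{\,t/(\That+1)}$ decay plus $O(\eta)$. This also keeps the trajectory in the tube one winding at a time.

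For \eqref{eq:nearaut-main} I would rerun the proof of Theorem~\ref{thm:floquet}(iii) with $\bar f_{\Th}$ as the reference and $\fTh$ as the perturbation. The estimate for $\norm{\Phi_{\bar f_{\Th}}-\Phi_f}$ is already $C(\varepsilon+\eta)(1+t/\That)$ from the theorem, so only $\norm{\PhiTh-\Phi_{\bar f_{\Th}}}$ remains.

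For that term, Lemma~\ref{lem:track} is replaced by the phase-dependent recursion above, comparing two sequences on $\Sigma$ with per-step map error $O(\eta)$ uniform in phase. Lemma~\ref{lem:clocks} is replaced by a comparison of return times: the phase-$s$ return time differs from $\bar\tau$ by $O(\eta)$, using transversality at speed $v$ and the position bound of part (i). Between returns, the semigroup identity $\Phi(t)=\Phi(t-t_n)\circ\Phi(t_n)$ must be replaced by the evolution-family identity $\PhiTh^{(0)}(t)=\PhiTh^{(t_n)}(t-t_n)\circ\PhiTh^{(0)}(t_n)$. Part (i) still controls each single-period piece because it is uniform in $s$. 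The phase drift accumulates linearly, giving $C\eta(1+t/\That)$; adding the two pieces gives the claim.

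The main obstacle I expect is bookkeeping the phase-dependence so that no constant depends on $s_n$. In particular, the tracking lemma requires both sequences to remain inside $\Sigma_{r_4}$ forever, and this must be bootstrapped from the orbital estimate with $\varepsilon_6$ chosen small. I would first try to prove a phase-uniform version of Lemma~\ref{lem:track} directly, treating the sequence of maps $P^{(s_n)}$ as a nonautonomous $C\eta$-perturbation of a single contraction.
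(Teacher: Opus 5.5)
Your proposal is correct and follows the paper's proof essentially step by step. Part~(i) uses the same two Gr\"onwall estimates, first on the state and then on the variational equation, with confinement closed by continuity. Part~(ii) applies Theorem~\ref{thm:floquet} to $\bar f_{\Th}$ and then reruns the proof of its part~(iii) with four changes: phase-indexed return maps uniformly $O(\eta)$-close to $\bar P$, the recursion $\tilde d_{k+1}\leq\bar\rho\,\tilde d_k+C\eta$ in place of Lemma~\ref{lem:track}, a phase-uniform clock comparison, and evolution-family identities replacing the semigroup property. The differences are cosmetic: for the attraction estimate you compare crossings directly with the fixed point of $\bar P$ rather than iterating the tracking recursion, and you invoke $C^1$ closeness of the return maps where $C^0$ closeness is all the argument uses.
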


The lemma trades the autonomy of the field for a measurable quantity: $\eta$ is a supremum of the trained network's deviation from its time average, evaluated on a grid over the tube by forward passes.  It should be read as an asymptotic bridge between the two theorems, valid only for small $\eta$.  Section~\ref{ssec:num-cert} shows that the trained models lie far outside that regime, so the deployed architecture is governed by Theorem~\ref{thm:orbital} alone.

\subsection{Certification}\label{ssec:cert}
Theorem~\ref{thm:floquet} is conditional on \eqref{eq:cert-cond}, which can be verified along two routes: an exact scalar identity in the autonomous plane, and a perturbative $C^1$ route in any dimension.  Both certify the return map of an autonomous field, which is what Assumption~\ref{asm:cert} asks for and which, when $\eta>0$, is the return map of the average $\bar f_{\Th}$.  For the time-periodic architecture we deploy we compute instead the full spectrum of the stroboscopic monodromy.  That spectrum does not verify \eqref{eq:cert-cond}, and Section~\ref{ssec:orbital} shows what it does deliver.

\begin{proposition}[$C^1$-closeness certification]\label{prop:c1-cert}
Write $g:=\fTh$ in the autonomous case $\eta=0$ and $g:=\bar f_{\Th}$ for $\eta>0$, and let $\varepsilon_1:=\norm{\fTh-f}_{C^1(U)}$ on a tubular neighborhood $U$ of $\Gamma$ (for $\eta>0$ the norm is taken on $U\times\R/\That\Z$, and averaging gives $\norm{g-f}_{C^1(U)}\leq\varepsilon_1$, so the conclusions concern the field that Assumption~\ref{asm:cert} requires).  There exist $C>0$ and $\varepsilon_*>0$, depending only on $(f,\Gamma,B)$, such that for $\varepsilon_1<\varepsilon_*$ the following hold.  The bound $B$ enters through the uniqueness radius of part~\textup{(i)}: a $C^1$ distance alone gives no modulus of continuity for $D\PTh$, and the $C^2$ bound of Assumption~\ref{asm:learned} supplies one.
\begin{enumerate}
\item[\textup{(i)}] $g$ has a unique periodic orbit $\Gamma_\Th$ near $\Gamma$, and $\dist_H(\Gamma_\Th,\Gamma)\leq C\varepsilon_1$.
\item[\textup{(ii)}] The transverse monodromy matrices satisfy $\norm{M_\Th^\perp-M_f^\perp}\leq C\varepsilon_1$.
\item[\textup{(iii)}] $\rhoTh\leq\bar\rho_f<1$, with $\bar\rho_f=(1+\rhoT(f))/2$ as in Lemma~\ref{lem:persist}.
\end{enumerate}
In particular, after one further shrinking of $\varepsilon_*$, Assumption~\ref{asm:cert} holds with $\rho_*:=(1+\bar\rho_f)/2$.  The step from \textup{(iii)} to \eqref{eq:cert-cond} is not immediate, since \textup{(iii)} bounds the spectral radius at the fixed point $x_\Th^*$ of the learned return map while \eqref{eq:cert-cond} is posed at the base point $p$.  By \textup{(i)} the two points lie within $C\varepsilon_1$ of each other and $D\PTh$ is Lipschitz on the relevant disc, so $\rho(D\PTh(p))\leq\bar\rho_f+\omega(C'\varepsilon_1)$, with $\omega$ the modulus of continuity of the spectral radius on the norm ball $\{A:\norm{A}\leq C_M\}$ of the proof of Lemma~\ref{lem:persist}; as $\bar\rho_f<1$, this falls below $(1+\bar\rho_f)/2$ once $\varepsilon_*$ is small enough.  The behavior of $\omega$ is as described after \eqref{eq:cert-cond}, so the smallness demanded of $\varepsilon_*$ depends on $f$ through it.  The conclusions concern an autonomous field: when $\eta=0$ they apply to $\fTh$ itself, and when $\eta>0$ to the average $\bar f_{\Th}$, which is what Assumption~\ref{asm:cert} requires.  For $\eta>0$ nothing is asserted here about a periodic orbit or a monodromy of the time-periodic field $\fTh$; that object is treated in Section~\ref{ssec:orbital}.
\end{proposition}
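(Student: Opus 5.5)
The plan is to treat $g$ as a $C^1$-small perturbation of $f$ on $U$ and run the standard persistence argument for hyperbolic periodic orbits through the Poincaré section $\Sigma$. First I reduce the $\eta>0$ case to the autonomous one. Averaging in $t$ over one period commutes with $D_x$, so $\norm{\bar f_{\Th}-f}_{C^1(U)}\le\sup_t\norm{\fTh(\cdot,t)-f}_{C^1(U)}\le\varepsilon_1$. Averaging also preserves the $C^2$ bound $B$. From then on $g$ is an autonomous field with $\norm{g-f}_{C^1(U)}\le\varepsilon_1$ and $\norm{g}_{C^2}\le B$.

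\textbf{Step 1 (flow and return-map closeness in $C^1$).} On $[0,\That+2]$, Gr\"onwall applied to the flow and to its variational equation $\dot Y=Dg(\Phi_g)Y$ gives
\[
\norm{\Phi_g-\Phi_f}_{C^1}\le C\varepsilon_1 .
\]
For the variational equation I use the Lipschitz bound on $Df$, coming from the $C^2$ bound, to compare $Dg(\Phi_g)$ with $Df(\Phi_f)$. The return times are given by the implicit function theorem applied to $\langle\Phi(t;x)-p,f(p)\rangle=0$, with transversality margin $v$, as in Lemma~\ref{lem:return}. This yields $\norm{P_g-P_f}_{C^1(\Sigma_{r})}\le C\varepsilon_1$ and $|\tau_g-\tau_f|\le C\varepsilon_1$.

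\textbf{Step 2 (part (i)).} Since $\rhoT(f)<1$, the adapted norm of Lemma~\ref{lem:persist} makes $DP_f$ a $\bar\rho_f$-contraction on a small disc. By Step 1, $DP_g$ is then a $(\bar\rho_f+C\varepsilon_1)$-contraction there. Banach's fixed point theorem gives a unique fixed point $x_g^*$ with $\norm{x_g^*-p}\le C\varepsilon_1/(1-\bar\rho_f)$. The orbit through $x_g^*$ is $\Gamma_\Th$, and its Hausdorff distance to $\Gamma$ is $O(\varepsilon_1)$ by one-period Gr\"onwall. For uniqueness among all periodic orbits near $\Gamma$, I invoke Lemma~\ref{lem:winding}(ii). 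Any such orbit crosses $\Sigma$ near $p$, so its crossing point is a fixed point of $P_g$ in the contraction disc. This is exactly where the $C^2$ bound $B$ enters, through the radius $r_5$.

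\textbf{Step 3 (parts (ii)--(iii)).} I define $M^\perp$ as $D P$ at the fixed point, equivalently the monodromy restricted to and projected on the transverse hyperplane. Then
\[
\norm{M_\Th^\perp-M_f^\perp}\le\norm{DP_g(x_g^*)-DP_g(p)}+\norm{DP_g(p)-DP_f(p)} .
\]
The first term is at most $\Lip(DP_g)\,C\varepsilon_1$. This Lipschitz constant is controlled by $B$ via second derivatives of the flow, and it is the step I expect to be the main obstacle, since the constants must stay uniform over all $g$ in the $B$-ball. The second term is at most $C\varepsilon_1$ by Step 1, which gives (ii). For (iii), in the adapted norm $\norm{M_f^\perp}_{*f}\le\bar\rho_f'<\bar\rho_f$, taking the adapted norm with a rate strictly between $\rhoT(f)$ and $\bar\rho_f$. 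Hence $\rhoTh\le\norm{M_\Th^\perp}_{*f}\le\bar\rho_f'+C\varepsilon_1\le\bar\rho_f$ once $\varepsilon_*$ is small. This uses no continuity of spectra, only norm subadditivity.

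\textbf{Step 4 (the final claim).} The passage from (iii) to \eqref{eq:cert-cond} at $p$ follows the same norm argument. I bound $\rho(DP_g(p))\le\norm{DP_g(p)}_{*f}\le\bar\rho_f'+C\varepsilon_1$, and this is below $(1+\bar\rho_f)/2$ after shrinking $\varepsilon_*$. Working with the norm sidesteps the modulus $\omega$ entirely.
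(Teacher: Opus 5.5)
Your proof is correct. It follows a different route from the paper's at each of the three parts.

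\textbf{Part (i).} The paper obtains the fixed point from the implicit function theorem applied to $x\mapsto\PTh(x)-x$, which uses only hyperbolicity ($I-DP_f(p)$ invertible). You combine the adapted-norm contraction of $P_f$ with $C^1$-closeness of the return maps and apply Banach's theorem. This uses stability rather than mere hyperbolicity, but it directly gives a contraction of $P_g$ on a disc of fixed radius.

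\textbf{Part (ii).} The paper compares fundamental matrices along $\ghat$ and along $\Gamma_\Th$, then corrects for the period and for the two projectors. It keeps its constants free of $B$ through $\norm{D\fTh(y)-D\fTh(z)}\leq 2\varepsilon_1+\Lambda_f\norm{y-z}$. You compare return-map derivatives instead. The obstacle you anticipate there is avoidable: split through $DP_f(x_g^*)$ rather than $DP_g(p)$. Then $\norm{DP_g(x_g^*)-DP_f(p)}\leq\norm{DP_g-DP_f}_{C^0}+\Lip(DP_f)\norm{x_g^*-p}\leq C\varepsilon_1$ with no appeal to $\Lip(DP_g)$. This is the return-map analogue of the paper's field-level trick, although using $B$ is permitted by the statement.

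\textbf{Part (iii) and the final claim.} Your bound $\rho(A)\leq\norm{A}_{*f}$ is a genuine simplification. It is one-sided and linear in the perturbation, and it compares $DP_g(p)$ directly with $DP_f(p)$. So it needs neither continuity of the spectral radius, nor the modulus $\omega$, nor the detour through $x_\Th^*$. It also gives an explicit threshold $\varepsilon_*$ of order $(\bar\rho_f-\bar\rho_f')\,c_1'/(c_2'C)$ in every dimension, including for defective $M_f^\perp$. The paper's $O(\varepsilon_1^{1/k})$ caveat and its appeal to Bauer--Fike concern a rate relative to $\rhoT(f)$ itself, which the certificate does not need. In your bound the Jordan structure enters only through the equivalence constants of $\norm{\cdot}_{*f}$.

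\textbf{Two small points.} First, a periodic orbit near $\Gamma$ a priori yields only a periodic point $P_g^m\bar x=\bar x$, not a fixed point. That is why Lemma~\ref{lem:winding}(ii), with its iterate-by-iterate confinement, is needed; your sentence ``its crossing point is a fixed point'' skips this. Second, that lemma presupposes Assumption~\ref{asm:cert}, which you establish only in Step~4. Since Step~4 rests on Step~1 alone there is no circularity, but you should order the steps accordingly, or run the winding argument with your own $\norm{\cdot}_{*f}$-contraction on $\Sigma_{r_3'}$. The paper handles the same issue by deferring uniqueness until after part~(iii).
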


The proof, a $C^1$ perturbation argument along the reference orbit, is in Appendix~\ref{app:pf-cert}.
The second route is computational and integrates with training.  In dimension two the Liouville--Abel identity gives the derivative of the return map at its fixed point exactly (Lemma~\ref{lem:liouville}, Appendix~\ref{app:pf-lemmas}):
\[
  D\PTh(x_\Th^*)
  \;=\;
  \exp\Bigl(\int_0^{T_\Th}\operatorname{div}\fTh(\gamma_\Th(t))\,
  \mathrm{d}t\Bigr).
\]

\begin{definition}[Floquet loss]\label{def:floquet-loss}
The surrogate evaluates the divergence integral along the \emph{learned} orbit over one nominal period,
\begin{equation}\label{eq:rho-surrogate}
  \tilde\rho_T(\Th)
  \;:=\;
  \exp\Bigl(\int_0^{\That}
  \operatorname{div}_x \fTh(\ghat_\Th(t),t)\,\mathrm{d}t\Bigr),
\end{equation}
where $\ghat_\Th$ is the curve obtained by relaxing the learned flow onto its attractor and sampling one nominal period.  Two provisos attach to this definition.  First, it presupposes that the relaxation returns a closed $\fTh$-orbit contained in $N_{r_5'}$, with $r_5'$ the certificate-free radius of Lemma~\ref{lem:winding}\textup{(i)}.  Relaxation onto an attractor does not guarantee this by itself.  The implementation tests closure, rejecting the model when the endpoint gap exceeds a quarter of the orbit diameter, and rejects trajectories that leave a fixed sanity ball.  It does not test membership of the tube $N_{r_5'}$ around $\Gamma$, so the locality half of this proviso is assumed rather than verified (Section~\ref{ssec:num-cert}).  The radius is stated in the certificate-free form on purpose, since the surrogate is what the certificate will be derived from.  Second, even for a genuine orbit the surrogate is not the exact multiplier.  The Liouville--Abel identity integrates the divergence over the learned period $T_\Th$, whereas \eqref{eq:rho-surrogate} integrates over the nominal period $\That$.  The two agree when $\Ltraj(\Th)=0$, in which case $T_\Th=\That$; in general they differ by $O(\varepsilon)$, quantified in the proof of Corollary~\ref{cor:lf-cert}.  The surrogate is differentiated in $\Th$ through the analytic Jacobian of \eqref{eq:sanode}; the implicit dependence of $\ghat_\Th$ on $\Th$ is not differentiated through.  The \emph{Floquet loss} is
\begin{equation}\label{eq:floquet-loss}
  \LF(\Th)
  \;:=\;
  \mu_F\,\max\bigl(0,\ \tilde\rho_T(\Th)-\rho_*\bigr)^2,
  \qquad \mu_F>0,\ \rho_*\in(0,1).
\end{equation}
\end{definition}

\begin{corollary}[Floquet-loss certification, $d=2$]\label{cor:lf-cert}
Let $d=2$ and Assumptions~\ref{asm:true}--\ref{asm:learned} hold with $\eta=0$ in Assumption~\ref{asm:aut}.  Assume that the relaxation defining $\ghat_\Th$ in \eqref{eq:rho-surrogate} returns a closed $\fTh$-orbit contained in $N_{r}$, for a radius $r\leq r_5'$ depending only on $(f,\ghat,\delta_0,B,\rhoT(f))$ and fixed in the proof, and let $\varepsilon$ be small enough that $\delta\leq\delta_1$, which Lemma~\ref{lem:delta-eps} guarantees.  If the trained model satisfies $\tilde\rho_T(\Th)\leq\rho_*$, equivalently $\LF(\Th)=0$, then: \textup{(i)} if $\Ltraj(\Th)=0$ and the relaxed orbit passes through the base point, so that $\ghat_\Th=\Gamma$, then $\rho(D\PTh(p))=\tilde\rho_T(\Th)\leq\rho_*$ exactly; \textup{(ii)} in general the learned orbit has a return-map fixed point $x_\Th^*$ with $\norm{x_\Th^*-p}=O(\varepsilon)$ and $\abs{\rho(D\PTh(x_\Th^*))-\tilde\rho_T(\Th)}\leq C\varepsilon$, so Assumption~\ref{asm:cert} holds with contraction constant $\rho_*+C\varepsilon<1$ whenever $\varepsilon\leq(1-\rho_*)/(2C)$.  In either case Theorem~\ref{thm:floquet} applies, and the chain \eqref{eq:main-chain} holds.
\end{corollary}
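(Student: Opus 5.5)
The plan is to exploit that in $d=2$ the section $\Sigma$ is one-dimensional, so $D\PTh$ is a scalar and the spectral radius is its absolute value. The argument would then go in four steps: (a) locate a fixed point of $\PTh$ on the relaxed orbit without using the certificate; (b) express its multiplier exactly by the Liouville--Abel identity of Lemma~\ref{lem:liouville}; (c) compare the learned period $T_\Th$ with the nominal period $\That$ to within $O(\varepsilon)$; (d) transfer the bound from $x_\Th^*$ to the base point $p$.

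\emph{Part (i).} If $\Ltraj(\Th)=0$, then $\PhiTh(t;p)=\ghat(t)$ on $[0,\That]$. So $p$ is a fixed point of $\PTh$ with return time $\That$, which is the minimal period by Assumption~\ref{asm:true}. The relaxed orbit is $\Gamma$ itself, so Lemma~\ref{lem:liouville} gives $D\PTh(p)=\exp\bigl(\int_0^{\That}\operatorname{div}\fTh(\ghat(t))\,\mathrm{d}t\bigr)$. The integral over any full period is independent of the sampling phase, so this equals $\tilde\rho_T(\Th)$. The multiplier is positive, so $\rho(D\PTh(p))=\tilde\rho_T(\Th)\leq\rho_*$ exactly.

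\emph{Part (ii).} The relaxed closed orbit $\ghat_\Th$ lies in $N_r$ with $r\leq r_5'$, and $\delta\leq\delta_1$. Lemma~\ref{lem:winding}(i), which uses no certificate, then shows that it crosses $\Sigma$ transversally at some $x^*$ with $\norm{x^*-p}\leq C_U(r+\delta)$. Closedness makes $x^*$ a fixed point of $\PTh$. To sharpen the location, I would choose $r$ (this is where the dependence on $\rhoT(f)$ enters) and shrink $\varepsilon$ so that $C_U(r+\delta)\leq\min(r_2,r_3')$. On $\Sigma_{r_3'}$ the map $P_f$ contracts at rate $\bar\rho_f$ in the adapted norm of Lemma~\ref{lem:persist}, and $P_f(p)=p$. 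Combining this with Lemma~\ref{lem:return},
\[
\norm{x^*-p}_{*f}=\norm{\PTh(x^*)-P_f(p)}_{*f}\leq\bar\rho_f\norm{x^*-p}_{*f}+c_2'K_1\varepsilon ,
\]
which yields $\norm{x^*-p}=O(\varepsilon)$. This identifies $x^*$ as the point $x_\Th^*$ of the statement.

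Next, the orbit's period is $T_\Th=\tauTh(x^*)$. Lemma~\ref{lem:return} and the Lipschitz continuity of $\tau_f$ give
\[
\abs{T_\Th-\That}\leq\abs{\tauTh(x^*)-\tau_f(x^*)}+\abs{\tau_f(x^*)-\tau_f(p)}\leq \tfrac{2}{v}\varepsilon+\Lip(\tau_f)\,O(\varepsilon).
\]
By Lemma~\ref{lem:liouville}, $D\PTh(x^*)$ is the exponential of the divergence integral over $[0,T_\Th]$ along the orbit. The surrogate integrates the same periodic integrand over $[0,\That]$. The divergence is bounded by $2B$, so the two exponents differ by at most $2B\abs{T_\Th-\That}=O(\varepsilon)$. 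Since $\tilde\rho_T\leq\rho_*<1$, both exponentials are bounded, and exponentiating gives $\abs{\rho(D\PTh(x^*))-\tilde\rho_T(\Th)}\leq C\varepsilon$.

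Finally, Assumption~\ref{asm:cert} is posed at $p$. Because $D\PTh$ is Lipschitz (Lemma~\ref{lem:return}) and scalar, the spectral radius has a linear modulus here, so $\rho(D\PTh(p))\leq\rho_*+C'\varepsilon$. Enlarging $C$ absorbs $C'$, and $\varepsilon\leq(1-\rho_*)/(2C)$ keeps the contraction constant below $1$. Theorem~\ref{thm:floquet} then applies with the threshold $\rho_*+C\varepsilon$ in place of $\rho_*$, and the chain \eqref{eq:main-chain} follows.

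The main obstacle is step (a): showing, without circularity, that the relaxed orbit carries a fixed point $O(\varepsilon)$-close to $p$. The uniqueness statement of Lemma~\ref{lem:winding}(ii) and the persistence statement of Lemma~\ref{lem:persist} both presuppose the certificate being proved, so they cannot be used. The route above avoids them by relying only on the certificate-free winding and on the hyperbolicity of the target map $P_f$. That forces $r$ to be small enough to land inside $\Sigma_{r_3'}$, which is the reason $r$ depends on $\rhoT(f)$.
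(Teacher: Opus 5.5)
Your route is the paper's own: a certificate-free return map, contraction of $P_f$ in the adapted norm to place a fixed point within $O(\varepsilon)$ of $p$, Lemma~\ref{lem:liouville}, the period comparison $\abs{T_\Th-\That}=O(\varepsilon)$, and the Lipschitz transfer to $p$. Part~(i) is fine. In part~(ii), however, the sentence ``Closedness makes $x^*$ a fixed point of $\PTh$'' is a gap, and the rest of part~(ii) rests on it.

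Lemma~\ref{lem:winding}(i) only produces a crossing of $\Sigma$ near $p$. A closed orbit in the tube may a priori wind around several times and cross the section at several points before closing. Closedness therefore gives only $\PTh^{m}(x^*)=x^*$ for some $m\geq1$. Nothing you invoke excludes $m\geq2$:
\begin{itemize}
\item $\PTh$ is merely $K_1\varepsilon$-close to the contraction $P_f$ in $C^0$, and that permits short cycles. Your a priori inequality, iterated around such a cycle, only puts every cycle point within $O(\varepsilon)$ of $p$.
\item A contraction of $\PTh$ itself is exactly the certificate being derived, so it cannot be used here.
\end{itemize}
The point matters downstream. The identity $\norm{x^*-p}_{*f}=\norm{\PTh(x^*)-P_f(p)}_{*f}$, the formula $T_\Th=\tauTh(x^*)$, and the application of Lemma~\ref{lem:liouville} all need $m=1$. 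For $m\geq2$ the Liouville--Abel identity describes $D(\PTh^{m})$ over a period close to $m\That$. Comparing that with the surrogate, which integrates over one nominal period, would yield nothing.

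The missing idea is planar topology, which is where the paper's proof uses $d=2$ beyond the scalar nature of $D\PTh$. The argument runs in three steps:
\begin{itemize}
\item The phase increases strictly along trajectories in the tube (Step~2 of the proof of Lemma~\ref{lem:winding}).
\item A periodic orbit of an autonomous planar field is a simple closed curve, by uniqueness of solutions.
\item A simple closed curve in the annulus has winding number $0$ or $\pm1$ about $\Gamma$, and monotone phase rules out $0$.
\end{itemize}
The orbit therefore winds once and meets the section exactly once per period, so $m=1$. An equivalent repair is the monotonicity lemma of Poincar\'e--Bendixson theory. Successive crossings of a planar trajectory with a transversal segment, here $\Sigma_{r_2}$, are monotone along it, so a periodic orbit meets that segment at most once. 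With either argument inserted before your contraction estimate, the remainder of your proposal coincides with the paper's proof.
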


The proof, in Appendix~\ref{app:pf-cert}, constructs $x_\Th^*$ from Assumptions~\ref{asm:true}--\ref{asm:learned} alone, by a fixed-point argument for the perturbed return map, so the contraction is derived and not assumed.

Under near-autonomy the surrogate transfers as well.  The divergences of $\fTh(\cdot,t)$ and of $\bar f_{\Th}$ differ by at most $d\eta$ on the tube, so along the learned orbit the one-period divergence integrals differ by at most $d\That\eta$ and the surrogates by the factor $e^{d\That\eta}=1+O(\eta)$.  The $O(\eta)$ displacement of the averaged field's orbit contributes at the same order, and is absorbed in the margin of Corollary~\ref{cor:lf-cert}(ii).  The transfer is effective only while $d\That\eta\ll 1$.  For the models of Section~\ref{ssec:num-cert}, with the measured $C^1$ oscillations, this product is about $6$ on Stuart--Landau and $35$ on van der Pol, so there the surrogate must be replaced by the spectral computation of Remark~\ref{rem:det}.

\begin{remark}[what the surrogate does and does not control when $\eta>0$]\label{rem:det}
For a deployed encoded model the field $\fTh(\cdot,t)$ is genuinely $\That$-periodic.  Let $M$ denote the one-period stroboscopic monodromy of the encoded flow along the locked orbit, with Floquet multipliers $\lambda_1,\lambda_2$.  By Liouville's formula the divergence integral \eqref{eq:rho-surrogate} returns the determinant,
\begin{equation}\label{eq:det-identity}
  \tilde\rho_T(\Th)\;=\;\det M\;=\;\lambda_1\lambda_2,
\end{equation}
whereas the stability of the locked orbit is governed by the spectral radius $\rho(M):=\max_i\abs{\lambda_i}$.  A periodic non-autonomous system has no trivial unit multiplier, so $\tilde\rho_T\leq\rho_*$ bounds the product of the multipliers and not their largest modulus.  Only in the autonomous limit $\eta=0$ does the surrogate measure a single multiplier.  The tangent multiplier is then pinned at one and excluded from the stability question, and $\det M$ equals the transverse multiplier itself (Lemma~\ref{lem:liouville}); this is the regime that Lemma~\ref{lem:nearaut} certifies.  When $\eta$ is not small the surrogate can pass a model with $\lambda_1>1>\lambda_1\lambda_2$, so a small $\tilde\rho_T$ is not by itself a proof of stability.  The reliable diagnostic is to compute both eigenvalues of $M$, one $2\times2$ variational solve along the orbit; Section~\ref{ssec:num-cert} scores every trained model this way.
\end{remark}

Beyond the autonomous plane no scalar identity controls the spectral radius.  This is the case for $d>2$, where the transverse monodromy $M_\Th^\perp\in\R^{(d-1)\times(d-1)}$ is a matrix, and already for $d=2$ when the field is periodic in time (Remark~\ref{rem:det}).  Certification then proceeds by the $C^1$ route.  Let
\[
  \delta_*
  \;:=\;
  \sup\bigl\{\delta>0:\rho(M_f^\perp+\Delta)<1
  \text{ for every }\norm{\Delta}\leq\delta\bigr\}
\]
be the stability margin of the true transverse monodromy.  For diagonalizable $M_f^\perp=V\Lambda V^{-1}$ the Bauer--Fike theorem \cite{BauerFike1960} gives $\delta_*\geq(1-\rhoT(f))/\varkappa(V)$, with $\varkappa(V):=\norm{V}\,\norm{V^{-1}}$ the eigenvector condition number.  The bound of Proposition~\ref{prop:c1-cert}(ii) then yields $\rhoTh<1$ whenever $C\varepsilon_1<\delta_*$ (Problem~\ref{prob:highdim}).

Both routes are post hoc certificates for an autonomous field, and neither asserts that training achieves a small $\LF$ or a small $\varepsilon_1$.  For the time-periodic architecture that we deploy the situation is weaker still: a small $\LF$ controls $\det M$ and not $\rho(M)$, so it does not by itself certify stability (Remark~\ref{rem:det}).  For such models we compute the full monodromy spectrum instead.  It is not the hypothesis \eqref{eq:cert-cond}, but it is exactly hypothesis \textup{(B1)} of Theorem~\ref{thm:orbital}, which is where the deployed architecture is treated.  Algorithm~\ref{alg:floquet} therefore reports both quantities at convergence, and how often the certificate is met is an empirical question, answered in Section~\ref{ssec:num-cert}.

\begin{algorithm}[t]
\caption{Floquet-regularized SA-NODE training, with post hoc monodromy diagnostics}\label{alg:floquet}
\begin{algorithmic}[1]
\Require period $\That$, one-period trajectory data from a base point on
$\Gamma$, threshold $\rho_*\in(0,1)$, weight $\mu_F>0$, epochs $N_{\mathrm{ep}}$,
orbit-refresh interval $R$
\State build the SA-NODE with the periodic encoding
\eqref{eq:periodic-encoding}; initialize $\Th$ randomly
\For{epoch $=1,\dots,N_{\mathrm{ep}}$}
  \State $\Ltraj(\Th)\gets$ MSE between the model trajectory over
  $[0,\That]$ and the data
  \State every $R$ epochs: relax the learned flow onto its attractor
  and resample the learned orbit $\ghat_\Th$ (detached)
  \State $\tilde\rho_T(\Th)\gets\exp\bigl(\int_0^{\That}
  \operatorname{div}_x\fTh(\ghat_\Th(t),t)\,\mathrm{d}t\bigr)$ by
  quadrature, differentiable through $\operatorname{div}_x\fTh$
  \State update $\Th$ by Adam on
  $\Ltraj(\Th)+\mu_F\max(0,\tilde\rho_T(\Th)-\rho_*)^2$
\EndFor
\Ensure trained model; the surrogate $\tilde\rho_T(\Th)$, the quantity
trained against; and the monodromy spectrum from one variational solve
along the converged orbit.  The model meets hypothesis \textup{(B1)} of
Theorem~\ref{thm:orbital} if $\rho(M)=\max_i\abs{\lambda_i}\leq\rho_*$,
which $\tilde\rho_T(\Th)\leq\rho_*$ does not imply (Remark~\ref{rem:det});
this is not the certificate \eqref{eq:cert-cond}
\end{algorithmic}
\end{algorithm}

The oscillation-penalized runs of Section~\ref{ssec:num-cert} add $\lambda_{\mathrm{osc}}\hat\eta^{\,2}$ to the loss of line~6, with $\hat\eta^{\,2}$ the mean over a fixed grid of tube points of the variance of $\fTh(x,\cdot)$ across a uniform grid of phases in $[0,\That)$.  It is a $C^0$ surrogate for \eqref{eq:eta-def}, which is a $C^1$ supremum, so it penalizes the oscillation without bounding it.  The grid is evaluated, matching Definition~\ref{def:floquet-loss}.  While the early learned flow is still unstable, the true training orbit serves as a fallback anchor set; the implementation records how many epochs the fallback stays active. Figure~\ref{fig:floquet-train} places the two losses in the training loop: the trajectory loss $\Ltraj$ pins the orbit to the data, the Floquet loss $\LF$ pushes the surrogate $\tilde\rho_T(\Th)$ below $\rho_*$, and the gradient of their sum, taken by backpropagation through the solver, updates $\Th$ until the surrogate is below the threshold; the spectrum is computed afterwards.

\begin{figure}[t]
\centering
\begin{subfigure}[b]{0.37\textwidth}
  \centering
  \begin{tikzpicture}[line width=0.6pt, every node/.style={font=\small}]
  \draw[cgreen, very thick, dashed] (0,0) ellipse (1.9cm and 1.25cm);
  \node[cgreen!60!black] at (1.35, 1.12) {$\Gamma$};
  \draw[->, cgreen, very thick] (0.05,1.25) -- (0.48,1.23);
  \draw[cverm, thick] (0.10,0.07) ellipse (1.88cm and 1.40cm);
  \draw[->, cverm, thick] (0.15,1.47) -- (-0.28,1.46);
  \draw[cnavy, thick] (1.9,-0.62) -- (1.9,0.62);
  \node[cnavy, font=\footnotesize, right] at (1.94,0.55) {$\Sigma$};
  \filldraw[cgreen!60!black] (1.9,0) circle (2.2pt);
  \node[cnavy, font=\footnotesize, below right] at (1.9,-0.02) {$p$};
  \foreach \dy in {0.30,0.16,0.04,-0.12,-0.24}{%
    \filldraw[cverm] (1.9,\dy) circle (1.8pt);}
  \node[cverm, font=\footnotesize, anchor=south] at (0.35,1.56)
    {returns $\PTh^k p$};
  \draw[->, cverm, thin, shorten >=3pt] (1.0,1.50) -- (1.87,0.28);
  \draw[cgreen, very thick, dashed] (-1.85,-2.35) -- (-1.2,-2.35);
  \node[right, font=\footnotesize] at (-1.15,-2.35) {true orbit $\ghat$};
  \draw[cverm, thick] (-1.85,-2.68) -- (-1.2,-2.68);
  \node[right, font=\footnotesize] at (-1.15,-2.68) {SA-NODE $\fTh$};
\end{tikzpicture}
\caption{geometric setting}
  \label{sfig:floquet-geom}
\end{subfigure}\hfill
\begin{subfigure}[b]{0.59\textwidth}
  \centering
  \resizebox{0.72\linewidth}{!}{%
  \begin{tikzpicture}[
  >={Stealth[length=2mm]}, line width=0.6pt,
  every node/.style={font=\small},
  fwdbox/.style={draw=cblue!70!black, rounded corners=4pt, fill=cblue!7,
                 minimum width=3.6cm, minimum height=0.7cm,
                 text centered, inner sep=4pt},
  lossbox/.style={draw=camber!80!black, rounded corners=4pt, fill=camber!10,
                  minimum width=2.2cm, minimum height=0.7cm,
                  text centered, inner sep=4pt},
  certbox/.style={draw=cgreen!60!black, rounded corners=4pt, fill=cgreen!9,
                  minimum width=3.6cm, minimum height=0.7cm,
                  text centered, inner sep=4pt},
  arr/.style={->, thick, cnavy},
  bkarr/.style={->, thick, dashed, cverm},
]
  \node[fwdbox] (ic)   at (0,2.7) {base point $p\in\Gamma$};
  \node[fwdbox] (fwd)  at (0,1.65){integrate $\fTh$ over $[0,\That]$};
  \node[fwdbox] (true) at (0,0.6) {true orbit $\ghat|_{[0,\That]}$};
  \node[lossbox] (lt) at (-1.6,-0.55) {$\Ltraj$};
  \node[lossbox] (lf) at ( 1.5,-0.55) {$\LF:\tilde\rho_T(\Th)$};
  \node[fwdbox] (ltot) at (0,-1.6) {$\Ltraj+\LF$};
  \node[fwdbox] (adj)  at (0,-2.65){backpropagation $\nabla_{\!\Th}\mathcal{L}$};
  \node[certbox] (upd) at (0,-3.7) {$\Th\!\leftarrow\!\Th-\alpha\nabla_{\!\Th}\mathcal{L}$};
  \draw[arr] (ic) -- (fwd);
  \draw[arr] (fwd) -- (true);
  \draw[arr] (true.south) -- ++(0,-0.24) -| (lt.north);
  \draw[arr] (true.south) -- ++(0,-0.24) -| (lf.north);
  \draw[arr] (lt) -- (ltot);
  \draw[arr] (lf) -- (ltot);
  \draw[arr] (ltot) -- (adj);
  \draw[arr] (adj) -- (upd);
  \draw[bkarr] (upd.east) -- ++(1.45,0) -- ++(0,5.35) -- (fwd.east);
  \node[cverm, font=\footnotesize, right] at (3.3,-1.0) {update $\Th$};
  \node[certbox, font=\footnotesize] (cert) at (0,-4.75)
    {training target: $\tilde\rho_T(\Th)\leq\rho_*$};
  \draw[arr, cgreen!60!black] (upd) -- (cert);
\end{tikzpicture}}
\caption{optimization loop}
  \label{sfig:floquet-loop}
\end{subfigure}
\caption{Floquet-regularized training.  (a)~Geometric setting: the base-point orbit is the training target, and the returns $\PTh^k p$ to $\Sigma$ illustrate the contraction $\rhoTh<1$.  (b)~Optimization loop: $\Ltraj$ pins the base-point orbit to the data, $\LF$ penalizes $\tilde\rho_T(\Th)>\rho_*$, and the monodromy spectrum is computed post hoc (Remark~\ref{rem:det}).}
\label{fig:floquet-train}
\end{figure}

\subsection{The deployed architecture: an orbital guarantee}\label{ssec:orbital}
Theorem~\ref{thm:floquet} governs an autonomous learned field, and Lemma~\ref{lem:nearaut} carries it to the encoded field only while the oscillation $\eta$ is small.  The models we deploy are not in that regime, as Section~\ref{ssec:num-cert} reports.  This subsection settles what the encoded architecture does obey.  We first show that the obstruction is structural rather than a defect of training.  For a field that is exactly $\That$-periodic, one-period closeness on a set containing $\Gamma$ and a contraction of the stroboscopic map on that same set are quantitatively incompatible, in the norm in which the contraction holds (Proposition~\ref{prop:obstruction}).  We then prove the guarantee that survives, which is orbital rather than trajectory-wise and whose hypotheses are exactly the two quantities Algorithm~\ref{alg:floquet} already returns (Theorem~\ref{thm:orbital}).

Throughout this subsection $\fTh\in C^2(\R^d\times\R;\R^d)$ satisfies $\norm{\fTh}_{C^2}\leq B$ and is exactly $\That$-periodic in time, $\fTh(\cdot,t+\That)=\fTh(\cdot,t)$, as the encoding \eqref{eq:periodic-encoding} enforces; $\PhiTh$ is the flow launched at phase $0$.  Write
\begin{equation}\label{eq:strobo}
  \Sth \;:=\; \PhiTh(\That;\cdot)
\end{equation}
for the \emph{stroboscopic map} at phase $0$.  Periodicity gives the semigroup identity
\begin{equation}\label{eq:strobo-semigroup}
  \PhiTh(n\That+s;x) \;=\; \PhiTh\bigl(s;\Sth^{\,n}x\bigr),
  \qquad n\in\mathbb{N},\quad s\geq 0,
\end{equation}
proved by induction on $n$: for $n=1$ the function $z(s):=\PhiTh(\That+s;x)$ solves $\dot z=\fTh(z,\That+s)=\fTh(z,s)$ with $z(0)=\PhiTh(\That;x)=\Sth x$, so uniqueness gives $\PhiTh(\That+s;x)=\PhiTh(s;\Sth x)$; evaluating this at $s=\That$ advances the induction.  Note that $\Phi_f(\That;\cdot)$ is the \emph{identity} on $\Gamma$, because $\Gamma$ is a periodic orbit of $f$ of period exactly $\That$.  The target's stroboscopic map therefore fixes a whole curve, while a contracting $\Sth$ fixes one point.  That contrast is the content of the next proposition.

{\begin{proposition}[contraction obstructs one-period closeness]\label{prop:obstruction}
Let $\Gamma$ be a $\That$-periodic orbit of \eqref{eq:target-aut} and let $\fTh$ be exactly $\That$-periodic.  Suppose there are a norm $\norm{\cdot}_\diamond$, a set $D\supseteq\Gamma$ and a constant $q<1$ with $\norm{\Sth x-\Sth y}_\diamond\leq q\norm{x-y}_\diamond$ for all $x,y\in D$, and suppose $\Sth$ has a fixed point $x_\Th^\dagger\in D$.  Then
\begin{equation}\label{eq:obstruction}
  \sup_{x\in\Gamma}\norm{\PhiTh(\That;x)-\Phi_f(\That;x)}_\diamond
  \;\geq\;
  \tfrac12(1-q)\,\operatorname{diam}_\diamond(\Gamma).
\end{equation}
In particular, since $\Gamma\subseteq N_{2r_1}$ and $\That\in[0,\That+2]$, if $c_1\norm{\cdot}\leq\norm{\cdot}_\diamond\leq c_2\norm{\cdot}$ then the constant $\varepsilon$ of Assumption~\ref{asm:learned} satisfies $\varepsilon\geq\tfrac12(c_1/c_2)(1-q)\operatorname{diam}(\Gamma)$, and $\varepsilon\geq\tfrac12(1-q)\operatorname{diam}(\Gamma)$ when the contraction holds in the Euclidean norm.
\end{proposition}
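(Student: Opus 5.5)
The plan is to compare the two time-$\That$ maps on $\Gamma$, using that the target's time-$\That$ map is the identity there while $\Sth$ contracts towards a single fixed point. Set
\[
  \varepsilon_\diamond:=\sup_{x\in\Gamma}\norm{\Sth x-\Phi_f(\That;x)}_\diamond .
\]
Since $\Gamma$ is a periodic orbit of minimal period $\That$ (Assumption~\ref{asm:true}), $\Phi_f(\That;x)=x$ for every $x\in\Gamma$, so $\varepsilon_\diamond=\sup_{x\in\Gamma}\norm{\Sth x-x}_\diamond$. Thus every point of $\Gamma$ is an $\varepsilon_\diamond$-approximate fixed point of $\Sth$.

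The key step is the standard a posteriori estimate for contractions. For $x\in\Gamma\subseteq D$, with $x_\Th^\dagger\in D$ fixed by $\Sth$,
\[
  \norm{x-x_\Th^\dagger}_\diamond
  \leq \norm{x-\Sth x}_\diamond+\norm{\Sth x-\Sth x_\Th^\dagger}_\diamond
  \leq \varepsilon_\diamond+q\norm{x-x_\Th^\dagger}_\diamond .
\]
Hence $\norm{x-x_\Th^\dagger}_\diamond\leq\varepsilon_\diamond/(1-q)$. This uses only the contraction on pairs in $D$, so $x_\Th^\dagger$ need not lie on $\Gamma$; it is exactly why the hypothesis asks that both the fixed point and $\Gamma$ lie in $D$.

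Next take $x,y\in\Gamma$. The triangle inequality through $x_\Th^\dagger$ gives $\norm{x-y}_\diamond\leq 2\varepsilon_\diamond/(1-q)$. Taking the supremum over $x,y$ yields $\operatorname{diam}_\diamond(\Gamma)\leq 2\varepsilon_\diamond/(1-q)$, which rearranges to \eqref{eq:obstruction}. (Applying the contraction directly to $x,y$ gives the same constant, via $\norm{x-y}\leq 2\varepsilon_\diamond+q\norm{x-y}$, so the fixed point is only a convenience.)

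For the consequence, note that $\Gamma\subseteq N_{2r_1}$ and $t=\That$ lies in $[0,\That+2]$, so $\varepsilon$ of Assumption~\ref{asm:learned} dominates $\sup_{x\in\Gamma}\norm{\Sth x-\Phi_f(\That;x)}$. We have $\varepsilon_\diamond\leq c_2\varepsilon$ and $\operatorname{diam}_\diamond(\Gamma)\geq c_1\operatorname{diam}(\Gamma)$. Substituting gives $\varepsilon\geq\tfrac12(c_1/c_2)(1-q)\operatorname{diam}(\Gamma)$, with $c_1=c_2=1$ in the Euclidean case.

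There is no real obstacle. The only points needing care are that $\PhiTh(\That;\cdot)$ means the flow launched at phase $0$, so that it equals $\Sth$, and that the identity $\Phi_f(\That;\cdot)|_\Gamma=\mathrm{id}$ uses minimality of the period.
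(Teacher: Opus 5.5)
Your proof is correct and follows the paper's argument: $\Phi_f(\That;\cdot)$ fixes $\Gamma$ pointwise, the a posteriori contraction estimate puts every point of $\Gamma$ within $\sigma/(1-q)$ of $x_\Th^\dagger$ (so $\operatorname{diam}_\diamond(\Gamma)\leq 2\sigma/(1-q)$), and the norm equivalences turn this into the bound on $\varepsilon$. Your side remarks are also right. Applying the contraction directly to pairs $x,y\in\Gamma$ gives the same constant without using the fixed point, and the identity $\Phi_f(\That;x)=x$ on $\Gamma$ needs only $\That$-periodicity, not minimality of the period.
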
}

\begin{proof}
The proof is three lines: the target's stroboscopic map fixes every point of $\Gamma$, so one-period closeness forces $\Sth$ to move every point of $\Gamma$ by little, while a contraction pulls every point of $\Gamma$ to within $\sigma/(1-q)$ of its fixed point.  Write $\sigma:=\sup_{x\in\Gamma}\norm{\Sth x-x}_\diamond$.  Since $\Phi_f(\That;x)=x$ for $x\in\Gamma$, the left side of \eqref{eq:obstruction} is exactly $\sigma$.  For $x\in\Gamma\subseteq D$,
\[
  \norm{x-x_\Th^\dagger}_\diamond
  \;\leq\;\norm{x-\Sth x}_\diamond+\norm{\Sth x-\Sth x_\Th^\dagger}_\diamond
  \;\leq\;\sigma+q\norm{x-x_\Th^\dagger}_\diamond ,
\]
so $\norm{x-x_\Th^\dagger}_\diamond\leq\sigma/(1-q)$ for every $x\in\Gamma$, whence $\operatorname{diam}_\diamond(\Gamma)\leq 2\sigma/(1-q)$.  This is \eqref{eq:obstruction}.  For the last claim, Assumption~\ref{asm:learned} bounds $\norm{\PhiTh(t;x)-\Phi_f(t;x)}$ by $\varepsilon$ for $x\in N_{2r_1}\supseteq\Gamma$ and $t\in[0,\That+2]\ni\That$, and $\norm{\cdot}_\diamond\leq c_2\norm{\cdot}$ converts \eqref{eq:obstruction} into a bound on $\varepsilon$ after dividing by $c_2$ and using $\operatorname{diam}_\diamond\geq c_1\operatorname{diam}$.
\end{proof}

Proposition~\ref{prop:obstruction} assumes a contraction on a set containing $\Gamma$, which is more than the ablation of Section~\ref{ssec:num-cert} measures: there only the spectral radius at the fixed point is computed.  The next proposition removes that gap.  Its hypothesis is exactly the measured quantity, and its conclusion is the same incompatibility, at the cost of a square root and a $d$-th root in the constants.

{\begin{proposition}[local form of the obstruction]\label{prop:obstruction-local}
Let Assumption~\ref{asm:true} hold, so that the confinement radius $r_1$ of Lemma~\ref{lem:confine} is defined, and let $\norm{f}_{C^2(N_{\delta_0})}\leq B$; let $\fTh$ be exactly $\That$-periodic with $\norm{\fTh}_{C^2}\leq B$, and let $\varepsilon$ be the constant of Assumption~\ref{asm:learned}.  Let $x^\sharp$ satisfy $\dist(x^\sharp,\Gamma)\leq\varrho$ and put $M:=D\Sth(x^\sharp)$.  There are $\Lambda_1,\Lambda_2\geq1$, depending only on $(d,\That,B)$, such that if $\varepsilon\leq r_1^2\Lambda_2/2$ then
\begin{equation}\label{eq:obstruction-local}
  \rho(M)\;\geq\;1-\Bigl[\bigl(\Lambda_1\varrho+\sqrt{2\varepsilon\Lambda_2}\bigr)\bigl(1+\norm{M}\bigr)^{d-1}\Bigr]^{1/d}.
\end{equation}
Consequently, for every $\rho_*\in[0,1)$, $\rho(M)\leq\rho_*$ forces $\Lambda_1\varrho+\sqrt{2\varepsilon\Lambda_2}\geq(1-\rho_*)^{d}(1+\norm{M})^{1-d}$.
\end{proposition}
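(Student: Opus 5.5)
The plan is to find an approximate eigenvector of $M$ for the eigenvalue $1$, and then convert this approximate eigenvalue into a lower bound on $\rho(M)$ through a determinant estimate. The underlying mechanism is the one behind Proposition~\ref{prop:obstruction}. The target's time-$\That$ map fixes $\Gamma$ pointwise, so its differential has the tangent vector $f(y)$ as an eigenvector with eigenvalue exactly $1$ at every $y\in\Gamma$. One-period closeness then forces $D\Sth$ to nearly preserve that direction.

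\emph{Step 1: $C^1$ closeness on $\Gamma$ by interpolation.} Put $\Psi(x):=\Sth x-\Phi_f(\That;x)$. By the definition of $\varepsilon$ in Assumption~\ref{asm:learned}, $\norm{\Psi}\leq\varepsilon$ on $N_{2r_1}$. Both flows have $C^2$ fields bounded by $B$, so the second variational equation and Gr\"onwall over $[0,\That]$ give $\norm{D^2\Psi}\leq\Lambda_2$ on that tube, with $\Lambda_2$ depending only on $(d,\That,B)$. One point needs care here: the bound for $\Phi_f$ uses that its trajectories from $N_{2r_1}$ stay in $N_{\delta_0}$, which is where $\norm{f}_{C^2}\leq B$ is assumed, and Lemma~\ref{lem:confine} supplies exactly this. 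Now fix $y\in\Gamma$ and a unit vector $u$, and take $h\leq r_1$, so that $y+hu\in N_{2r_1}$. Taylor's theorem gives
\[
  h\norm{D\Psi(y)u}\leq 2\varepsilon+\tfrac12\Lambda_2h^2 .
\]
The choice $h=2\sqrt{\varepsilon/\Lambda_2}$ is admissible precisely under a condition of the form $\varepsilon\lesssim r_1^2\Lambda_2$, which is the hypothesis $\varepsilon\leq r_1^2\Lambda_2/2$ after adjusting constants. It yields $\norm{D\Psi(y)}\leq\sqrt{2\varepsilon\Lambda_2}$, with numerical factors absorbed into $\Lambda_2$.

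\emph{Step 2: approximate eigenvector at $x^\sharp$.} Pick $y\in\Gamma$ with $\norm{x^\sharp-y}\leq\varrho$, and set $w:=f(y)/\norm{f(y)}$, which is well defined since $v_{\min}>0$. Differentiating $\Phi_f(\That;\Phi_f(s;y))=\Phi_f(s;y)$ in $s$ at $s=0$ gives $D\Phi_f(\That;y)w=w$. The map $x\mapsto D\Sth(x)$ is Lipschitz with a constant $\Lambda_1(d,\That,B)$, again from the second variational equation. Combining this with Step~1,
\[
  \norm{(M-I)w}\;\leq\;\norm{D\Sth(x^\sharp)-D\Sth(y)}+\norm{D\Psi(y)}\;\leq\;\eta:=\Lambda_1\varrho+\sqrt{2\varepsilon\Lambda_2}.
\]
If the bracket in \eqref{eq:obstruction-local} is at least $1$, the claim is trivial because $\rho(M)\geq0$. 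So we may assume $\eta$ is small, and no confinement of $x^\sharp$ beyond the stated distance is needed.

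\emph{Step 3: from approximate to actual eigenvalue.} This is the step I expect to carry the real content, since it produces the $d$-th root. Step~2 says the smallest singular value of $M-I$ is at most $\eta$, while its largest singular value is at most $1+\norm{M}$. Hence
\[
  \prod_i\abs{1-\lambda_i}=\abs{\det(M-I)}=\prod_i s_i(M-I)\leq\eta\,(1+\norm{M})^{d-1},
\]
so some eigenvalue satisfies $\abs{1-\lambda_i}\leq[\eta(1+\norm{M})^{d-1}]^{1/d}$. Then $\rho(M)\geq\abs{\lambda_i}\geq1-\abs{1-\lambda_i}$, which is \eqref{eq:obstruction-local}. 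The consequence follows by rearranging: if $\rho(M)\leq\rho_*$, then $(1-\rho_*)^d\leq\eta(1+\norm{M})^{d-1}$.
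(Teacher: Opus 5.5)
Your argument is the paper's proof. It uses the same four ingredients: the tangent to $\Gamma$ as an exact unit-eigenvalue eigenvector of the target's stroboscopic differential, interpolation from $C^0$ to $C^1$ closeness, the tangent as an approximate eigenvector of $M$, and a determinant bound. Your singular-value version of Step~3 is the same estimate as the paper's Hadamard inequality in an orthonormal basis containing the tangent.

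\textbf{The gap is in Step~1: the one-sided Taylor expansion cannot deliver the constants stated in the proposition.}

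\begin{itemize}
\item Your expansion picks up $2\varepsilon$, because $\Psi(y)=\Sth y-y$ is only $O(\varepsilon)$, not zero. Write $\Lambda_2'$ for the bound on $\norm{D^2\Psi}$. The expansion then gives $\norm{D\Psi(y)}\leq2\sqrt{\varepsilon\Lambda_2'}$, and your choice of $h$ is admissible only when $\varepsilon\leq r_1^2\Lambda_2'/4$.
\item The same $\Lambda_2$ appears in the hypothesis $\varepsilon\leq r_1^2\Lambda_2/2$ and in the bound $\sqrt{2\varepsilon\Lambda_2}$, so absorbing factors into $\Lambda_2$ cannot reconcile the two.
\item Matching the bound forces $\Lambda_2\geq2\Lambda_2'$. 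Take then the endpoint $\varepsilon=r_1^2\Lambda_2/2$. The best admissible choice is $h=r_1$, which gives $2\varepsilon/r_1+\Lambda_2'r_1/2=r_1\Lambda_2+\Lambda_2'r_1/2$. This strictly exceeds $\sqrt{2\varepsilon\Lambda_2}=r_1\Lambda_2$.
\end{itemize}

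The fix is the one the paper uses. Expand at $y\pm hu$, both of which lie in the tube, and subtract; the term $\Psi(y)$ cancels and you get $\norm{D\Psi(y)u}\leq\varepsilon/h+\Lambda_2h/2$. The choice $h=\sqrt{2\varepsilon/\Lambda_2}$ then gives exactly $\sqrt{2\varepsilon\Lambda_2}$. It also satisfies $h\leq r_1$ precisely under the stated hypothesis $\varepsilon\leq r_1^2\Lambda_2/2$.

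A smaller point concerns your citation of Lemma~\ref{lem:confine}. As stated, that lemma assumes $\varepsilon\leq\delta_0/4$, and the proposition does not assume this. What you actually need is only the confinement of the true flow from $N_{2r_1}$. Its proof establishes that confinement from $f$ alone, with no condition on $\varepsilon$, so you should cite that part rather than the lemma as stated.
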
}

The proof is in Appendix~\ref{app:pf-orbital}.

Both forms say the same thing, and neither dominates the other.  Proposition~\ref{prop:obstruction} is quantitatively sharp but assumes a contraction on a set containing $\Gamma$, which we do not verify.  Proposition~\ref{prop:obstruction-local} assumes only the spectral radius at one point, which is what Algorithm~\ref{alg:floquet} computes, but pays for it with $\Lambda_1$ and $\Lambda_2$, which are governed by the $C^2$ size of the trained field and are not controlled during training.  In the norm in which the contraction holds, \eqref{eq:obstruction} carries no constants: the tighter the contraction, the larger the one-period error on $\Gamma$ must be, and the bound degenerates only as $q\to1$.  Passing to the Euclidean $\varepsilon$ of Assumption~\ref{asm:learned} costs the factor $c_1/c_2$, which we do not estimate; the two quantities the paper would like to be small at once are nonetheless in direct conflict.  Section~\ref{ssec:num-cert} measures both sides on every trained model.  What survives is a guarantee about the \emph{orbit}, and it needs neither Assumption~\ref{asm:learned} nor a section, a return time or a clock comparison.

{\begin{theorem}[orbital guarantee for the deployed architecture]\label{thm:orbital}
Let Assumption~\ref{asm:true} hold and let $\fTh$ be as above.  Assume
\begin{enumerate}
\item[\textup{(B1)}] $\Sth$ has a fixed point $x_\Th^\dagger$ whose monodromy $M:=D\Sth(x_\Th^\dagger)$ satisfies $\rho(M)\leq\rho_*<1$, where $\rho_*$ is a free parameter of this theorem and not the training threshold of \eqref{eq:floquet-loss};
\item[\textup{(B2)}] the closed orbit $\gamma_\Th^\dagger:=\{\PhiTh(s;x_\Th^\dagger):s\in[0,\That]\}$ satisfies $\dist_H(\gamma_\Th^\dagger,\Gamma)\leq\varepsilon_\Gamma$.
\end{enumerate}
Set $\bar\rho:=(1+\rho_*)/2<1$.  Then there exist $r_0>0$ and $C\geq1$, depending only on $(d,\That,B,\rho_*)$, such that for every $x_0$ with $\norm{x_0-x_\Th^\dagger}\leq r_0$ and every $t\geq0$,
\begin{equation}\label{eq:orbital-bound}
  \dist\bigl(\PhiTh(t;x_0),\Gamma\bigr)
  \;\leq\;
  \varepsilon_\Gamma \;+\; C\,\bar\rho^{\,\lfloor t/\That\rfloor}\,\norm{x_0-x_\Th^\dagger} .
\end{equation}
\end{theorem}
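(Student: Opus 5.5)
The plan is to reduce everything to the stroboscopic map $\Sth$ and use the semigroup identity \eqref{eq:strobo-semigroup}.

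\emph{Step 1: local contraction of $\Sth$.} Since $\rho(M)\leq\rho_*$, I will build an adapted norm $\norm{\cdot}_*$ on $\R^d$ with $\norm{M}_*\leq(\rho_*+\bar\rho)/2$ and equivalence constants $c_1\norm{\cdot}\leq\norm{\cdot}_*\leq c_2\norm{\cdot}$. The standard construction is $\norm{y}_*:=\sum_{k=0}^{m}\lambda^{-k}\norm{M^ky}$ with $\lambda\in(\rho_*,\bar\rho)$ and $m$ chosen from a Gelfand-type bound. The derivative $D\Sth$ is Lipschitz with a constant $\Lambda$ depending only on $(d,\That,B)$: differentiate the variational equation, and use Gr\"onwall over one period to get $\norm{D\Sth}\leq e^{B\That}$ and a Lipschitz constant of order $\That B e^{2B\That}$. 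Hence on a ball $\norm{x-x_\Th^\dagger}\leq r'$ one has $\norm{D\Sth}_*\leq\bar\rho$, and by the mean value inequality in the $*$-norm, $\norm{\Sth x-x_\Th^\dagger}_*\leq\bar\rho\norm{x-x_\Th^\dagger}_*$. The ball in the $*$-norm is invariant. Iterating gives $\norm{\Sth^n x_0-x_\Th^\dagger}\leq (c_2/c_1)\bar\rho^{\,n}\norm{x_0-x_\Th^\dagger}$ once $r_0:=c_1r'/c_2$.

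\emph{Step 2: interpolation within a period.} For $t=n\That+s$ with $n=\lfloor t/\That\rfloor$ and $s\in[0,\That)$, the identity \eqref{eq:strobo-semigroup} gives $\PhiTh(t;x_0)=\PhiTh(s;\Sth^nx_0)$ and $\PhiTh(s;x_\Th^\dagger)\in\gamma_\Th^\dagger$. Gr\"onwall over $[0,\That]$ with Lipschitz constant $B$ yields $\norm{\PhiTh(t;x_0)-\PhiTh(s;x_\Th^\dagger)}\leq e^{B\That}(c_2/c_1)\bar\rho^{\,n}\norm{x_0-x_\Th^\dagger}$. By the triangle inequality and (B2), $\dist(\PhiTh(t;x_0),\Gamma)\leq\dist(\PhiTh(s;x_\Th^\dagger),\Gamma)+\text{that}\leq\varepsilon_\Gamma+C\bar\rho^{\,\lfloor t/\That\rfloor}\norm{x_0-x_\Th^\dagger}$, with $C:=e^{B\That}c_2/c_1$. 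Assumption~\ref{asm:true} is used only to have $\Gamma$ defined.

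\emph{Main obstacle.} The delicate point is making $c_1,c_2,r'$ depend only on $(d,\That,B,\rho_*)$ and not on the particular $M$. The condition number of an adapted norm can blow up near Jordan blocks or for highly non-normal $M$. The fix I would try first uses the a priori bound $\norm{M}\leq e^{B\That}$. On the compact set $\{\norm{M}\leq e^{B\That},\ \rho(M)\leq\rho_*\}$, the powers satisfy the uniform estimate $\norm{M^k}\leq K\lambda^k$ with $K$ depending only on $(d,\norm{M}\text{-bound},\rho_*,\lambda)$. This follows for instance from the Schur form: $M=U(D+N)U^*$ with $N$ nilpotent strictly upper triangular and entries bounded by $\norm{M}$, and expanding $(D+N)^k$ gives a polynomial-times-$\rho_*^k$ bound. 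With $\norm{y}_*:=\sup_{k\geq0}\lambda^{-k}\norm{M^ky}$ we then get $c_1=1$, $c_2=K$ and $\norm{My}_*\leq\lambda\norm{y}_*$. The radius $r'$ then only needs $(\bar\rho-\lambda)/(K\Lambda)$, so every constant depends on $(d,\That,B,\rho_*)$ alone.
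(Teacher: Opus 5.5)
Your proof is correct and has the same structure as the paper's. You build an adapted norm at $x_\Th^\dagger$, get contraction of $\Sth$ on a ball from the Lipschitz bound $\Lambda\sim\That Be^{2B\That}$ on $D\Sth$ and the mean value inequality, then apply \eqref{eq:strobo-semigroup}, a one-period Gr\"onwall factor $e^{B\That}$, and the triangle inequality through (B2).

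The only real difference is how you make the equivalence constant of the adapted norm uniform over admissible $M$.

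\textbf{The paper's route.} It reuses the construction from the proof of Lemma~\ref{lem:persist}: compactness of $\{A:\rho(A)\leq\rho_*,\ \norm{A}\leq e^{B\That}\}$, Gelfand's formula at each point, a finite subcover, a common exponent $j_*$ (an lcm) via submultiplicativity, and then the finite-sum norm $\sum_{i<j_*}\theta^{-i}\norm{A^ix}$.

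\textbf{Your route.} You get the uniform power bound directly from a Schur form $M=U(D+N)U^*$. Any word in $D,N$ containing $d$ or more factors $N$ vanishes, so
\[
\norm{M^k}\leq\sum_{j=0}^{d-1}\binom{k}{j}\norm{N}^j\rho_*^{\,k-j},\qquad \norm{N}\leq e^{B\That}+\rho_*,
\]
which gives $\norm{M^k}\leq K\lambda^k$. You then use the sup norm $\sup_k\lambda^{-k}\norm{M^ky}$, for which $c_1=1$ and $c_2=K$.

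Both routes give constants of the same shape, $r_0=(\bar\rho-\lambda)/(K^2\Lambda)$ and $C=Ke^{B\That}$.

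\textbf{What each buys.} Your argument is self-contained and constructive. It gives an explicit $K$, polynomial of degree $d-1$ in $e^{B\That}$, whereas the paper's finite-subcover constant $\cd$ is not computable. This matters here, since the paper notes that $r_0$ is not evaluated in the experiments. The paper's route is more economical within the paper: the construction is already proved for Lemma~\ref{lem:persist} and is simply invoked again for the full $d\times d$ monodromy.

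For the dependence claim to hold, fix $\lambda$ as a function of $\rho_*$ alone, e.g.\ $\lambda=(\rho_*+\bar\rho)/2$, as your first paragraph implicitly does.
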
}

{Both hypotheses are measurable rather than structural: \textup{(B1)} is the spectral radius that Algorithm~\ref{alg:floquet} returns from one variational solve, and \textup{(B2)} is a Hausdorff distance between two sampled curves.  What the implementation returns are floating-point eigenvalues of a numerically integrated monodromy and a distance between finite samples, without residual or quadrature bounds, so the reported values are measurements of the hypotheses and not certificates of them.  The radius $r_0$ is likewise not evaluated for the initial conditions used in Section~\ref{ssec:num-cert}.  The standard route to close this gap is validated integration of the monodromy together with interval bounds on the sampled distances, which would upgrade both measurements to certificates; we do not pursue it here.  Neither involves $\varepsilon$, so Proposition~\ref{prop:obstruction} does not obstruct them, and Section~\ref{ssec:num-cert} reports both.  The bound is uniform in time with no linear term: the clock mismatch of Lemma~\ref{lem:clocks}, which produces the factor $1+t/\That$ in \eqref{eq:floquet-main}, is absent here because both the target and the learned orbit close in exactly $\That$.}

\begin{remark}[what the encoding buys, and what it gives up]\label{rem:orbital-vs-traj}
Theorem~\ref{thm:orbital} bounds the distance to $\Gamma$, not the distance to the target trajectory through the same initial state, and Proposition~\ref{prop:obstruction} shows that the stronger conclusion is incompatible with a contraction of $\Sth$ on a set containing $\Gamma$.  The reason is visible in \eqref{eq:strobo-semigroup}: $\Phi_f(\That;\cdot)$ fixes every point of $\Gamma$, so the target retains the phase of its initial condition forever, whereas a contracting $\Sth$ entrains every nearby state to the single orbit $\gamma_\Th^\dagger$ and forgets that phase.  The periodic encoding buys orbital stability with no data at deployment, and pays for it in phase fidelity.  Section~\ref{ssec:num-cert} exhibits both halves by varying the launch phase.  Both terms of \eqref{eq:orbital-bound} are moreover of the correct order.  In the polar setting of Remark~\ref{rem:tight}, let the learned field be $\dot r=-(r-1-\varepsilon_\Gamma)$, $\dot\theta=2\pi/\That+\kappa\sin\bigl(2\pi t/\That-\theta\bigr)$ with $\kappa>0$: an exactly $\That$-periodic field whose flow locks to the circle of radius $1+\varepsilon_\Gamma$ traversed in phase with the drive.  In the co-rotating variable $\psi:=\theta-2\pi t/\That$ the system reads $\dot r=-(r-1-\varepsilon_\Gamma)$, $\dot\psi=-\kappa\sin\psi$, so the locked point is a fixed point of $\Sth$ with monodromy spectrum $\{e^{-\That},e^{-\kappa\That}\}$.  Hypothesis \textup{(B1)} therefore holds with $\rho_*=\max(e^{-\That},e^{-\kappa\That})$, and \textup{(B2)} holds with Hausdorff distance exactly $\varepsilon_\Gamma$.  Every nearby trajectory converges to the locked circle, so its distance to $\Gamma$ tends to $\varepsilon_\Gamma$: the time-independent floor in \eqref{eq:orbital-bound} is attained in the limit and cannot be removed, while the transient term decays geometrically at the rate the monodromy prescribes.
\end{remark}

\section{Numerical experiments}\label{sec:numerics}

This section tests each theoretical claim on a dedicated experiment, including a width sweep for the budget of Theorem~\ref{thm:linearT} and an autonomous arm for the regime of Theorem~\ref{thm:floquet}.  The cold-started autonomous arm is a negative result: training there produces no certifiable closed orbit.  A warm-started arm, fine-tuned from the converged encoded models, repairs it (Section~\ref{ssec:num-cert}).  Section~\ref{ssec:num-mpc} treats the data-assisted composite bound of Theorem~\ref{thm:linearT} on a forced Duffing equation.  Section~\ref{ssec:num-budget} works on the pendulum: it exhibits the exponential barrier, shows the drift of Proposition~\ref{prop:novel-ic} when the resets are withheld, and measures the growth of the window count in the horizon and the tolerance.  Section~\ref{ssec:num-cert} measures the monodromy spectrum of the deployed architecture, which is hypothesis \textup{(B1)} of Theorem~\ref{thm:orbital} and not a verification of \eqref{eq:cert-cond}, through an ablation over the time encoding and the Floquet loss, and Section~\ref{ssec:num-h2h} compares the two strategies on the van der Pol oscillator.  The implementation, the configuration and archived results of every run, the trained models, and the scripts generating every figure and table are available at \url{https://github.com/DCN-FAU-AvH/SA-NODE-MPC-Floquet.git}.

\subsection{Setup}\label{ssec:num-setup}
We record the shared configuration once, together with the points at which the computations depart from the hypotheses of the theory.  The experiments are proof-of-concept by design: low-dimensional canonical benchmarks on which each theoretical mechanism can be tested in isolation and every reported quantity can be recomputed independently; they are not application-scale validation.  All models are trained in PyTorch with the Adam optimizer.  The MPC experiments of Sections~\ref{ssec:num-mpc}--\ref{ssec:num-budget} use ReLU activation, per-window width $P=128$, and forward Euler integration at the data resolution $\Delta t=0.1$; these choices follow the numerical protocol of \cite[\S 4]{LLLZ24}.  Theorem~\ref{thm:uap} is existential in the parameters and bounds the exact SA-NODE flow rather than its forward Euler discretization, and the Duffing field of \eqref{eq:duffing} is only locally Lipschitz, so the per-window budget is read as a scaling guide rather than a guaranteed count.  The Floquet experiments use $\tanh$ activation, width $64$, and fixed-step RK4, consistent with the $C^2$ hypotheses of Section~\ref{sec:floquet}.  Reference trajectories are computed by an adaptive Runge--Kutta method at integrator tolerances several orders of magnitude below the errors reported.  Monolithic baselines use the width of a single window and the same total number of gradient steps as the composite, so the comparisons are matched in training cost rather than capacity.  Theorem~\ref{thm:uap} and the width scaling of Section~\ref{ssec:barrier} certify a sufficient width for an equal-parameter monolithic model that still carries the factor $e^{2LT}$; they do not assert that the model must incur it.

\subsection{Composite approximation of a forced Duffing equation}
\label{ssec:num-mpc}
The first experiment instantiates the data-assisted bound \eqref{eq:main-mpc} of Theorem~\ref{thm:linearT}: it runs Algorithm~\ref{alg:mpc} on the forced Duffing equation
\begin{equation}\label{eq:duffing}
  \dot x_1 \;=\; x_2, \qquad
  \dot x_2 \;=\; x_1 - x_1^3 + 0.1\,\cos(\pi t),
\end{equation}
a non-autonomous target that activates the time bias of \eqref{eq:sanode}, on $[0,15]$ with a $5\times5$ grid of initial conditions in $[-1.5,1.5]^2$, tolerance $\varepsilon=0.5$, and training horizon $H=10$.  The algorithm selects $2$ windows, of lengths $10.4$ and $4.6$, and realizes the composite error $0.519$, which exceeds the tolerance by $3.8\,\%$: the safeguards bind, so Assumption~\ref{asm:partition}, checked on the training grid, holds with $\max_k\varepsilon_k^{\mathrm{win}}$ in place of $\varepsilon$; the sampled analogue of \eqref{eq:linearT} then holds with that value.  Meanwhile the monolithic SA-NODE baseline, trained with the same number of gradient steps, reaches $1.98$, about $4.0$ times the tolerance.  Figure~\ref{fig:duffing}(a) displays sample trajectories of the composite against the truth, and Figure~\ref{fig:duffing}(b) the error over time.  The data reset holds the composite near the tolerance on every window, while the monolithic error exceeds it from early times onward by a factor of about $4.0$ and oscillates at the scale of the reachable tube.  At each switch time the reported error is the left limit: the composite is reset to the data there, so the sawtooth peaks just before the switch.  On a bounded target the barrier appears as a persistent excess rather than unbounded growth; the growth mechanism itself is exhibited on the pendulum below.

\begin{figure}[t]
\centering
\begin{subfigure}[t]{0.48\textwidth}
  \centering
  \includegraphics[width=\linewidth]{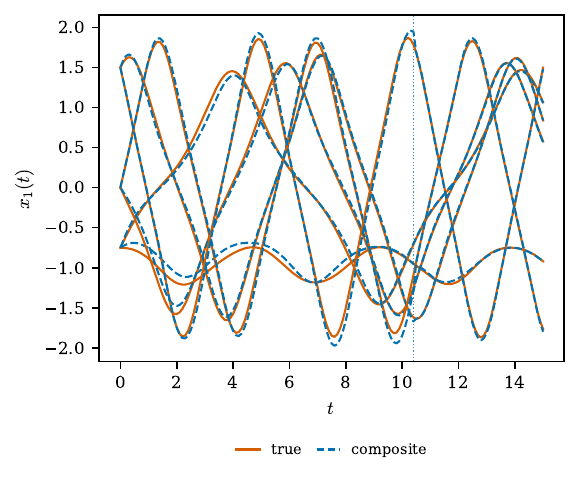}
\caption{sample trajectories}
  \label{sfig:duffing-traj}
\end{subfigure}\hfill
\begin{subfigure}[t]{0.48\textwidth}
  \centering
  \includegraphics[width=\linewidth]{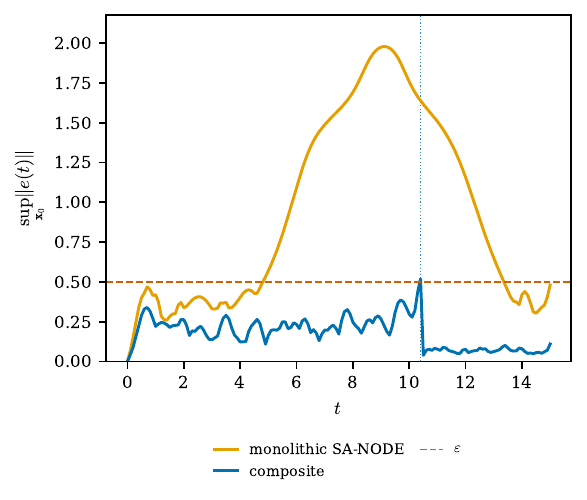}
\caption{error over time}
  \label{sfig:duffing-err}
\end{subfigure}
\caption{Forced Duffing equation \eqref{eq:duffing}, $T=15$, $\varepsilon=0.5$.  (a)~Sample trajectories, true versus composite; the dotted vertical line marks the switch time.  (b)~Error over time, $\sup_{\bx_0}\norm{e(t)}$ on the initial-condition grid; the sawtooth peaks are left limits at the switch times.}
\label{fig:duffing}
\end{figure}

\subsection{Window count, tolerance, and the exponential barrier}
\label{ssec:num-budget}
This experiment has three purposes: to exhibit the exponential error growth of Section~\ref{ssec:barrier} on a monolithic model, to show the drift that Proposition~\ref{prop:novel-ic} predicts when the data resets are withheld, and to measure how the window count of Theorem~\ref{thm:linearT} grows with the horizon and the tolerance.  The benchmark is the pendulum $\dot x_1=x_2$, $\dot x_2=-\sin x_1$, run with the protocol of Section~\ref{ssec:num-mpc} on a $5\times5$ grid of initial conditions in $[-2,2]^2$; the horizon sweep uses training horizon $H=3$, and the tolerance sweep trains each window on the full remaining horizon.  On $\R^2$ eight of these initial conditions carry energy above the separatrix value $2$, so their trajectories rotate and the infinite-time reachable set is unbounded there.  Read on the cylinder the reachable set is compact, and all horizons run here are finite; Assumption~\ref{asm:partition} is in any case verified on the output, so the sweep instantiates the reset mechanism rather than the literal infinite-time tube hypothesis.

Figure~\ref{fig:pend-sweeps}(a) shows the error growth at $T=10$.  The monolithic neural ODE error grows exponentially along the horizon, at the measured rate $e^{0.39\,t}$ (correlation $r=0.88$ over the growth phase), before saturating at the scale of the reachable tube.  The composite is reset at each switch time and stays at the tolerance.  Its excess over $\varepsilon$ (realized error $0.583$, about $16.6\,\%$ above the tolerance) is the safeguard excess anticipated after Assumption~\ref{asm:partition}.

The same figure isolates the value of the reset.  We take the eight trained windows of this run and chain them without data: each window starts from the state the previous one predicted, which is the predicted-IC deployment of Proposition~\ref{prop:novel-ic}.  No retraining is involved, so the two composites differ only in what happens at the switch times.  The error now accumulates across windows and reaches $3.74$, that is $7.5$ times the tolerance, while the data-IC evaluation of the same models stays at $0.58$.  The bound of Proposition~\ref{prop:novel-ic} is honored but far from sharp: the weight-norm Lipschitz bound of the trained ReLU windows is $\bar L=29$, which puts the envelope near $10^{89}$.  The contrast between the two curves is the content of the proposition: the resets, not the trained windows, carry the uniform guarantee.

The partitions produced here satisfy the mesh condition of Assumption~\ref{asm:partition} with a horizon-independent bound: across the sweep the realized mesh is $3.00$ at every horizon, so $\Tmax$ can be taken independent of $T$ as the theory requires.

Figure~\ref{fig:pend-sweeps}(b) and Table~\ref{tab:budget} report the window count against the horizon $T\in\{5,10,15,20\}$ at $\varepsilon=0.5$. The least-squares fit $N\approx1.34\,T-4.50$ has correlation $r=0.995$, consistent with the linear growth allowed by Theorem~\ref{thm:linearT}, and far below the exponential width that \eqref{eq:uap-const} certifies as sufficient for a monolithic model.  Two caveats temper this reading.  The increments of $N$ are 5, 7, and 8 and mildly convex.  The realized error also drifts from $0.58$ to $0.72$ across the sweep as the safeguards bind more often, so four points do not separate linear from mildly super-linear growth.  Every entry exceeds $\varepsilon=0.5$: the tolerance is met per window only up to that excess.  Problem~\ref{prob:NT} records the general question.  The width side of the budget is probed separately: a sweep over widths $16$ to $256$ on a single fixed window, three seeds each, gives a realized error decaying with slope $-0.75$ against the width on a log-log scale ($\abs{r}=0.97$), consistent with, indeed faster than, the $P^{-1/2}$ rate that Theorem~\ref{thm:uap} certifies.

\begin{table}[t]
\centering
\caption{Window count and composite error versus horizon, pendulum, $\varepsilon=0.5$.}
\label{tab:budget}
\begin{tabular}{lcccc}
\toprule
$T$ & $5$ & $10$ & $15$ & $20$ \\
\midrule
$N$ & 3 & 8 & 15 & 23 \\
$\mathrm{err}_{L^\infty}$ & 0.58 & 0.58 & 0.72 & 0.72 \\
\bottomrule
\end{tabular}
\end{table}

Figure~\ref{fig:pend-sweeps}(c) reports the tolerance sweep at $T=10$: the count $N=(4,\,8,\,12,\,15,\,15)$ for $\varepsilon\in\{1.0,0.7,0.5,0.35,0.25\}$, with each window trained on the full remaining horizon and window cap $S=15$.  At $\varepsilon=0.35\text{ and }0.25$ the cap binds, producing the admissibility failure anticipated after Assumption~\ref{asm:partition}.  The crossing fires so early that the minimum window length dominates, and the horizon is left uncovered from $t=9.1$ and $t=9.0$ onward in the two runs, where the composite is not defined and no error can be reported.  Assumption~\ref{asm:partition} fails outright, and Theorem~\ref{thm:linearT} does not apply.  In this regime the remedy is to add width per window rather than more windows.

\begin{figure}[t]
\centering
\begin{subfigure}[t]{0.34\textwidth}
  \centering
  \includegraphics[width=\linewidth]{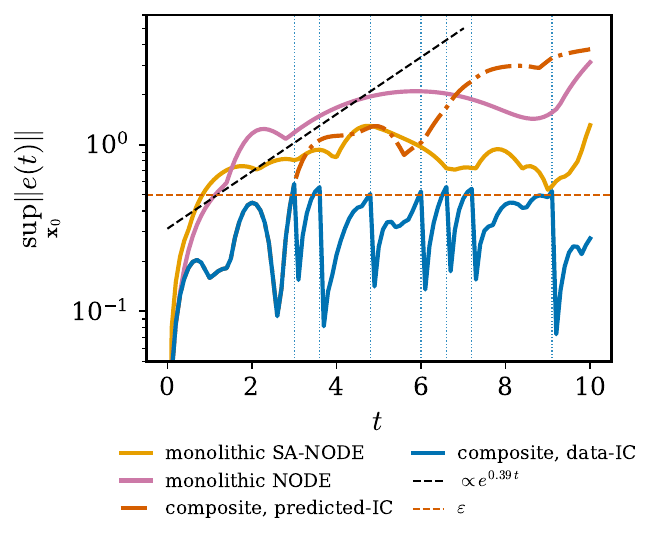}
\caption{error over time, $T=10$}
  \label{sfig:pend-err}
\end{subfigure}\hfill
\begin{subfigure}[t]{0.32\textwidth}
  \centering
  \includegraphics[width=\linewidth]{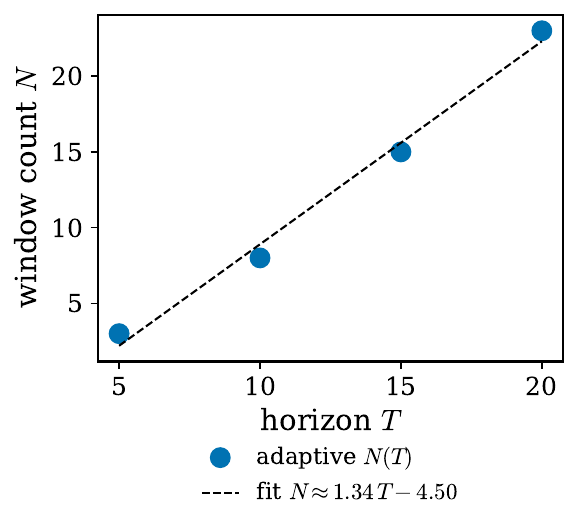}
\caption{window count vs.\ horizon}
  \label{sfig:NvsT}
\end{subfigure}\hfill
\begin{subfigure}[t]{0.32\textwidth}
  \centering
  \includegraphics[width=\linewidth]{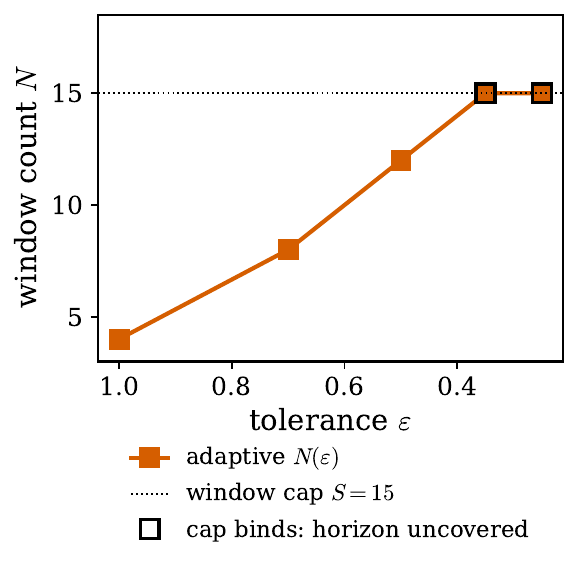}
\caption{window count vs.\ tolerance}
  \label{sfig:NvsEps}
\end{subfigure}
\caption{Pendulum sweeps.  (a)~Error over time at $T=10$, logarithmic scale; the dashed guide is the least-squares fit $e^{0.39\,t}$ to the neural ODE baseline.  (b)~Window count versus horizon at $\varepsilon=0.5$, with least-squares fit.  (c)~Window count versus tolerance at $T=10$; the dotted line is the window cap $S$, which binds at the two smallest tolerances and leaves the horizon uncovered from $t\approx9$: an admissibility failure, not a budget limit.}
\label{fig:pend-sweeps}
\end{figure}

\subsection{Certification: time encoding and Floquet loss}
\label{ssec:num-cert}
{Corollary~\ref{cor:lf-cert} links the trained surrogate to the certificate \eqref{eq:cert-cond} only for an autonomous planar field.  The architecture ablated here is periodic in time, and by Remark~\ref{rem:det} the surrogate then controls $\det M$ rather than $\rho(M)$, so the experiment is scored on the latter.}  It measures which ingredients produce contraction.  The ablation is two-by-two: the periodic encoding \eqref{eq:periodic-encoding} against the raw linear time bias of \eqref{eq:sanode}, and training with $\LF$ against training without it.  A third, autonomous arm removes the time features altogether, which is the regime of Theorem~\ref{thm:floquet} and Corollary~\ref{cor:lf-cert}; it is reported at the end of this subsection.  The two benchmarks have true multipliers an order of magnitude apart: the Stuart--Landau system
\begin{equation}\label{eq:sl}
  \dot r \;=\; r\,(0.1-r^2), \qquad \dot\theta \;=\; 1,
\end{equation}
with $\rho_{\mathrm{true}}=e^{-0.4\pi}\approx 0.285$ and threshold $\rho_*=0.30$, and the van der Pol oscillator
\begin{equation}\label{eq:vdp}
  \dot x_1 \;=\; x_2, \qquad
  \dot x_2 \;=\; \mu\,(1-x_1^2)\,x_2 - x_1, \qquad \mu=0.5,
\end{equation}
with period $\That\approx6.3807$, $\rho_{\mathrm{true}}\approx 0.039$, and the strict threshold $\rho_*=0.05$.  For each system, encoding, and loss condition ($\Ltraj$ only, condition~A, or $\Ltraj+\LF$, condition~B), we train $15$ models, one per seed, and run each of them autonomously for $300$ periods from an initial condition at distance $0.2$ from the orbit.  The linear-encoding models reset their clock at each period, the deployment protocol of the original architecture.  The reset makes the field piecewise-defined in time and in general discontinuous at the period boundary, hence outside the $C^2$-periodic class of Theorem~\ref{thm:orbital}; this arm is an empirical ablation only.

Every model is assessed by the spectral radius $\rho(M)$ of the full one-period stroboscopic monodromy along its locked orbit, obtained from one variational solve; for a $\That$-periodic field this is what governs the linear stability of that orbit (Remark~\ref{rem:det}).  It is neither the return-map radius $\rhoTh$ of Assumption~\ref{asm:cert} nor the surrogate $\tilde\rho_T=\det M$ of \eqref{eq:det-identity}, which is reported separately.

Table~\ref{tab:cert} reports the outcome, Figure~\ref{fig:cert}(a) the per-seed values, and Figure~\ref{fig:cert}(b) the resulting long-horizon errors.  Three readings must be kept apart.

The first is whether the locked orbit is linearly stable, that is whether $\rho(M)<1$, which is hypothesis \textup{(B1)} of Theorem~\ref{thm:orbital}.  It does, in $119$ of the $120$ runs.  The single exception is a linear-encoding van der Pol model trained without $\LF$, with $\rho(M)=1.116$, and it is also the only divergent deployment, reaching error $51$ after $300$ periods.  Linear stability of the locked orbit is therefore the rule rather than the exception.  This measurement supplies hypothesis \textup{(B1)} and no more: it is not the certificate \eqref{eq:cert-cond}, which concerns the averaged field and is out of reach at the measured oscillation $\eta$, and it leaves Assumption~\ref{asm:learned} unverified, since no training loss controls closeness on the whole tube.

The second is the contraction rate, and there the encoding is decisive.  With the periodic encoding the median spectral radius is $0.044$ on Stuart--Landau and $0.101$ on van der Pol; with the raw linear bias it is $0.532$ and $0.695$.  The encoding thus improves the rate by roughly an order of magnitude on the first system and a factor of about five to seven, depending on the loss condition, on the second.  The Floquet loss tightens the rate further on the unprotected architecture, from $0.695$ to $0.387$ on van der Pol, and it removes the one unstable run.

The third is whether a model reaches the threshold $\rho_*$ used in the loss; the per-cell counts are in Table~\ref{tab:cert}.  Every Stuart--Landau model with the periodic encoding meets $\rho_*=0.30$, while on van der Pol few models reach $\rho_*=0.05$: their spectral radii sit at two to three times the target's own multiplier $\rho_{\mathrm{true}}\approx 0.039$, from which the threshold was set.  The learned orbits contract, but by less than the true orbit does.  Scored on $\det M$ instead, every periodic-encoding van der Pol seed would pass while few pass on $\rho(M)$; the Stuart--Landau linear cells show the same gap (Remark~\ref{rem:det}).  We do not adjust $\rho_*$ after the fact; it is a training target inherited from the surrogate.

The orbital reading is the one Theorem~\ref{thm:orbital} governs, and its two hypotheses are measured on all $60$ periodic-encoding models.  Hypothesis \textup{(B2)}, the Hausdorff distance $\varepsilon_\Gamma$ between the locked orbit and the target cycle, has median $7.9\times 10^{-4}$ on Stuart--Landau and $1.5\times 10^{-2}$ on van der Pol; hypothesis \textup{(B1)} is the spectral radius already reported.  The constant of Assumption~\ref{asm:learned}, evaluated over the tube rather than at the base point, is of another order: median $0.88$ and $4.84$, that is $1124$ and $328$ times $\varepsilon_\Gamma$, above the diameters $0.63$ and $4.74$ of the target cycles.  This is not a training failure but the obstruction at work.  Proposition~\ref{prop:obstruction-local} explains the mechanism under exactly the measured hypotheses, $\rho(M)$ at the fixed point and $\varrho\leq\varepsilon_\Gamma$.  For the magnitude, the operator norm of the Jacobian of the stroboscopic map over the $0.2$-tube, sampled on a grid for the two representative models, is at most $0.052$ on Stuart--Landau and $0.186$ on van der Pol; at these contraction factors Proposition~\ref{prop:obstruction} gives the lower bounds $\varepsilon\geq0.30$ and $\varepsilon\geq1.93$, and the measured tube errors lie above both.  The grid values are a finite sampling, not a certificate, so this is a consistency check: on these models one-period closeness on the tube, and with it Theorem~\ref{thm:floquet} and Lemma~\ref{lem:nearaut}, is not observed at any tuning.

The launch phase separates the two readings.  We deploy the same threshold-passing model from the same state at distance $0.2$ from the cycle, moving the launch phase from the training phase to half a period later.  The supremum of the trajectory error over eight periods rises from $0.21$ to $0.63$ on Stuart--Landau, and from $0.26$ to $4.90$ on van der Pol.  The distance to the target cycle stays at the $\varepsilon_\Gamma$ level: $5.5\times 10^{-4}$ against $3.0\times 10^{-4}$, and $1.0\times 10^{-2}$ against $3.3\times 10^{-3}$, at the end of the run.  The trained models track the cycle and not the trajectory, exactly as Remark~\ref{rem:orbital-vs-traj} describes, and the earlier protocol did not see it because it launched only at the training phase.  The transient of \eqref{eq:orbital-bound} is also visible: before saturating at the $\varepsilon_\Gamma$ floor, the distance to the cycle contracts per period by a measured factor $0.036$ on Stuart--Landau and $0.044$ on van der Pol, of the order of the measured $\rho(M)$ and well below $\bar\rho$.

The autonomous arm fails, in the way the encoding was designed to prevent.  In none of the $10$ runs does training produce a certifiable closed orbit: every model relaxes to a stable equilibrium instead of a periodic attractor, and with no nondegenerate closed orbit no periodic-orbit certificate can be defined.  The one-period tube errors range from $0.43$ to $5.1$.  This is consistent with the saturation obstruction \cite{Matzakos26} that motivated the periodic encoding, and it leaves the chain of Corollary~\ref{cor:lf-cert} without a trained positive instance under cold-started autonomous training.

A warm-started arm repairs this failure.  Transferring the weights of the converged periodic-encoding models to the autonomous architecture (the shared matrices verbatim, the constant bias as the phase average of the periodic bias branch) and fine-tuning under the same condition-B loss produces closed learned orbits where cold-started training produced none.  On no seed does the transferred field carry a closed orbit before fine-tuning, so the repair comes from the warm-started optimization, not from the initialization alone.  On van der Pol, four of five seeds yield a closed, transversally stable orbit with $\tilde\rho_T$ between $0.14$ and $0.19$ and transverse multiplier $\rho(D\PTh(x_\Th^*))$ between $0.13$ and $0.18$; the surrogate--multiplier gap of Corollary~\ref{cor:lf-cert}(ii) is $0.002$--$0.011$ against a one-period tube error $\varepsilon$ of $0.61$--$0.78$, smaller by factors of $65$ to $320$.  On Stuart--Landau every seed yields a closed, transversally stable orbit with $\rho(D\PTh)\approx0.66$ and tube error $\approx0.13$; the closure test of the surrogate's relaxation does not pass there, so $\tilde\rho_T$ is not evaluated.  An independent recomputation with an adaptive integrator reproduces every reported multiplier to within $10^{-3}$ and closes the learned orbits to residuals below $10^{-10}$ on van der Pol and $3\times10^{-5}$ on Stuart--Landau: the orbits are genuine, and the failed closure test reflects the strictness of the check, not the absence of a cycle.  At deployment the stroboscopic error grows at $0.09$--$0.16$ per period before the phase wraps, consistent with the envelope $C\varepsilon(1+k)$ of Theorem~\ref{thm:floquet}(iii).  The arm is a measured instantiation of the certification mechanism of Corollary~\ref{cor:lf-cert}, not a certified instance: the smallness conditions and constants are not evaluated, and the contraction is read from the measurement rather than from the training target $\rho_*$.

\begin{table}[t]
\caption{Warm-started autonomous arm, per seed: divergence surrogate $\tilde\rho_T$ on the relaxed orbit, transverse multiplier $\rho(D\PTh)$, the absolute gap $\abs{\tilde\rho_T-\rho(D\PTh)}$ (the quantity of Corollary~\ref{cor:lf-cert}(ii), computed from unrounded values), one-period tube error $\varepsilon$, and fitted stroboscopic growth per period.  Van der Pol seed~2 relaxes to an equilibrium and is excluded; on Stuart--Landau the relaxation closure test does not pass, so $\tilde\rho_T$ is not evaluated there.}
\label{tab:warmstart}
\centering
\begin{tabular}{llccccc}
\toprule
System & seed & $\tilde\rho_T$ & $\rho(D\PTh)$ & gap & $\varepsilon$ & slope \\
\midrule
Stuart--Landau & 0 & --- & 0.660 & --- & 0.128 & 0.108 \\
               & 1 & --- & 0.654 & --- & 0.136 & 0.107 \\
               & 2 & --- & 0.656 & --- & 0.132 & 0.107 \\
               & 3 & --- & 0.660 & --- & 0.128 & 0.107 \\
               & 4 & --- & 0.654 & --- & 0.136 & 0.107 \\
\midrule
van der Pol    & 0 & 0.144 & 0.134 & 0.010 & 0.782 & 0.103 \\
               & 1 & 0.151 & 0.143 & 0.008 & 0.606 & 0.089 \\
               & 3 & 0.168 & 0.170 & 0.002 & 0.623 & 0.157 \\
               & 4 & 0.193 & 0.183 & 0.011 & 0.692 & 0.108 \\
\bottomrule
\end{tabular}
\end{table}

Two further measurements complete the picture.  The first is the oscillation quantified by Lemma~\ref{lem:nearaut}.  On the tube of radius $0.2$ the representative condition-B models have $C^1$ oscillation $0.46$ (Stuart--Landau) and $2.75$ (van der Pol).  The lemma requires $\varepsilon+C_B\eta\leq\delta_0/4$; already for $B=1$ the constant $C_B$ is of order $10^{9}$, so the admissible $\eta$ is below $10^{-10}$ and the measured values miss the regime by ten orders of magnitude.  Deployment succeeds because the trajectories stay locked to the periodic drive, not because the field is near-autonomous.  An oscillation penalty on the loss reduces $\eta$ by a factor of five to seven at the cost of a coarser one-period fit, and cannot close a gap of ten orders.

The second is the reference computation behind Remark~\ref{rem:det}, carried out on the representative models rather than on the whole ablation.  Without the oscillation penalty, the Stuart--Landau model reports $\tilde\rho_T=3.4\times 10^{-4}$ while its multipliers are $(0.048,0.007)$; the van der Pol spectral radius $0.118$ already exceeds $\rho_*=0.05$ although its determinant $0.014$ passes.  Under the penalty the van der Pol determinant $0.155$ stays below one while one multiplier lies above it, at $1.256$.  That is the failure Remark~\ref{rem:det} warns of, and it is the reason the ablation above is scored on $\rho(M)$ rather than on $\tilde\rho_T$.

\begin{table}[t]
\centering
\caption{Spectral radius $\rho(M)=\max_i\abs{\lambda_i}$ of the one-period stroboscopic monodromy of the encoded flow (Remark~\ref{rem:det}), over $15$ seeds per cell of the encoding-by-loss ablation.  The column ${<}1$ counts the models whose locked orbit is linearly stable; the column ${\leq}\rho_*$ counts those reaching the threshold used in the Floquet loss.  Medians are over the seeds of the cell.}
\label{tab:cert}
\begin{tabular}{llccccccc}
\toprule
System & encoding & $\rho_*$ & \multicolumn{3}{c}{without $\LF$} & \multicolumn{3}{c}{with $\LF$} \\
\cmidrule(lr){4-6}\cmidrule(lr){7-9}
 & & & med.\ $\rho(M)$ & ${<}1$ & ${\leq}\rho_*$ & med.\ $\rho(M)$ & ${<}1$ & ${\leq}\rho_*$ \\
\midrule
Stuart--Landau & periodic & 0.30 & 0.044 & 15/15 & 15/15 & 0.040 & 15/15 & 15/15 \\
 & linear &  & 0.532 & 15/15 & 0/15 & 0.483 & 15/15 & 0/15 \\
Van der Pol & periodic & 0.05 & 0.101 & 15/15 & 1/15 & 0.080 & 15/15 & 4/15 \\
 & linear &  & 0.695 & 14/15 & 0/15 & 0.387 & 15/15 & 0/15 \\
\bottomrule
\end{tabular}
\end{table}

\begin{figure}[t]
\centering
\begin{subfigure}[t]{0.48\textwidth}
  \centering
  \includegraphics[width=\linewidth]{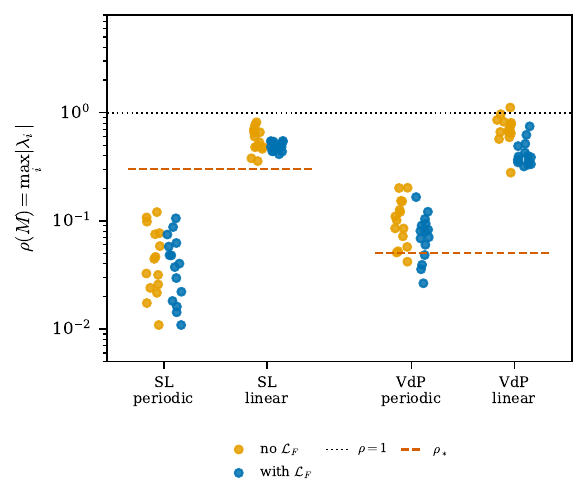}
\caption{per-seed values}
  \label{sfig:cert-scatter}
\end{subfigure}\hfill
\begin{subfigure}[t]{0.48\textwidth}
  \centering
  \includegraphics[width=\linewidth]{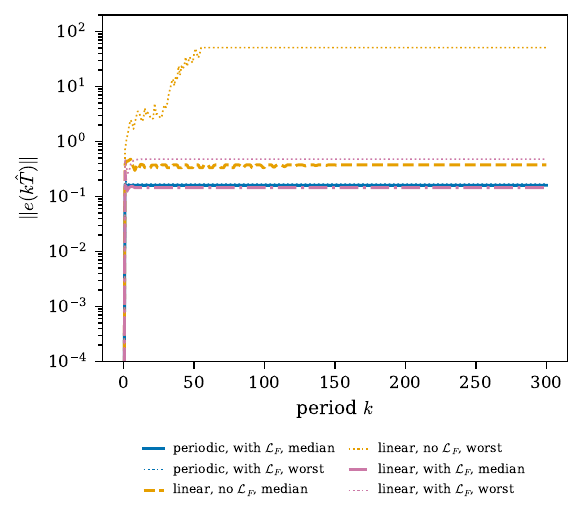}
\caption{stroboscopic error, van der Pol}
  \label{sfig:cert-strobe}
\end{subfigure}
\caption{Certification ablation.  (a)~Per-seed monodromy spectral radius $\rho(M)$, by system, encoding, and loss; dashed lines are the thresholds $\rho_*$, the dotted line is $\rho=1$.  (b)~Stroboscopic error across seeds on the van der Pol oscillator: the periodic encoding stays bounded, the linear-encoding worst case diverges.}
\label{fig:cert}
\end{figure}

\subsection{Comparison of the two strategies on the van der Pol
oscillator}
\label{ssec:num-h2h}
The final experiment places both strategies on the van der Pol oscillator \eqref{eq:vdp} over eight periods under one protocol.  Their guarantees are not directly comparable (data-assisted against autonomous); the comparison is of operating characteristics, not of sample efficiency.  Each method receives one training trajectory, but not the same one.  MPC trains on the eight-period trajectory from $x_0$, at distance $0.2$ from the cycle; Floquet trains on a single period from the base point $p$ on the cycle.  MPC uses $\tanh$ activation and width $64$ (Algorithm~\ref{alg:mpc}, $H=5$, $\varepsilon=0.5$) and is evaluated in data-IC mode; Floquet uses condition~B and runs autonomously.

Table~\ref{tab:h2h} and Figure~\ref{fig:h2h} summarize the comparison.  MPC selects $9$ windows and reaches $\sup_t\norm{e(t)}=0.537$, just above the tolerance $\varepsilon=0.5$, the excess being the safeguard excess of Section~\ref{ssec:num-mpc}.  This run uses $\tanh$ rather than the ReLU that Assumption~\ref{asm:sobolev} fixes, so only the error half of Theorem~\ref{thm:linearT}, verified through Assumption~\ref{asm:partition} on the output, is claimed for it.  The sawtooth of Figure~\ref{fig:h2h}(a) reflects the external reset: the error restarts near zero at each switch time and ends the run at $8.8\times 10^{-3}$, the final window's own error.  The locked orbit of the Floquet model is linearly stable, with monodromy spectral radius $\rho(M)=0.106$, and the model runs open loop on the periodic clock, with no state observations.  As in Section~\ref{ssec:num-cert} it does not reach the threshold $\rho_*=0.05$ inherited from the surrogate.  Its two multipliers form a complex conjugate pair of equal modulus, so the surrogate returns $\tilde\rho_T=\det M=\rho(M)^2=0.011$.  This is the gap of Remark~\ref{rem:det} in its plainest form: the surrogate is the square of the quantity that governs stability, and reading $0.011$ as a contraction rate would understate it by an order of magnitude.  Its error peaks at $0.269$ during the initial transient and settles to $0.169$ at the period marks (Figure~\ref{fig:h2h}(b)).  The deployed encoded model is governed by the uniform bound \eqref{eq:orbital-bound}, and the internal reset corrects toward the orbit rather than toward the reference trajectory. Both trajectories shadow the limit cycle (Figure~\ref{fig:h2h}(c)).  For horizons of $K$ periods the MPC strategy trains one network per window, so its budget grows with the realized window count $N$, about one window per period in the comparison experiment, while the Floquet cost is that of a single training run; the data-versus-autonomy trade-off is discussed in Section~\ref{ssec:mpc-scope}.

\begin{table}[t]
\centering
\caption{Comparison on the van der Pol oscillator ($\mu=0.5$, eight periods, width $64$, $\tanh$).}
\label{tab:h2h}
\begin{tabular}{lcc}
\toprule
 & MPC--SA-NODE & Floquet--SA-NODE \\
\midrule
models trained & 9 & 1 \\
evaluation & data-IC resets & observation-free, time-periodic \\
$\sup_t\|e(t)\|$ & 0.537 & 0.269 \\
$\|e(8\That)\|$ & $8.8\times 10^{-3}$ & 0.169 \\
$\rho(M)$ & -- & 0.106 \\
$\tilde\rho_T=\det M$ & -- & 0.011 \\
\bottomrule
\end{tabular}
\end{table}

\begin{figure}[t]
\centering
\begin{subfigure}[t]{0.32\textwidth}
  \centering
  \includegraphics[width=\linewidth]{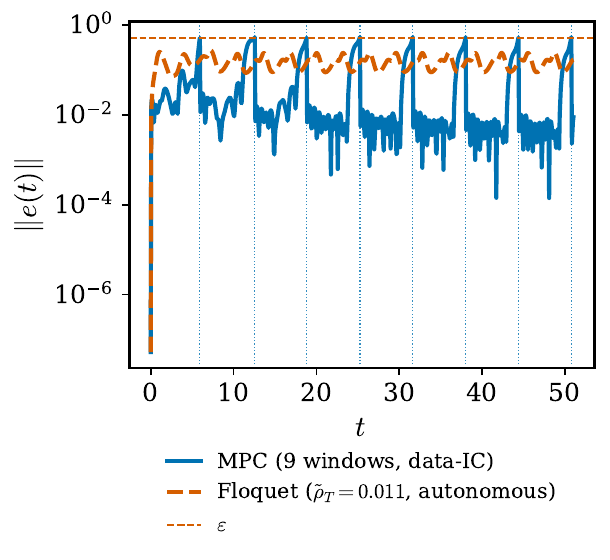}
\caption{continuous error}
  \label{sfig:h2h-err}
\end{subfigure}\hfill
\begin{subfigure}[t]{0.32\textwidth}
  \centering
  \includegraphics[width=\linewidth]{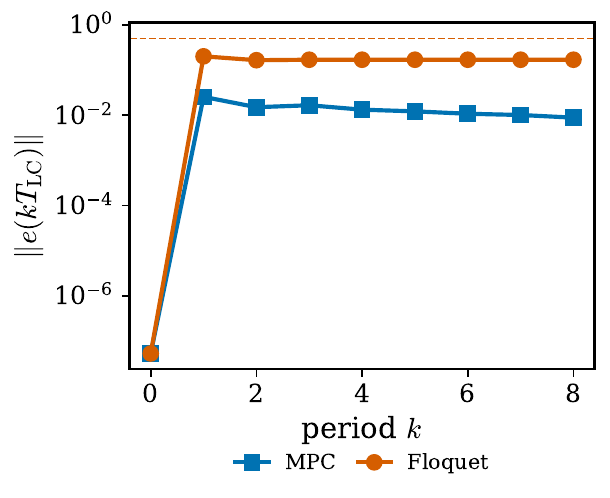}
\caption{stroboscopic error}
  \label{sfig:h2h-strobe}
\end{subfigure}\hfill
\begin{subfigure}[t]{0.32\textwidth}
  \centering
  \includegraphics[width=\linewidth]{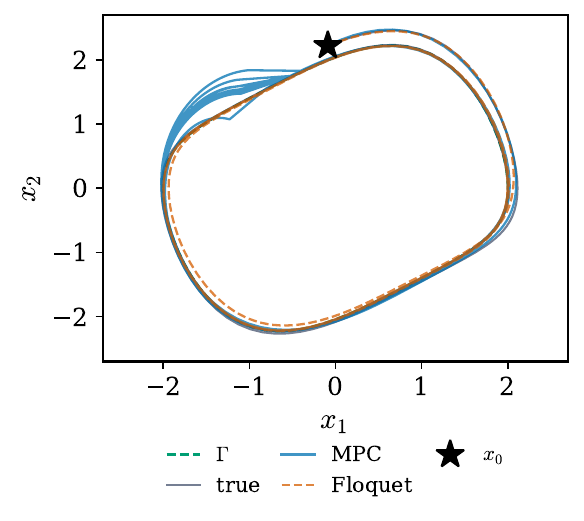}
\caption{phase portrait}
  \label{sfig:h2h-phase}
\end{subfigure}
\caption{Comparison on the van der Pol oscillator.  (a)~Continuous error: the MPC sawtooth against the Floquet transient followed by saturation.  (b)~Stroboscopic error at the period marks.  (c)~Phase portrait: both learned trajectories shadow the limit cycle $\Gamma$.}
\label{fig:h2h}
\end{figure}

\FloatBarrier
\section{Conclusions and perspectives}\label{sec:conclusions}

We first summarize what has been established and what its limits are, and then state three open problems.

\subsection{Conclusions}\label{ssec:conclusions}
This paper studied how the certified error bound of a learned flow deteriorates with the horizon $T$ and how that deterioration can be stopped.  The barrier is a property of the explicit constant \eqref{eq:uap-const} for a single network trained once on $[0,T]$: Gr\"onwall amplification of the field error and growth of the reachable tube, both attained on the linear system $\dot\bx=L\bx$ (Section~\ref{ssec:barrier}).

The model predictive strategy resets the state with data.  Provided training realizes the tolerance $\varepsilon$ on every window, the run covers the horizon, and the reachable tube is bounded with uniformly bounded data (Assumption~\ref{asm:reach}), the composite keeps the error at $\varepsilon$ uniformly in time, up to the quantified safeguard excess, with a parameter budget proportional to the window count (Theorem~\ref{thm:linearT}).  The reset confines the Gr\"onwall amplification to a single window but does not remove the growth of the tube.  Run from its own predictions instead of the data, the same composite obeys only the exponential bound of Proposition~\ref{prop:novel-ic}: the resets, not the trained windows, carry the uniform guarantee.

The Floquet strategy replaces data by stability.  For an autonomous learned field whose return map is certified to contract (Assumption~\ref{asm:cert}), one-period accuracy $\varepsilon$ on a tube around the cycle propagates to the global bound $C\varepsilon(1+t/\That)$: transverse errors are contracted at every return, and only the phase drift accumulates (Theorem~\ref{thm:floquet}, Remark~\ref{rem:tight}).  For the time-periodic architecture we deploy, the scalar certificate degenerates to the determinant of the monodromy (Remark~\ref{rem:det}), and strong stroboscopic contraction even forces a lower bound on the one-period error (Propositions~\ref{prop:obstruction} and~\ref{prop:obstruction-local}).  What survives is the orbital guarantee of Theorem~\ref{thm:orbital}, uniform in time, whose two hypotheses are measured on the trained model: the periodic encoding buys orbital stability and gives up phase fidelity.

The experiments confirm this picture.  The composite meets its tolerance up to the safeguard excess while a monolithic baseline of matched training cost exceeds it about fourfold, the window count grows linearly with the horizon, and withholding the resets produces the predicted drift.  In the certification ablation the periodic encoding is the dominant factor setting the contraction rate, and both hypotheses of Theorem~\ref{thm:orbital} are measured on every periodic-encoding model.  Cold-started autonomous training yields no certifiable orbit; warm-starting from the encoded models repairs it, with a surrogate--multiplier gap far below the measured $\varepsilon$ (Corollary~\ref{cor:lf-cert}(ii)).  Long-horizon accuracy is thus governed by the mechanism that removes accumulated error, not by network size: the width certified by \eqref{eq:uap-const} grows double exponentially in $T$, while the composite adds fixed-width networks one window at a time and the Floquet model runs at a single fixed width.

\subsection{Perspectives}\label{ssec:perspectives}
Three problems mark the most important continuations: one on the barrier itself, one on the window count of the model predictive strategy, and one on certification in higher dimension.

\begin{problem}[a lower bound for the barrier]\label{prob:lower}
Exhibit a Lipschitz constant $L$, a compact $\Ksc$, and a family of targets $\Fsc$, $L$-Lipschitz in $\bx$ uniformly in $t$ and with a fixed $C^2$ bound on the reachable set, such that every SA-NODE \eqref{eq:sanode} of width $P$ whose flow satisfies
\[
  \sup_{\bx_0\in\Ksc,\,t\in[0,T]}\norm{\widehat\Phi_{\bTheta}(t;\bx_0)-\Phi(t;\bx_0)}\;\leq\;\varepsilon
\]
obeys $P\geq\phi(T,\varepsilon)$ with $\phi$ super-linear in $T$ at fixed $\varepsilon$; or prove that no such family exists.  By Lemma~\ref{lem:flow-to-field} it suffices, within the subclass whose fields obey a fixed $C^2$ bound $B$, to bound below the width needed to approximate $\Fsc$ within $2\sqrt{\varepsilon e^{B}C_2}$ uniformly on the spacetime reachable set by a shallow network in $(\bx,t)$.  The restriction to a fixed $B$ is not cosmetic: for unrestricted parameters $B$ may grow with the width, and the reduction then says nothing.  A super-linear answer would make the barrier intrinsic and a decomposition of the horizon necessary rather than merely convenient; a linear answer would make the composite of Theorem~\ref{thm:linearT} optimal in $T$ up to constants and confine the exponential growth to the certificate.
\end{problem}

\begin{problem}[window count]\label{prob:NT}
Prove, or refute, that the partition generated by the unsafeguarded crossing rule \eqref{eq:eps-crossing}, with $s_{\min}=0$ and $S=\infty$, satisfies $N\leq cT+C$ at fixed $\varepsilon$ for Lipschitz targets, with $c$ controlled by the local Lyapunov exponents along the trajectory rather than by the sampling step.  The problem is posed with a fixed bound on the window weights: without one, arbitrarily poor windows cross arbitrarily fast, and already for $f=0$ infinitely many windows fit in a finite horizon.  The safeguarded algorithm gives $N=\Theta(T)$ trivially, with a constant set by the data resolution; Section~\ref{ssec:num-budget} supports the linear law on the pendulum.
\end{problem}

\begin{problem}[higher dimension]\label{prob:highdim}
The Liouville--Abel identity certifies only in the autonomous plane.  In state dimension $d>2$ the transverse monodromy $M_\Th^\perp$ of the learned orbit, the linearization of its return map, is a matrix, and already for $d=2$ a time-periodic field escapes the identity, since the divergence integral then returns the determinant rather than the spectral radius.  Construct differentiable surrogates for the spectral radius $\rho(M_\Th^\perp)$ that are tight enough to train against, beyond the Bauer--Fike route of Section~\ref{ssec:cert}.
\end{problem}

\FloatBarrier
\appendix

\section{\texorpdfstring{Proofs}{Proofs}}\label{app:proofs}

This appendix collects all proofs, one part per result.  Appendix~\ref{app:pf-flowfield} proves Lemma~\ref{lem:flow-to-field}; Appendix~\ref{app:pf-mpc} proves the results of Section~\ref{sec:mpc}; Appendices~\ref{app:pf-lemmas}--\ref{app:pf-orbital} prove the results of Section~\ref{sec:floquet}.

\subsection{\texorpdfstring{Proof of Lemma~\ref{lem:flow-to-field}}{Proof of the flow-to-field lemma}}\label{app:pf-flowfield}

This part proves the reduction of the width lower bound from flows to fields.

\begin{proof}[Proof of Lemma~\ref{lem:flow-to-field}]
Write $\bx=\Phi(t_0;\bx_0)$ with $\bx_0\in\Ksc$, and set
\[
  g(h)\;:=\;\Psi_{\bTheta}(t_0+h;t_0,\bx)-\Psi(t_0+h;t_0,\bx),
  \qquad h\in[0,1],
\]
so that $g(0)=0$ and $\dot g(0)=f_{\bTheta}(\bx,t_0)-\Fsc(t_0,\bx)$.  Differentiating once more along the two flows gives $\ddot g=D_{\bx}f_{\bTheta}f_{\bTheta}+\partial_tf_{\bTheta}-D_{\bx}\Fsc\,\Fsc-\partial_t\Fsc$, whence $\norm{\ddot g}\leq2(B^2+B)=C_2$.

For the size of $g$, put $\hat{\mathbf{y}}:=\widehat\Phi_{\bTheta}(t_0;\bx_0)$, so that $\norm{\bx-\hat{\mathbf{y}}}\leq\varepsilon$ by hypothesis.  The semigroup property gives $\Psi(t_0+h;t_0,\bx)=\Phi(t_0+h;\bx_0)$ and $\Psi_{\bTheta}(t_0+h;t_0,\hat{\mathbf{y}})=\widehat\Phi_{\bTheta}(t_0+h;\bx_0)$.  Hence
\begin{align*}
  \norm{g(h)}
  &\;\leq\;\norm{\Psi_{\bTheta}(t_0+h;t_0,\bx)-\Psi_{\bTheta}(t_0+h;t_0,\hat{\mathbf{y}})}
   \;+\;\norm{\widehat\Phi_{\bTheta}(t_0+h;\bx_0)-\Phi(t_0+h;\bx_0)}\\
  &\;\leq\;\varepsilon e^{Bh}+\varepsilon\;\leq\;2\varepsilon e^{B},
\end{align*}
the first term by Gr\"onwall for the single field $f_{\bTheta}$, whose Lipschitz constant in $\bx$ is at most $B$, and the second by hypothesis.  Gr\"onwall is legitimate here by a continuity argument.  The trajectory $\Psi(\cdot;t_0,\bx)$ lies on $\Ksc_T$, and $\Psi_{\bTheta}(\cdot;t_0,\hat{\mathbf{y}})=\widehat\Phi_{\bTheta}(\cdot;\bx_0)$ lies within $\varepsilon$ of it by hypothesis; only $\Psi_{\bTheta}(\cdot;t_0,\bx)$ needs the argument.  On the maximal interval where it stays in the $r$-neighborhood of $\Ksc_T$, the display bounds its distance to $\Ksc_T$ by $(1+e^{B})\varepsilon\leq r/2<r$, so it does not reach the boundary before $h=1$ and the interval is all of $[0,1]$.

Taylor expansion at $h=0$ with the integral remainder, which is the form valid for a vector-valued $g$, gives $a\dot g(0)=g(a)-\int_0^a(a-s)\ddot g(s)\,\mathrm{d}s$, hence $\norm{\dot g(0)}\leq 2\varepsilon e^{B}/a+aC_2/2$ for every $a\in(0,1]$.  The right side is minimized at $a_*=2\sqrt{\varepsilon e^{B}/C_2}$, which lies in $(0,1]$ precisely under the standing bound $\varepsilon\leq C_2e^{-B}/4$, and the minimum value is $2\sqrt{\varepsilon e^{B}C_2}$.
\end{proof}

\subsection{\texorpdfstring{Proofs of Theorem~\ref{thm:linearT} and Proposition~\ref{prop:novel-ic}}{Proofs of the model predictive bounds}}\label{app:pf-mpc}

This part proves the two results of Section~\ref{sec:mpc}: the data-assisted composite bound and the predicted-IC bound.

\begin{proof}[Proof of Theorem~\ref{thm:linearT}]
The two conclusions have different status, and the proof makes it explicit.  Two things are assumed and not proved: that the optimizer realizes the tolerance on each window, and that Algorithm~\ref{alg:mpc} terminates having covered the horizon; they are the content of Assumption~\ref{asm:partition}, verified a posteriori on each run and reported in Section~\ref{sec:numerics}.  The theorem itself assumes the continuous supremum \eqref{eq:per-window}.  A further discretization gap attaches only to the experimental verification: the suprema are checked on finite grids of initial conditions and times (the standing gap of Section~\ref{ssec:num-setup}), so what the runs verify is the sampled analogue of Assumption~\ref{asm:partition}, not the continuous supremum the theorem assumes.  Estimate \eqref{eq:linearT} unpacks Assumption~\ref{asm:partition}: the content of the data reset is that the global error is a maximum over windows and not a sum, so once each window meets the tolerance from the true state, so does the composite.  The substance is \eqref{eq:budget}, which uses no part of Assumption~\ref{asm:partition} beyond the mesh, and which is where Assumption~\ref{asm:reach} does its work.  On $I_k$ the composite is $\widehat\Phi_{\bTheta_k}$ evaluated from the true state, so no upstream error is amplified and \eqref{eq:linearT} is the maximum over $k$ of the per-window bounds \eqref{eq:per-window}.  For \eqref{eq:budget}, Theorem~\ref{thm:uap} on window $k$ has initial-condition set $\Phi(\tau_k;\Ksc)\subset\Kinf$ and horizon at most $\Tmax$, and its constant obeys
\[
  C_{\Tmax,\Phi(\tau_k;\Ksc),\Fsc(\cdot,\cdot+\tau_k)}
  \;\leq\;
  C_{\Tmax,\Kinf,\Fsc},
\]
because window $k$ is the term $\tau=\tau_k$ of the supremum \eqref{eq:window-const} and Assumption~\ref{asm:reach} makes the right side finite and independent of $k$ and $T$.  Hence $P_k \lesssim C^2_{\Tmax,\Kinf,\Fsc}\varepsilon^{-2}$ on every window, with a constant independent of $k$ and $T$; summing over $k$ concludes.
\end{proof}

\begin{proof}[Proof of Proposition~\ref{prop:novel-ic}]
We derive a one-step recursion for the error accumulated at the switch times and unroll it.  Let $E_k$ denote the accumulated error at the switch time $\tau_k$ and $\ell_k:=\tau_{k+1}-\tau_k$.  On $I_k$ the upstream error $E_k$ enters the learned flow and is amplified by at most $e^{\bar L\ell_k}$; the window adds its own training error at most $\varepsilon$, by \eqref{eq:per-window} of Assumption~\ref{asm:partition}.  Hence
\[
  E_{k+1}\;\leq\; e^{\bar L\ell_k}E_k+\varepsilon,
  \qquad E_0=0 .
\]
Unrolling the recursion gives
\[
  E_N\;\leq\;\varepsilon\sum_{k=1}^{N}e^{\bar L(\tau_N-\tau_k)}
  \;=\;\varepsilon\sum_{k=1}^{N}e^{\bar L(T-\tau_k)},
\]
where the $k=N$ term is the last window's own $\varepsilon$; this is the first bound in \eqref{eq:novel-ic}, and between switch times the same amplification argument applies.  For the second bound, with $\bar L>0$, we use $T-\tau_k\leq(N-k)\Tmax$ termwise, and the geometric sum $\sum_{m=0}^{N-1}e^{\bar Lm\Tmax}$ evaluates to the stated quotient.
\end{proof}

\subsection{\texorpdfstring{Supporting lemmas for the Floquet strategy}{Supporting lemmas for the Floquet strategy}}\label{app:pf-lemmas}

This part and the four that follow prove the results of Section~\ref{sec:floquet}.  Here we prove Lemma~\ref{lem:delta-eps}, the six lemmas preceding Theorem~\ref{thm:floquet}, an auxiliary hitting estimate (Lemma~\ref{lem:hitting}), and the scalar multiplier identity (Lemma~\ref{lem:liouville}).  Throughout Appendices~\ref{app:pf-lemmas}--\ref{app:pf-cert}, Assumptions~\ref{asm:true}--\ref{asm:cert} are in force with $\eta=0$ in Assumption~\ref{asm:aut}, and we write $\fTh(x)$; the proof of Corollary~\ref{cor:lf-cert} uses only Assumptions~\ref{asm:true}--\ref{asm:learned}, as noted there.  Lemma~\ref{lem:nearaut}, proved in Appendix~\ref{app:pf-nearaut}, removes the $\eta=0$ restriction, and Appendix~\ref{app:pf-orbital} stands outside this convention altogether.  All constants depend only on $(f,\ghat,\delta_0,B,\rho_*,\rhoT(f))$.  Table~\ref{tab:radii} records the dependency chain of the radii and smallness thresholds.

\begin{table}[t]
\caption{Dependency chain of the radii and smallness thresholds of Section~\ref{sec:floquet}.  Each row may shrink quantities from earlier rows; no entry depends on $\Th$, $\varepsilon$, $t$, or the initial condition.  The constraint on $\varepsilon_3$ anticipates the certificate-free constants $\delta_1$ and $C_U$ of Lemma~\ref{lem:winding}(i).}
\label{tab:radii}
\centering
\resizebox{\linewidth}{!}{%
\begin{tabular}{llll}
\toprule
Quantity & Introduced in & Constraint & Depends on \\
\midrule
$r_1$ & Lemma~\ref{lem:confine} & $(0,\delta_0/4]$ & $(f,\ghat,\delta_0)$ \\
$r_2,\ \varepsilon_2$ & Lemma~\ref{lem:return} & $r_2\in(0,r_1]$ & $(f,\ghat,\delta_0,B)$ \\
$r_3,\ r_3',\ \varepsilon_3$ & Lemma~\ref{lem:persist} & $r_3,r_3'\in(0,r_2]$, $\varepsilon_3\leq\varepsilon_2$, $\sqrt{2C_2\varepsilon_3}\leq\min(\delta_1,r_3/(2C_U))$ & above; $\rho_*$ via $r_3$, $\rhoT(f)$ via $r_3'$ \\
$r_5',\ \delta_1$ & Lemma~\ref{lem:winding}(i) & $r_5'\in(0,\delta_0/4]$ & $(f,\ghat,\delta_0,B)$, certificate-free \\
$r_5$ & Lemma~\ref{lem:winding}(ii) & $\min(r_5',\,r_3/(2C_U))$ & $\rho_*$ through $r_3$ only \\
$r_4,\ \varepsilon_4$ & Lemma~\ref{lem:track} & $r_4\in(0,\min(r_3,r_3',r_5)]$, $\varepsilon_4\leq\varepsilon_3$ & all of the above \\
$r_6,\ \varepsilon_5$ & Lemma~\ref{lem:hitting} & $r_6\in(0,r_5/2]$, $\varepsilon_5\leq\varepsilon_4$ & all of the above \\
$\varepsilon_0,\ r_0$ & Theorem~\ref{thm:floquet} & $r_0:=r_6$ ($N_{r_0}\subseteq N_{2r_1}$), $\varepsilon_0:=\min\{\varepsilon_5,r_0/K_3,v/2\}$ & $(f,\ghat,\delta_0,B,\rho_*,\rhoT(f))$ \\
\bottomrule
\end{tabular}}
\end{table}

\begin{proof}[Proof of Lemma~\ref{lem:delta-eps}]
The bound is a second-order Taylor expansion of the flow difference, optimized in the expansion time.  Since $\varepsilon\leq\delta_0/4$, Lemma~\ref{lem:confine} keeps both trajectories in $N_{\delta_0}$, where the bounds $B$ and $L$ apply.  Fix $x\in\Gamma$ and set $g(t):=\PhiTh(t;x)-\Phi_f(t;x)$, so that $g(0)=0$ and $\dot g(0)=\fTh(x,0)-f(x)$.  Differentiating once more along the two flows,
\[
  \ddot g \;=\; D_x\fTh\,\fTh+\partial_t\fTh-Df\,f,
  \qquad\text{so}\qquad
  \norm{\ddot g}\leq C_2
\]
by Assumption~\ref{asm:learned} and the $C^1$ bound on $f$.  Taylor expansion at $0$ gives, for every $h\in(0,\That+2]$,
\[
  \norm{\dot g(0)}\,h
  \;\leq\;\norm{g(h)}+\tfrac{1}{2}C_2h^2
  \;\leq\;\varepsilon+\tfrac{1}{2}C_2h^2 .
\]
Choosing $h=\sqrt{2\varepsilon/C_2}$, which is admissible under the stated bound on $\varepsilon$, yields $\norm{\dot g(0)}\leq\sqrt{2C_2\varepsilon}$.  This bounds the mismatch at phase $0$.  For $\eta>0$ the supremum defining $\delta$ also runs over $t\in\R$, and \eqref{eq:eta-def} gives $\norm{\fTh(x,t)-\fTh(x,0)}\leq 2\eta$ for every $t$; adding this to the phase-$0$ bound and taking the supremum over $x\in\Gamma$ gives the claim.
\end{proof}

\begin{proof}[Proof of Lemma~\ref{lem:confine}]
Both claims come from the orbital stability of $\Gamma$ and the definition of $\varepsilon$.  By orbital asymptotic stability and compactness there is $r_1\in(0,\delta_0/4]$ with $\Phi_f(t;x)\in N_{\delta_0/2}$ for all $x\in N_{2r_1}$, $t\in[0,\That+2]$.  Since $\norm{\PhiTh(t;x)-\Phi_f(t;x)}\leq\varepsilon\leq\delta_0/4$ (Assumption~\ref{asm:learned}), also $\PhiTh(t;x)\in N_{3\delta_0/4}\subset N_{\delta_0}$.  The flow closeness is the definition of $\varepsilon$.  The length $\That+2$ accommodates the time shifts of size at most $1+2\varepsilon/v$ arising in the proof of Theorem~\ref{thm:floquet}(iii).  There a time argument in $[-2,\That+3]$ occurs, outside the interval on which $\varepsilon$ is defined.  No estimate is needed outside it.  The point in question lies on the already constructed forward trajectory, and re-indexing it at as many preceding or following crossings as needed exhibits it as $\PhiTh(t'';z)$ with $z\in\Sigma_{r_2}\subset N_{2r_1}$ and $t''\in[0,\That+2]$: each step changes the time argument by a return time in $[\That/2,\That+1]$ (Lemma~\ref{lem:return}), less than the length of the target interval, so some index lands in it.  Only the confinement claim is used there; the closeness bound is invoked only for $t\in[0,\That+1]$.
\end{proof}

\begin{proof}[Proof of Lemma~\ref{lem:return}]
The return maps come from the implicit function theorem, applied to a transversality function whose time-slope is bounded below uniformly on the section.  We construct the true map first and treat the learned one as its perturbation.

\emph{The true map.}  Let $\mathbf n:=f(p)/v$ and $h_f(t,x):=\mathbf n^\top(\Phi_f(t;x)-p)$.  At $(\That,p)$ we have $h_f=0$ and $\partial_t h_f=\mathbf n^\top f(p)=v>0$, so the implicit function theorem yields $C^1$ maps $\tau_f$ and $P_f(x)=\Phi_f(\tau_f(x);x)$ on a disc $\Sigma_{r_2'}$ with $\tau_f(p)=\That$, $P_f(p)=p$; since $f\in C^2$ the flow is $C^2$ and $D\tau_f$, $DP_f$ are Lipschitz.  It remains to exclude earlier returns.  Fix $\varsigma>0$ so small that $\norm{\ghat(t)-p}\leq r$ for $t\in[0,\varsigma]\cup[\That-\varsigma,\That]$, where $r>0$ is chosen with $\mathbf n^\top f(y)\geq v/2$ whenever $\norm{y-p}\leq 2r$; this is possible by continuity of $f$ and $f(p)\neq 0$.  On the two end intervals $h_f(\cdot,x)$ is then strictly increasing for $x$ close to $p$, so it has no zero there other than the one near $\That$.  On the middle interval set
\[
  d_1\;:=\;\min_{t\in[\varsigma,\That-\varsigma]}\norm{\ghat(t)-p}\;>\;0,
\]
positive because $\Gamma$ is a simple closed curve and the minimum is over a compact set.  Trajectories starting in $\Sigma_{r_2'}$ spread by at most $e^{L\That}$ over one period, so $\norm{\Phi_f(t;x)-\ghat(t)}\leq e^{L\That}r_2'$ there.  Imposing $r_2'\,(1+e^{L\That})<d_1/2$ gives $\norm{\Phi_f(t;x)-p}\geq d_1-e^{L\That}r_2'>r_2'$ on the middle interval, so the true trajectory does not meet the disc there.  Its first return is therefore the crossing near time $\That$ and no other.

\emph{The learned map.}  Let $x\in\Sigma_{r_2}$ with $r_2\leq\min(r_2'/2,r_1)$ and $h_\Th(t,x):=\mathbf n^\top(\PhiTh(t;x)-p)$.  By Lemma~\ref{lem:confine}, $\abs{h_\Th(\tau_f(x),x)}\leq\varepsilon$.  For the time derivative, decompose through the base point $p\in\Gamma$:
\[
  \partial_t h_\Th
  \;=\;\mathbf n^\top \fTh\bigl(\PhiTh(t;x)\bigr)
  \;\geq\;
  \underbrace{\mathbf n^\top f(p)}_{=\,v}
  \;-\;\underbrace{\norm{\fTh(p)-f(p)}}_{\leq\,\delta}
  \;-\;\underbrace{\norm{\fTh(\PhiTh(t;x))-\fTh(p)}}_{\leq\,
  B\,\norm{\PhiTh(t;x)-p}} .
\]
Lemma~\ref{lem:delta-eps} gives $\delta\leq\sqrt{2C_2\varepsilon}$, so after shrinking $\varepsilon_2$ we have $\delta\leq v_{\min}/(2(\That+1))\leq v/(2(\That+1))$, a fixed margin below $v/2$.  For $(t,x)$ with $\abs{t-\tau_f(x)}$, $\norm{x-p}$ and $\varepsilon$ small, $\norm{\PhiTh(t;x)-p}\leq\varepsilon+\norm{\Phi_f(t;x)-p}$ is as small as desired by continuity of the true flow; after one shrinking of $r_2,\varepsilon_2$ and of the time interval, $B\norm{\PhiTh(t;x)-p}\leq v/2-\delta$, hence $\partial_t h_\Th\geq v/2$.  A scalar function with derivative at least $v/2$ and value of modulus at most $\varepsilon$ at $\tau_f(x)$ has a unique nearby zero $\tauTh(x)$, with $\abs{\tauTh(x)-\tau_f(x)}\leq 2\varepsilon/v$.  No earlier crossing of the disc occurs.  On the middle interval the $\fTh$-trajectory stays within $\varepsilon$ of the $f$-trajectory, hence at distance at least $d_1/2-\varepsilon\geq d_1/4>r_2'$ from $p$ once $\varepsilon_2\leq d_1/4$; note $r_2'<d_1/4$ from the choice $r_2'(1+e^{L\That})<d_1/2$.  On the two end intervals the trajectory remains in the $2r$-ball around $p$; after shrinking $r$ so that in addition $B(2r+\varepsilon)\leq v/2-\delta$, the slope bound $\partial_t h_\Th\geq v/2$ holds there, so $h_\Th$ is strictly increasing on both end intervals.  In particular the zero at $t=0$, where the trajectory starts on the section, is left immediately, and the only zero with a crossing of the disc is the one near $\tau_f(x)$.  Smoothness of $\tauTh,\PTh$ and the Lipschitz bound on $D\PTh$ follow from the implicit function theorem and the $C^2$ bounds on $\PhiTh$ from the variational equations, which depend only on $(B,\That)$.  Finally,
\[
  \norm{\PTh(x)-P_f(x)}
  \leq \norm{\PhiTh(\tauTh;x)-\PhiTh(\tau_f;x)}
      +\norm{\PhiTh(\tau_f;x)-\Phi_f(\tau_f;x)}
  \leq B\cdot\tfrac{2}{v}\varepsilon+\varepsilon
  = K_1\varepsilon,
\]
and $\tau_f,\tauTh\in[\That/2,\That+1]$ after one more shrinking.
\end{proof}

\begin{proof}[Proof of Lemma~\ref{lem:persist}]
The proof has two parts.  An adapted norm turns the spectral bound of Assumption~\ref{asm:cert} into a genuine contraction, uniformly over the matrices that can arise; Banach's fixed point theorem then produces the orbit and its bounds.

\emph{Adapted norms.}  For a matrix $A$ with $\rho(A)\leq\rho_*$ and $\varsigma>0$ there is a norm with $\norm{A}_*\leq\rho_*+\varsigma$ (\cite[Proposition~2.84]{Chicone}; a Jordan basis with scaled nilpotent part).  Apply this to $A=D\PTh(p)$ with $\varsigma=(\bar\rho-\rho_*)/2$; by the Lipschitz continuity of $D\PTh$ (constant $\Lambda$), for $\norm{x-p}\leq r_3:=\min(r_2,\,c_1(\bar\rho-\rho_*)/(2c_2\Lambda))$ we get $\norm{D\PTh(x)}_*\leq\bar\rho$, and the mean value inequality makes $\PTh$ a $\bar\rho$-contraction on $\Sigma_{r_3}$ in the adapted norm. The same construction for $f$ (whose $DP_f(p)$ has spectral radius $\rhoT(f)<1$ by Assumption~\ref{asm:true}) gives $\norm{\cdot}_{*f}$, $c_1',c_2'$, $r_3'$.  For $d=2$ the section is one-dimensional and one may take the absolute value, with $c_1=c_2=1$.  For $d\geq 3$ the equivalence constants have to be uniform over the family
\[
  \mathcal{M}:=\bigl\{A\in\R^{(d-1)\times(d-1)}:
  \rho(A)\leq\rho_*,\ \norm{A}\leq C_M\bigr\},
  \qquad C_M:=C(f,B,\That),
\]
which is compact because the spectral radius is continuous.  We claim that one exponent serves the whole family.  Write $\theta:=\rho_*+\varsigma$.  For each $A_0\in\mathcal{M}$, Gelfand's formula gives an exponent $j_0$ with $\norm{A_0^{j_0}}^{1/j_0}\leq\rho_*+\varsigma/2$; the map $A\mapsto\norm{A^{j_0}}^{1/j_0}$ is continuous, so $\norm{A^{j_0}}^{1/j_0}\leq\theta$ on an open ball around $A_0$.  Finitely many such balls cover $\mathcal{M}$; let $j_1,\dots,j_m$ be the associated exponents and put $j_*:=\operatorname{lcm}(j_1,\dots,j_m)$.  A finite subcover alone would only give finitely many exponents; the passage to a common one uses submultiplicativity.  Indeed, if $A\in\mathcal{M}$ lies in the $i$-th ball and $j_*=q\,j_i$, then
\[
  \norm{A^{j_*}}\;\leq\;\norm{A^{j_i}}^{q}
  \;\leq\;\theta^{\,j_i q}\;=\;\theta^{\,j_*} .
\]
For such an $A$ define the adapted norm
\[
  \norm{x}_{*}:=\sum_{i=0}^{j_*-1}\theta^{-i}\,\norm{A^{i}x} .
\]
Summing the same series shifted by one index and using $\norm{A^{j_*}x}\leq\theta^{\,j_*}\norm{x}$ gives $\norm{Ax}_*\leq\theta\norm{x}_*$, that is $\norm{A}_*\leq\theta$.  The $i=0$ term gives $\norm{x}\leq\norm{x}_*$, and $\norm{A^i}\leq C_M^i$ gives $\norm{x}_*\leq c_2\norm{x}$ with $c_2:=\sum_{i=0}^{j_*-1}(C_M/\theta)^i$.  Both $c_1=1$ and $c_2$ depend only on $(j_*,C_M,\rho_*,\varsigma)$, hence are uniform over $\mathcal{M}$.

\emph{Fixed point.}  By Lemma~\ref{lem:return}, $\norm{\PTh(p)-p}_*\leq c_2K_1\varepsilon$.  On the ball of radius $R:=c_2K_1\varepsilon/(1-\bar\rho)$ around $p$,
\[
  \norm{\PTh(x)-p}_*
  \leq \bar\rho\norm{x-p}_*+\norm{\PTh(p)-p}_*
  \leq \bar\rho R + c_2K_1\varepsilon = R,
\]
provided $R\leq c_1r_3$; together with the requirement $\sqrt{2C_2\varepsilon}\leq\min(\delta_1,\,r_3/(2C_U))$, where $\delta_1$ and $C_U$ are the certificate-free constants of Lemma~\ref{lem:winding}(i), used in the stability step below and in Lemma~\ref{lem:winding}(ii), this defines $\varepsilon_3$.  Banach's fixed point theorem gives the unique $x_\Th^*$ with $\norm{x_\Th^*-p}\leq c_1^{-1}R=K_2\varepsilon$ and geometric convergence of the iterates.  For orbital exponential stability, apply Lemma~\ref{lem:winding}\textup{(i)}, which uses neither Assumption~\ref{asm:cert} nor the present lemma: it sends every point of a neighborhood of $\gamma_\Th$ to $\Sigma_{r_3}$ within time $\That+1$.  The section distance to $x_\Th^*$ is then contracted by $\bar\rho$ at each return, while deviations between consecutive crossings are amplified by at most $e^{B(\That+1)}$.  Together these convert the geometric decay on the section into exponential decay of the distance to $\gamma_\Th$.  For the Hausdorff bound, for $t\in[0,\That+1]$,
\[
  \norm{\PhiTh(t;x_\Th^*)-\ghat(t)}
  \leq \norm{\PhiTh(t;x_\Th^*)-\Phi_f(t;x_\Th^*)}
  +\norm{\Phi_f(t;x_\Th^*)-\Phi_f(t;p)}
  \leq \varepsilon+e^{L(\That+1)}K_2\varepsilon =: K_3\varepsilon,
\]
and every point of $\gamma_\Th$ and of $\Gamma$ is of this form.  For the period, $\abs{T_\Th-\That}\leq\abs{\tauTh(x_\Th^*)-\tauTh(p)} +\abs{\tauTh(p)-\tau_f(p)} \leq\Lip(\tauTh)K_2\varepsilon+\tfrac{2}{v}\varepsilon=:K_4\varepsilon$. If $\Ltraj(\Th)=0$, then $\fTh=f$ on $\Gamma$, so $\ghat$ solves $\dot x=\fTh(x)$ and $\PTh(p)=p$; uniqueness gives $x_\Th^*=p$, $\gamma_\Th=\Gamma$, $T_\Th=\That$.
\end{proof}

\begin{proof}[Proof of Lemma~\ref{lem:winding}]
Write $g$ for the field of part~(i); the case of interest is $g=\fTh$.  The strategy is to control the phase and the transverse coordinate of a $g$-trajectory simultaneously.  The two estimates are coupled, because the bound on the phase speed needs the trajectory to stay near $\Gamma$ while the transverse bound needs the phase speed to be of order one, so we derive both on a maximal interval and close the loop by continuation.

Parametrize the tube by $(s,u)\mapsto\ghat(s)+U(s)u$, with $U(s)$ an orthonormal frame of the normal bundle along $\ghat$.  For a small enough radius this is a diffeomorphism onto its image, with distortion constants depending only on $(f,\ghat,\delta_0)$.  Differentiating $x(t)=\ghat(s(t))+U(s(t))u(t)$ along a $g$-trajectory in the tube and projecting on the tangent and the normal directions gives
\begin{equation}\label{eq:tube-ode}
  \dot s
  =\frac{\langle\dot\ghat(s),g(x,t)\rangle}{\norm{\dot\ghat(s)}^2}
   \bigl(1+O(\abs{u})\bigr),
  \qquad
  \dot u
  = U(s)^{\!\top}\bigl[g(x,t)-\dot\ghat(s)\,\dot s
                        -U'(s)u\,\dot s\bigr].
\end{equation}

\emph{Step 1: the continuation interval.}  Fix $\varrho\leq\min(\delta_0,r_2)$ small enough that the phase-speed estimate \eqref{eq:phase-speed} below holds on $N_\varrho$ and that $\varrho<d_1/2$, with $d_1$ the middle-interval distance from the proof of Lemma~\ref{lem:return}; shrinking $\varrho$ only strengthens the estimates below, and require $r_5'\leq\varrho/2$ from the outset.  Set
\[
  t^*:=\sup\bigl\{t\geq 0 : \abs{u(\tau)}\leq\varrho
  \ \text{ for all }\tau\in[0,t]\bigr\},
\]
which is positive because $\abs{u(0)}\leq r_0\leq r_5'\leq\delta_0/4$.  All estimates of Steps 2 and 3 are derived on $[0,t^*)$, where $\abs{u}\leq\varrho$ is available, and Step 4 shows $t^*>\That+1$.

\emph{Step 2: the phase advances.}  At $u=0$ one has $\dot\ghat(s)=f(\ghat(s))$, so the quotient in \eqref{eq:tube-ode} equals $\langle f,g\rangle/\norm{f}^2$ evaluated at $\ghat(s)$.  For the true field this is exactly one.  For $g$ it is $1+O(\delta/v_{\min})$, because $\norm{g-f}\leq\delta$ on $\Gamma$ and $\norm{f}\geq v_{\min}$ there; the two agree only when $g=f$ on $\Gamma$.  Off the orbit we use $\norm{g(x,t)-f(\ghat(s))}\leq\delta+(B{+}2L)\abs{u}$, which combines $\norm{g-f}\leq\delta+(B{+}L)\dist(x,\Gamma)$ with the $L$-Lipschitz continuity of $f$, and obtain on $[0,t^*)$
\begin{equation}\label{eq:phase-speed}
  \dot s \;\geq\;
  1-\frac{\delta+(B{+}2L)\varrho}{v_{\min}}-C\varrho
  \;\geq\;\frac{\That}{\That+1},
  \qquad
  \dot s\;\leq\;2,
\end{equation}
after shrinking $\varrho$, and with it $r_5'$ and $\delta_1$, once; the constant $C$ absorbs the $O(\abs{u})$ distortion of the chart.  In particular $\dot s>0$, so the phase increases strictly and the trajectory cannot stall.

\emph{Step 3: the transverse coordinate.}  Positivity of $\dot s$ constrains the phase only, and by itself does not prevent the trajectory from leaving the tube through its lateral boundary.  The transverse coordinate must be estimated separately, and its equation carries the orbit mismatch as a forcing term.  Insert $g=f+(g-f)$ and $f(x)=f(\ghat(s))+Df(\ghat(s))U(s)u+O(\abs{u}^2)$ into the second equation of \eqref{eq:tube-ode}.  The term proportional to $f(\ghat(s))=\dot\ghat(s)$ is tangent, so $U(s)^{\!\top}\dot\ghat(s)=0$ and that whole contribution, including the part carrying $1-\dot s$, drops out under the projection.  The remaining terms are bounded on $[0,t^*)$, using $\dist(x,\Gamma)\leq\abs{u}\leq\varrho$ and the bound $\dot s\leq 2$ of \eqref{eq:phase-speed}, by
\begin{equation}\label{eq:transverse-ode}
  \abs{\dot u}\;\leq\;C_1\abs{u}+\delta,
  \qquad
  C_1:=L+C\varrho+(B{+}L)+2\sup_s\norm{U'(s)} .
\end{equation}
Gr\"onwall's inequality applied to \eqref{eq:transverse-ode} gives, for $t\in[0,t^*)$ with $t\leq\That+1$,
\begin{equation}\label{eq:winding-transverse}
  \abs{u(t)}\;\leq\;e^{C_1(\That+1)}\bigl(r_0+\delta(\That+1)\bigr)
  \;\leq\;C_U\,(r_0+\delta),
  \qquad
  C_U:=(\That+1)\,e^{C_1(\That+1)} .
\end{equation}
The additive term $\delta$ is the forcing produced by $g-f$ and cannot be removed.  For the true field $\delta=0$ and the estimate reduces to $\abs{u(t)}\leq e^{C_1(\That+1)}r_0$, the form used in Lemma~\ref{lem:hitting}.

\emph{Step 4: closing the continuation.}  Shrink $r_5'$ and $\delta_1$ once more so that $C_U(r_5'+\delta_1)<\varrho$.  Then \eqref{eq:winding-transverse} gives $\abs{u(t)}<\varrho$ strictly, for every $t<\min(t^*,\That+1)$.  Were $t^*\leq\That+1$, continuity would force $\abs{u(t^*)}=\varrho$, contradicting that strict inequality; hence $t^*>\That+1$ and both estimates hold on all of $[0,\That+1]$.  In particular the trajectory stays in $N_{\varrho}\subset N_{\delta_0}$ and never reaches the lateral boundary, on $[0,\That+1]$ and not beyond.

\emph{Step 5: arrival at the section.}  By \eqref{eq:phase-speed} the phase increases at rate at least $\That/(\That+1)$, so $s$ traverses a full period, and the trajectory therefore meets $\Sigma$, within time $\That+1$.  The crossing is transversal because $\dot s>0$ there.  At the crossing $s\equiv 0$ modulo $\That$, so the point is $p+U(0)u$ with $U(0)u\perp\dot\ghat(0)=f(p)$; it therefore lies in the hyperplane $\Sigma$, and its distance to $p$ equals $\abs{u}$, bounded by $C_U(r_0+\delta)$ through \eqref{eq:winding-transverse}.  This proves part~(i).

\emph{Step 6: uniqueness.}  Assume now Assumption~\ref{asm:cert} and $C_U(r_0+\delta)\leq r_3$, and let $\gamma\subset N_{r_0}$ be a periodic orbit of $\fTh$.  By part~(i) it crosses $\Sigma$ at some $\bar x$ with $\norm{\bar x-p}\leq C_U(r_0+\delta)\leq r_3$, so $\bar x\in\Sigma_{r_3}$, and periodicity gives $\PTh^m(\bar x)=\bar x$ for some $m\geq 1$.  Every iterate $\PTh^j(\bar x)$ is again a crossing of the same orbit $\gamma\subset N_{r_0}$, so part~(i), applied from the preceding crossing, keeps it in $\Sigma_{r_3}$ as well.  Each step then contracts in the adapted norm of Lemma~\ref{lem:persist} along the segment to $x_\Th^*$, which stays in the convex disc $\Sigma_{r_3}$, so $\norm{\bar x-x_\Th^*}_*=\norm{\PTh^m(\bar x)-\PTh^m(x_\Th^*)}_*\leq\bar\rho^{\,m}\norm{\bar x-x_\Th^*}_*$, which forces $\bar x=x_\Th^*$ and $\gamma=\gamma_\Th$.
\end{proof}

\begin{proof}[Proof of Lemma~\ref{lem:track}]
The two orbits are compared through the contraction of the true return map, with the difference of the maps as a forcing term.  We first record that the iterates stay where the contraction is available.  Write $r_7:=\min(r_3,r_3')$, so that $\Sigma_{r_7}=\Sigma_{r_3}\cap\Sigma_{r_3'}$, and recall $\norm{x_\Th^*-p}\leq K_2\varepsilon$ from Lemma~\ref{lem:persist}.  Both balls below are calibrated to $\Sigma_{r_7}$, and the one centred at $x_\Th^*$ carries the offset $K_2\varepsilon$ so that it lies in $\Sigma_{r_7}$ and not merely within $r_7$ of $x_\Th^*$.  In the adapted norms of Lemma~\ref{lem:persist} the balls
\[
  B_*:=\{y:\norm{y-x_\Th^*}_*\leq c_1(r_7-K_2\varepsilon)\},
  \qquad
  B_{*f}:=\{y:\norm{y-p}_{*f}\leq c_1'r_7\}
\]
both lie in $\Sigma_{r_7}$, and each is invariant under its own map.  For the inclusions, $y\in B_{*f}$ gives $\norm{y-p}\leq(c_1')^{-1}\norm{y-p}_{*f}\leq r_7$, while $y\in B_*$ gives $\norm{y-p}\leq\norm{y-x_\Th^*}+\norm{x_\Th^*-p}\leq(r_7-K_2\varepsilon)+K_2\varepsilon=r_7$, the offset being what the radius of $B_*$ was reduced by.  Each ball is convex and centred at the fixed point of its own map, and $\Sigma_{r_7}\subseteq\Sigma_{r_3}\cap\Sigma_{r_3'}$ is where the two contraction bounds of Lemma~\ref{lem:persist} hold, so the mean value inequality along the segment to the centre gives invariance.  Choose
\[
  r_4\;\leq\;\tfrac12\min\{c_1/c_2,\ c_1'/c_2'\}\,r_7,
  \qquad r_4\leq r_5,
\]
and $\varepsilon_4$ so small that $K_2\varepsilon\leq\tfrac{r_7}{6}\min\{c_1/c_2,\ c_1'/c_2'\}$.  The disc $\Sigma_{r_4}$, enlarged by $K_2\varepsilon$, then lies in $B_*\cap B_{*f}$: this inclusion runs from the Euclidean norm into the adapted ones and so costs $c_2$ and $c_2'$, requiring $c_2'(r_4+K_2\varepsilon)\leq c_1'r_7$ and $c_2(r_4+2K_2\varepsilon)\leq c_1(r_7-K_2\varepsilon)$.  The first holds since $c_2'r_4\leq\tfrac12 c_1'r_7$ and $c_2'K_2\varepsilon\leq\tfrac16 c_1'r_7$; the second since $c_2r_4\leq\tfrac12 c_1r_7$, $2c_2K_2\varepsilon\leq\tfrac13 c_1r_7$, and $c_1(r_7-K_2\varepsilon)\geq\tfrac56 c_1r_7$.  Then every iterate of either sequence remains in $\Sigma_{r_7}=\Sigma_{r_3}\cap\Sigma_{r_3'}$, which is what the estimate below uses.  Set $\tilde d_k:=\norm{P_f^k(y_f)-\PTh^k(y_\Th)}_{*f}$.  Then
\[
  \tilde d_{k+1}
  \leq \norm{P_f(P_f^ky_f)-P_f(\PTh^ky_\Th)}_{*f}
  +\norm{(P_f-\PTh)(\PTh^ky_\Th)}_{*f}
  \leq \bar\rho_f\,\tilde d_k + c_2'K_1\varepsilon,
\]
by Lemma~\ref{lem:persist} and Lemma~\ref{lem:return}.  With $\tilde d_0\leq c_2'K_1\varepsilon$, induction gives $\tilde d_k\leq c_2'\,\tfrac{2K_1}{1-\bar\rho_f}\varepsilon$ for all $k$, and norm equivalence concludes.
\end{proof}

\begin{lemma}[hitting the section]\label{lem:hitting}
There are $r_6\in(0,r_5/2]$ and $\varepsilon_5\in(0,\varepsilon_4]$ such that for $\varepsilon\leq\varepsilon_5$ and every $x_0\in N_{r_6}$: the $f$- and $\fTh$-trajectories from $x_0$ first cross $\Sigma_{r_4}$ transversally at times $\sigma_f,\sigma_\Th\leq\That+1$ and points $y_f,y_\Th\in\Sigma_{r_4}$, with $\abs{\sigma_\Th-\sigma_f}\leq\tfrac{2}{v}\varepsilon$ and $\norm{y_\Th-y_f}\leq K_1\varepsilon$.
\end{lemma}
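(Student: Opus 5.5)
The plan is to reproduce, for an initial point off the section, the two-step scheme used in the proof of Lemma~\ref{lem:return}: first a crossing of the true flow, located with Lemma~\ref{lem:winding}(i), then a perturbative crossing of the learned flow through the implicit-function and slope argument.

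\emph{The true trajectory.}  I would apply Lemma~\ref{lem:winding}(i) with $g=f$ and $\delta=0$.  For $x_0\in N_{r_6}$ with $r_6\leq r_5'$, the $f$-trajectory stays in $N_{\delta_0}$ on $[0,\That+1]$.  It crosses $\Sigma$ transversally at a time $\sigma_f\leq\That+1$, at a point $y_f$ with $\norm{y_f-p}\leq C_U r_6$ by the $\delta=0$ form of \eqref{eq:winding-transverse}.  Choosing $r_6\leq r_4/(2C_U)$, together with $r_6\leq r_5/2$, places $y_f\in\Sigma_{r_4/2}$.

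I take the first crossing of $\Sigma_{r_4}$ to be the first time the phase $s$ reaches a multiple of $\That$ in a neighborhood of the section.  Two observations make this well defined.  At positive distance from $p$ the trajectory meets the hyperplane $\Sigma$ only far outside $\Sigma_{r_4}$, by the middle-interval bound $d_1$ from Lemma~\ref{lem:return}.  Inside the tube, a crossing near $p$ corresponds to $s\equiv0$, and $s$ is strictly increasing.  A start with $s(0)\equiv 0$, that is $x_0$ already on the section, counts as $\sigma=0$ or as the next return, by a fixed convention applied to both flows alike.

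\emph{The learned trajectory.}  Set $h_\Th(t):=\mathbf n^\top(\PhiTh(t;x_0)-p)$.  Since $N_{r_6}\subseteq N_{2r_1}$ and $\sigma_f\leq\That+1$, Assumption~\ref{asm:learned} gives $\abs{h_\Th(\sigma_f)}\leq\varepsilon$.  Near $\sigma_f$ both trajectories lie within $C_Ur_6+\varepsilon$ of $p$, so the slope estimate from the proof of Lemma~\ref{lem:return} yields $\partial_th_\Th\geq v/2$ once $r_6$ and $\varepsilon_5$ are small.  This produces a unique zero $\sigma_\Th$ with $\abs{\sigma_\Th-\sigma_f}\leq 2\varepsilon/v$.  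The displacement bound then follows from the triangle inequality
\[
\norm{y_\Th-y_f}\leq\norm{\PhiTh(\sigma_\Th;x_0)-\PhiTh(\sigma_f;x_0)}+\norm{\PhiTh(\sigma_f;x_0)-\Phi_f(\sigma_f;x_0)}\leq B\tfrac{2}{v}\varepsilon+\varepsilon=K_1\varepsilon,
\]
and $\varepsilon_5\leq r_4/(2K_1)$ keeps $y_\Th$ in $\Sigma_{r_4}$.

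\emph{The main obstacle} is to show that $\sigma_\Th$ is the \emph{first} crossing of the learned flow, and that it happens by time $\That+1$.  Here I would use Lemma~\ref{lem:winding}(i) for $g=\fTh$, with $\delta\leq\sqrt{2C_2\varepsilon}\leq\delta_1$ by Lemma~\ref{lem:delta-eps}.  It keeps the $\fTh$-trajectory in the tube with strictly increasing phase, so crossings of the small disc occur only where $s\equiv0$.

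Away from $p$, the learned trajectory is $\varepsilon$-close to the true one on $[0,\That+1]$.  The true trajectory stays away from $\Sigma_{r_4}$ before $\sigma_f$ except near $\sigma_f$, so the learned trajectory cannot meet $\Sigma_{r_4}$ earlier either.  Near $\sigma_f$, monotonicity of $h_\Th$ excludes a second zero.

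The delicate case is $\sigma_f$ close to $0$ or close to $\That+1$.  In that case I would use the slope bound on the whole small window around $p$, as is done for the end intervals in Lemma~\ref{lem:return}.  If needed, I would shrink $r_6$ so that $\sigma_f\leq\That+1-2\varepsilon_5/v$; this is possible because $\dot s\geq\That/(\That+1)$ forces $\sigma_f\leq\That(1+1/\That)\cdot(1-\text{margin})$ after enlarging the speed bound slightly.
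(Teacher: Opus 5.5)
Your proposal is correct and is essentially the paper's proof: Lemma~\ref{lem:winding}(i) for $f$ with $\delta=0$ places $y_f$ in $\Sigma_{r_4/2}$, the slope-$v/2$ simple-zero argument from the proof of Lemma~\ref{lem:return} produces $\sigma_\Th$ and, with the triangle inequality, the $K_1\varepsilon$ bound, and the middle-interval distance $d_1$ together with monotone phase shows that this is the first crossing. You handle the ends of the time window explicitly, which the paper leaves implicit; note that $\sigma_\Th\le\That+1$ also follows from Lemma~\ref{lem:winding}(i) applied to $g=\fTh$, which you already invoke, once $C_U(r_6+\delta)\le r_4$, so the sharpened speed bound for $f$ is not needed.
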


\begin{proof}
The estimate is Lemma~\ref{lem:winding}\textup{(i)} for the true field, sharpened so that the crossing lands in $\Sigma_{r_4}$, together with a simple-zero argument for the learned crossing.  Lemma~\ref{lem:winding}\textup{(i)} applied to $f$ itself, the case $\delta=\varepsilon=0$, yields the first $f$-crossing within time $\That+1$, transversal, at $y_f$ with $\norm{y_f-p}\leq C_Ur_6\leq r_4/2$ after shrinking $r_6$; then $\norm{y_\Th-p}\leq r_4/2+K_1\varepsilon\leq r_4$ for small $\varepsilon$.  The crossing is a simple zero with slope at least $v/2$ of $h_f(t):=\mathbf n^\top(\Phi_f(t;x_0)-p)$; by Lemma~\ref{lem:confine}, $\abs{h_\Th(\sigma_f)}\leq\varepsilon$ and $\partial_t h_\Th\geq v/2$ near $\sigma_f$ as in the proof of Lemma~\ref{lem:return}, so $h_\Th$ has a unique nearby zero $\sigma_\Th$ with $\abs{\sigma_\Th-\sigma_f}\leq 2\varepsilon/v$.  That this is the \emph{first} $\fTh$-crossing of the disc uses more than monotonicity of the phase, since the trajectory also meets the hyperplane at phases away from $0$.  Since $\varrho<d_1/2$ by construction in the proof of Lemma~\ref{lem:winding}, shrink $r_4$ so that $r_4<d_1/2$, whence $r_4+\varrho<d_1$, with $d_1$ the middle-interval distance of the proof of Lemma~\ref{lem:return} and $\varrho$ the tube radius of Lemma~\ref{lem:winding}.  Any point of the tube within $r_4$ of $p$ then has phase within $\varsigma$ of $0$ modulo $\That$, so the hyperplane crossings at intermediate phases fall outside $\Sigma_{r_4}$, and monotonicity of the phase gives the first crossing.  The point estimate follows as in Lemma~\ref{lem:return}.
\end{proof}

\begin{proof}[Proof of Lemma~\ref{lem:clocks}]
Each return adds a clock difference of order $\varepsilon$, and summing gives the bound.  Each increment satisfies
\[
  \abs{\tauTh(\PTh^ky_\Th)-\tau_f(P_f^ky_f)}
  \leq \abs{(\tauTh-\tau_f)(\PTh^ky_\Th)}
  +\abs{\tau_f(\PTh^ky_\Th)-\tau_f(P_f^ky_f)}
  \leq \tfrac{2}{v}\varepsilon+\Lip(\tau_f)K_5\varepsilon,
\]
by Lemmas~\ref{lem:return} and~\ref{lem:track}; summing over $k<n$ gives the claim.
\end{proof}

\begin{lemma}[scalar multiplier in dimension two]\label{lem:liouville}
Let $d=2$ and let Assumptions~\ref{asm:true}--\ref{asm:learned} hold with $\eta=0$ and $\varepsilon\leq\varepsilon_2$.  If $\bar x\in\Sigma_{r_2}$ is a fixed point of the return map $\PTh$, with orbit $\bar\gamma:=\{\PhiTh(t;\bar x):0\leq t\leq\bar\tau\}$ of period $\bar\tau:=\tauTh(\bar x)$, then
\[
  D\PTh(\bar x)
  \;=\;
  \exp\Bigl(\int_0^{\bar\tau}\operatorname{div}\fTh(\bar\gamma(t))\,
  \mathrm{d}t\Bigr)\;>\;0 .
\]
No contraction hypothesis is used; the identity holds at any periodic orbit of the autonomous planar field $\fTh$.
\end{lemma}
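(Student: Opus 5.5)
The plan is to realize $D\PTh(\bar x)$ as the restriction of the monodromy matrix $Y(\bar\tau):=D_x\PhiTh(\bar\tau;\bar x)$ to the section, and to read off the scalar multiplier from the determinant via Liouville's formula. Since $\fTh$ is autonomous and $C^2$, $Y$ solves the variational equation $\dot Y=D\fTh(\bar\gamma(t))Y$, $Y(0)=I$. Liouville--Abel then gives
\[
  \det Y(\bar\tau)=\exp\Bigl(\int_0^{\bar\tau}\operatorname{div}\fTh(\bar\gamma(t))\,\mathrm{d}t\Bigr)>0 .
\]

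Next I will identify the two eigenvalues of $Y(\bar\tau)$. Differentiating $\PhiTh(t;\PhiTh(s;\bar x))=\PhiTh(s;\PhiTh(t;\bar x))$ in $s$ at $s=0$ gives $Y(t)\fTh(\bar x)=\fTh(\bar\gamma(t))$. At $t=\bar\tau$, periodicity then gives $Y(\bar\tau)\fTh(\bar x)=\fTh(\bar x)$, so $\fTh(\bar x)$ is an eigenvector with eigenvalue $1$. It is nonzero by the slope bound $\mathbf n^\top\fTh\geq v/2$ established in the proof of Lemma~\ref{lem:return}.

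For the return map, I differentiate $\PTh(x)=\PhiTh(\tauTh(x);x)$ at $\bar x$ along the unit tangent $w$ of the one-dimensional section, $w\perp\mathbf n$. This yields
\[
  D\PTh(\bar x)w=Y(\bar\tau)w+\fTh(\bar x)\,D\tauTh(\bar x)w .
\]
The right-hand side lies in $\Sigma$, so $D\PTh(\bar x)w=\lambda w$ for a scalar $\lambda$, and $Y(\bar\tau)w=\lambda w-c\,\fTh(\bar x)$ for some $c$. In the basis $\{\fTh(\bar x),w\}$, which is a basis because $\mathbf n^\top\fTh(\bar x)>0$ while $\mathbf n^\top w=0$, the matrix of $Y(\bar\tau)$ is therefore upper triangular with diagonal $(1,\lambda)$. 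Hence $\det Y(\bar\tau)=\lambda$, and combining this with the Liouville formula gives the identity together with positivity.

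The only point needing care is that $\tauTh$ is $C^1$ at $\bar x$ and that the transversality $\mathbf n^\top\fTh(\bar x)\neq0$ holds. Both are supplied by Lemma~\ref{lem:return} for $\varepsilon\leq\varepsilon_2$; no contraction hypothesis enters. Everything else is routine linear algebra, so I expect the main, if mild, obstacle to be the careful bookkeeping that makes the basis change legitimate.
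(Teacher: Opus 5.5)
Your proposal is correct and follows essentially the same route as the paper: Liouville's formula for the monodromy $Y(\bar\tau)$, the tangent vector $\fTh(\bar x)$ as an eigenvector with eigenvalue one, and the triangular form in the basis $\{\fTh(\bar x),w\}$, which gives $\det Y(\bar\tau)=D\PTh(\bar x)$. The only difference is that you derive the triangular structure yourself by differentiating $\PTh(x)=\PhiTh(\tauTh(x);x)$ at $\bar x$, whereas the paper cites Chicone's Proposition~2.80 for that step.
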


\begin{proof}
The identity is Liouville's formula for the variational equation along the periodic orbit, read on the section.  By Lemma~\ref{lem:return} the return map $\PTh$ is $C^1$ at $\bar x$ and $\fTh(\bar x)\neq 0$, so $\bar\gamma$ is a genuine periodic orbit of the autonomous field $\fTh$.  Let $X(t)$ solve the variational equation $\dot X=D\fTh(\bar\gamma(t))X$, $X(0)=I$, and $M:=X(\bar\tau)$.  Since $\fTh$ is autonomous, $t\mapsto\fTh(\bar\gamma(t))$ solves the variational equation and is $\bar\tau$-periodic, so $M\fTh(\bar x)=\fTh(\bar x)$: the tangent direction is a Floquet direction with multiplier one.  In the basis $\{\fTh(\bar x),w\}$ with $w$ spanning $T_{\bar x}\Sigma$, the matrix $M$ is block triangular with diagonal $(1,m)$ and $m=D\PTh(\bar x)$ \cite[Proposition~2.80]{Chicone}.  Liouville's formula gives $\det M=\exp\bigl(\int_0^{\bar\tau}\operatorname{tr} D\fTh(\bar\gamma)\,\mathrm{d}t\bigr)$, and $\det M=1\cdot m$; positivity follows from the exponential.
\end{proof}

\subsection{\texorpdfstring{Proof of Theorem~\ref{thm:floquet}}{Proof of the linear long-horizon bound}}\label{app:pf-floquet}

We now prove the linear long-horizon bound, assembling the lemmas of Appendix~\ref{app:pf-lemmas}.

\begin{proof}[Proof of Theorem~\ref{thm:floquet}]
The proof assembles the lemmas.  Transverse error is contracted at every return (Lemmas~\ref{lem:persist} and~\ref{lem:track}), the two clocks drift apart by $O(\varepsilon)$ per return (Lemma~\ref{lem:clocks}), and Gr\"onwall estimates are applied over single periods only, through Lemma~\ref{lem:confine}.  We reduce to the section, then prove the three parts in order.

\emph{Reduction to the section.}  Set $r_0:=r_6$, with $r_6$ shrunk once more if necessary so that $N_{r_6}\subseteq N_{2r_1}$, and $\varepsilon_0:=\min\{\varepsilon_5,\ r_0/K_3,\ v/2\}$, so that the hypothesis $K_3\varepsilon\leq r_0$ of Lemma~\ref{lem:winding}(ii) and the bound $2\varepsilon/v\leq1$ used in part (iii) hold for every $\varepsilon\leq\varepsilon_0$; the $\delta$-smallness conditions of Lemmas~\ref{lem:persist} and~\ref{lem:winding}(ii) are folded into $\varepsilon_3$, and every statement used below holds a fortiori for smaller radii and thresholds.  For $t\leq\That+1$, parts (ii)--(iii) follow from Lemma~\ref{lem:confine}, part (ii) together with the comparison $\dist(\Phi_f(t;x_0),\Gamma)\leq e^{L(\That+1)}\dist(x_0,\Gamma)$, absorbed into $C$.  For $t>\That+1$, Lemma~\ref{lem:hitting} provides crossing data $\sigma_f,\sigma_\Th\leq\That+1$, $\abs{\sigma_\Th-\sigma_f}\leq 2\varepsilon/v$, and $y_f,y_\Th\in\Sigma_{r_4}$ with $\norm{y_f-y_\Th}\leq K_1\varepsilon$, such that $\Phi_f(t;x_0)=\Phi_f(t-\sigma_f;y_f)$ and $\PhiTh(t;x_0)=\PhiTh(t-\sigma_\Th;y_\Th)$.  Write $t':=t-\sigma_f$.  Below we evaluate both flows at the common time $t'$; for the learned flow this replaces $t-\sigma_\Th$ by $t'$, which costs $\norm{\PhiTh(t-\sigma_\Th;y_\Th)-\PhiTh(t';y_\Th)}\leq B\abs{\sigma_\Th-\sigma_f}\leq 2B\varepsilon/v$, an $O(\varepsilon)$ term absorbed into the constants of parts (ii)--(iii) without further mention.

\emph{Part (i).}  This is Lemmas~\ref{lem:persist} and~\ref{lem:winding}.

\emph{Part (ii).}  Let $n$ be the number of $\fTh$-crossings in $(0,t']$, so $t'=\tilde t_n^\Th+s$ with $s\in[0,\That+1)$, where $\tilde t_n^\Th$ are the $\fTh$-crossing times from $y_\Th$.  Then $\PhiTh(t';y_\Th)=\PhiTh(s;\PTh^ny_\Th)$ and
\[
  \dist\bigl(\PhiTh(t';y_\Th),\gamma_\Th\bigr)
  \leq \norm{\PhiTh(s;\PTh^ny_\Th)-\PhiTh(s;x_\Th^*)}
  \leq e^{B(\That+1)}\norm{\PTh^ny_\Th-x_\Th^*}
  \leq C\bar\rho^{\,n}\norm{y_\Th-x_\Th^*} .
\]
Since return times are at most $\That+1$, $n\geq t'/(\That+1)-1$; together with $\dist_H(\gamma_\Th,\Gamma)\leq K_3\varepsilon$ and $\norm{y_\Th-x_\Th^*}\leq C(\dist(x_0,\Gamma)+\varepsilon)$ this proves (ii); the shifts from $t'$ to $t$ and from $\sigma_f$ to $\sigma_\Th$ are absorbed in $C$.

\emph{Part (iii).}  Let $n$ be the number of $f$-crossings in $(0,t']$: $t'=t_n^f+s$, $s\in[0,\That+1)$, and $\Phi_f(t';y_f)=\Phi_f(s;P_f^ny_f)$.  Set $\delta_n:=(t_n^f-t_n^\Th)+(\sigma_f-\sigma_\Th)$, so that $\PhiTh(t;x_0)=\PhiTh(s+\delta_n;\PTh^ny_\Th)$; by Lemma~\ref{lem:clocks}, $\abs{\delta_n}\leq K_6n\varepsilon+2\varepsilon/v$.  Assume first $K_6n\varepsilon\leq 1$, so that $\abs{\delta_n}\leq 1+2\varepsilon/v\leq 2$.  Then $s+\delta_n\in[-2,\That+3]$.  The point $\PhiTh(s+\delta_n;\PTh^ny_\Th)$ lies on the already constructed forward trajectory, and re-indexed at finitely many preceding or following crossings, until the shifted time first lands in $[0,\That+2]$, which is possible because consecutive crossings are at most $\That+1$ apart, it is $\PhiTh(t'';z)$ with $z\in\Sigma_{r_2}$ and $t''\in[0,\That+2]$, so the confinement of Lemma~\ref{lem:confine} applies to it and keeps all points below in $N_{\delta_0}$.  The closeness bound of that lemma is used only at the time $s\in[0,\That+1)$; hence
\begin{align*}
  \norm{\PhiTh(t;x_0)-\Phi_f(t;x_0)}
  &\leq \norm{\PhiTh(s+\delta_n;\PTh^ny_\Th)-\PhiTh(s;\PTh^ny_\Th)}\\
  &\quad+\norm{\PhiTh(s;\PTh^ny_\Th)-\Phi_f(s;\PTh^ny_\Th)}
  +\norm{\Phi_f(s;\PTh^ny_\Th)-\Phi_f(s;P_f^ny_f)}\\
  &\leq B\abs{\delta_n}+\varepsilon+e^{L(\That+1)}K_5\varepsilon
  \;\leq\; C\varepsilon\Bigl(1+\frac{t}{\That}\Bigr),
\end{align*}
using Lemma~\ref{lem:confine} for the middle term (the first term needs no confinement, since it uses only the global bound $\norm{\fTh}\leq B$ supplied by the cutoff extension), Lemma~\ref{lem:track} for the last, and $n\leq 2t/\That+1$ from the lower bound $\That/2$ on return times; one may take $C=1+e^{L(\That+1)}K_5+2B(K_6+1/v)$.  If $K_6n\varepsilon>1$, both trajectories lie in $N_{\delta_0}$ by confinement and (ii), so $\norm{e(t)}\leq\operatorname{diam}(N_{\delta_0}) \leq\operatorname{diam}(N_{\delta_0})K_6n\varepsilon \leq C\varepsilon(1+t/\That)$ upon enlarging $C$.
\end{proof}

\subsection{\texorpdfstring{Proofs of Proposition~\ref{prop:c1-cert} and Corollary~\ref{cor:lf-cert}}{Proofs of the certification results}}\label{app:pf-cert}

This part proves the two certification results of Section~\ref{ssec:cert}: the $C^1$ route in any dimension and the Floquet-loss route in the autonomous plane.

\begin{proof}[Proof of Proposition~\ref{prop:c1-cert}]
We prove the monodromy comparison (ii) first, then the orbit persistence (i), and then the spectral consequence (iii); (ii) comes first because the comparison along the reference orbit is what the other two parts perturb.

\emph{Part (ii).}  Let $X_\Th(t),X(t)$ be the fundamental matrices of the variational equations of $\fTh,f$ along $\ghat$, with $X_\Th(0)=X(0)=I$.  Writing the difference by variation of constants and using $\norm{\fTh-f}_{C^1(U)}\leq\varepsilon_1$ and the $C^1$-bound of $f$ on $U$ gives $\norm{X_\Th(t)-X(t)}\leq\That\,e^{C\That}\varepsilon_1$ for $t\in[0,\That]$.  Restricting to the section $\Sigma\perp f(p)$ and projecting out the tangent direction $f(p)$, which is the orthogonal projection onto $\Sigma=f(p)^{\perp}$ and so has norm one, bounds the transverse monodromies along $\ghat$, $\norm{M_{\Th,\ghat}^\perp-M_f^\perp}\leq C\varepsilon_1$, where $M_{\Th,\ghat}^\perp$ is the projected fundamental matrix of the $\fTh$-variational equation along $\ghat$; the transition to the monodromy $M_{\Th}^\perp:=M_{\Th,\Gamma_\Th}^\perp$ along the learned orbit itself, which is the object in~(ii), is made at the end of part~(i).

\emph{Part (i).}  The true return map satisfies $P_f(p)=p$ with $DP_f(p)=M_f^\perp$ having no eigenvalue $1$ (hyperbolicity), so $I-DP_f(p)$ is invertible.  Since $\fTh$ is $C^1$-close to $f$, its return map $\PTh$ is a $C\varepsilon_1$-perturbation of $P_f$ in $C^1$.  The implicit function theorem applied to $x\mapsto\PTh(x)-x$ then yields a unique fixed point $x_\Th^*$ with $\norm{x_\Th^*-p}\leq C\varepsilon_1$.  Its orbit $\Gamma_\Th$ satisfies $\dist_H(\Gamma_\Th,\Gamma)\leq C\varepsilon_1$.  It remains to compare the monodromy of $\fTh$ along its own orbit $\Gamma_\Th$ with the one along $\ghat$.  Boundedness of $D\fTh$ does not suffice, because the two variational equations are posed along different curves and over different periods.  What is needed is the Lipschitz continuity of the coefficient, and this follows from $f\in C^2$ alone: writing $\Lambda_f:=\Lip(Df)$ on the compact tube, a quantity determined by $f$,
\[
  \norm{D\fTh(y)-D\fTh(z)}
  \;\leq\;2\varepsilon_1+\Lambda_f\norm{y-z}
  \qquad\text{for } y,z\in U,
\]
since $\norm{D\fTh-Df}\leq\varepsilon_1$ at both points.  No bound on $\norm{\fTh}_{C^2}$ is used, so the constants below still depend only on $(f,\Gamma)$.  Parametrize $\Gamma_\Th$ by $\gamma_\Th(t)=\PhiTh(t;x_\Th^*)$.  On $U$ the learned field is $(L+\varepsilon_1)$-Lipschitz, so
\begin{align*}
  \norm{\gamma_\Th(t)-\ghat(t)}
  &\leq\norm{\PhiTh(t;x_\Th^*)-\PhiTh(t;p)}
   +\norm{\PhiTh(t;p)-\Phi_f(t;p)}\\
  &\leq e^{(L+\varepsilon_1)t}\norm{x_\Th^*-p}
   +t\,e^{(L+\varepsilon_1)t}\varepsilon_1
  \;\leq\; C\varepsilon_1
\end{align*}
for $t\in[0,\That+1]$.  Let $Y$ and $X_\Th$ solve the variational equations of $\fTh$ along $\gamma_\Th$ and along $\ghat$, both from the identity.  By the display above their coefficients differ by at most $2\varepsilon_1+\Lambda_f C\varepsilon_1$, so variation of constants and Gr\"onwall give $\norm{Y(t)-X_\Th(t)}\leq C\varepsilon_1$ on $[0,\That+1]$.  The two periods differ by $\abs{T_\Th-\That}\leq C\varepsilon_1$, and $\norm{\dot Y}\leq(L+\varepsilon_1)e^{(L+\varepsilon_1)(\That+1)}$, so evaluating at $T_\Th$ rather than $\That$ costs a further $C\varepsilon_1$.  One step remains, because the transverse monodromies are read off through different projections: the return map at $x_\Th^*$ projects along $\fTh(x_\Th^*)$ and the one at $p$ along $f(p)$, onto the same hyperplane $\Sigma$.  The two projectors differ by at most $C\norm{\fTh(x_\Th^*)-f(p)}\leq C\varepsilon_1$.  Combining the four estimates with~(ii) yields $\norm{M_{\Th,\Gamma_\Th}^\perp-M_f^\perp}\leq C\varepsilon_1$.  Uniqueness of $\Gamma_\Th$ among the periodic orbits of a tube around $\Gamma$ is deferred, because it is not a consequence of the uniqueness of the fixed point on $\Sigma_r$: an orbit might cross the section several times, or outside that disc.  It follows from Lemma~\ref{lem:winding}\textup{(ii)}, whose hypotheses are $\delta\leq\delta_1$, satisfied here since $\delta\leq\varepsilon_1<\varepsilon_*$, and Assumption~\ref{asm:cert}.  The latter is not available yet, so we return to it after part~(iii).

\emph{Part (iii).}  The spectral radius is continuous at $M_f^\perp$, so with $\bar\rho_f=(1+\rhoT(f))/2$, the constant of Lemma~\ref{lem:persist}, there is $\varepsilon_*>0$, depending only on $(f,\Gamma)$, such that $\rhoTh=\rho\bigl(M_{\Th,\Gamma_\Th}^\perp\bigr)\leq\bar\rho_f<1$ for $\varepsilon_1<\varepsilon_*$.  This is a bound at the fixed point of the learned return map, whereas \eqref{eq:cert-cond} is posed at the base point $p$, so one further step is needed.  By part~\textup{(i)} the two points lie within $C\varepsilon_1$ of each other and $D\PTh$ is Lipschitz on the disc, so $\rho(D\PTh(p))\leq\bar\rho_f+\omega(C'\varepsilon_1)$, with $\omega$ the modulus of continuity of the spectral radius on the norm ball $\{A:\norm{A}\leq C_M\}$ of the proof of Lemma~\ref{lem:persist}.  As $\bar\rho_f<1$, shrinking $\varepsilon_*$ once more makes the right side at most $(1+\bar\rho_f)/2<1$, so Assumption~\ref{asm:cert} holds with $\rho_*:=(1+\bar\rho_f)/2$.  Lemma~\ref{lem:winding}\textup{(ii)} now applies and completes the uniqueness claim of part~(i); the constant it contributes depends on $\bar\rho_f$, hence on $\rhoT(f)$, in addition to $(f,\Gamma)$.  In dimension two the transverse monodromy is a scalar and the rate is linear, $\rhoTh\leq\rhoT(f)+C\varepsilon_1$ with $\varepsilon_*=(1-\rhoT(f))/(2C)$.  For $d>2$ no linear rate holds in general: a defective $M_f^\perp$ can move its spectral radius by $O(\varepsilon_1^{1/k})$ under an $O(\varepsilon_1)$ perturbation.  When $M_f^\perp=V\Lambda V^{-1}$ is diagonalizable, the Bauer--Fike theorem \cite{BauerFike1960} restores the quantitative threshold:
\[
  \rhoTh\;\leq\;\rhoT(f)+\varkappa(V)\,C\varepsilon_1,
  \qquad
  \varkappa(V):=\norm{V}\,\norm{V^{-1}},
\]
with $\varepsilon_*=(1-\rhoT(f))/(2\varkappa(V)C)$.  This is the route used for certification in Section~\ref{ssec:cert}.
\end{proof}

\begin{proof}[Proof of Corollary~\ref{cor:lf-cert}]
Throughout $d=2$ and $\eta=0$, so $\fTh$ is autonomous; only Assumptions~\ref{asm:true}--\ref{asm:learned} are used, never the certificate being derived.

\emph{Step 1: the return map, without a certificate.} By Lemma~\ref{lem:return}, $\PTh:\Sigma_{r_2}\to\Sigma$ is well defined and $C^1$ with $\norm{\PTh-P_f}_{C^0(\Sigma_{r_2})}\leq K_1\varepsilon$; this step uses only Assumptions~\ref{asm:true} and~\ref{asm:learned}. The construction of the adapted norm for the \emph{true} map in the proof of Lemma~\ref{lem:persist} uses only Assumption~\ref{asm:true} and provides $\norm{\cdot}_{*f}$ with equivalence constants $c_1',c_2'$ and a radius $r_3'$ on which $\norm{DP_f}_{*f}\leq\bar\rho_f:=(1+\rhoT(f))/2<1$.

\emph{Step 2: existence and location of the learned orbit.} Since $P_f(p)=p$, the closed $\norm{\cdot}_{*f}$-ball $\bar B:=\{y:\norm{y-p}_{*f}\leq R\}$ with $R:=c_2'K_1\varepsilon/(1-\bar\rho_f)$ lies inside $\Sigma_{r_3'}$ once $\varepsilon$ is small, and there $P_f(\bar B)\subseteq\{\,\norm{\cdot-p}_{*f}\leq\bar\rho_f R\,\}$ by the mean value inequality; $\norm{\PTh-P_f}_{*f}\leq c_2'K_1\varepsilon=(1-\bar\rho_f)R$ then gives $\PTh(\bar B)\subseteq\bar B$.  In dimension two $\Sigma$ is a line and $\bar B$ a compact interval, so the continuous self-map $\PTh$ has a fixed point in $\bar B$ by the intermediate value theorem.  Moreover every fixed point $\bar x\in\Sigma_{r_3'}$ obeys $\norm{\bar x-p}_{*f}=\norm{\PTh(\bar x)-P_f(p)}_{*f} \leq c_2'K_1\varepsilon+\bar\rho_f\norm{\bar x-p}_{*f}$, the segment $[p,\bar x]$ staying in $\Sigma_{r_3'}$, hence $\norm{\bar x-p}\leq c_1'^{-1}R=:K_2'\varepsilon$.  The relaxed orbit $\ghat_\Th$ of Definition~\ref{def:floquet-loss}, once it closes into a periodic orbit inside the tube $N_{r_5'}$ (the implementation tests closure; membership of the tube is assumed, as discussed after \eqref{eq:rho-surrogate}), crosses $\Sigma$ within distance $C_U(r_5'+\delta)$ of $p$ by Lemma~\ref{lem:winding}\textup{(i)}, which uses neither Assumption~\ref{asm:cert} nor an adapted radius.  Both terms are at our disposal: $r_5'$ may be shrunk, and $\delta\leq\sqrt{2C_2\varepsilon}$ by Lemma~\ref{lem:delta-eps}, so $C_U(r_5'+\delta)\leq r_3'$ once $r_5'$ and $\varepsilon$ are small.  A closed orbit may a priori meet the section several times before closing, so that crossing is only a periodic point, $\PTh^m(\bar x)=\bar x$ for some $m\geq 1$.  In dimension two we can rule out $m\geq2$ without a certificate.  By Step~2 of the proof of Lemma~\ref{lem:winding} the phase increases strictly along every trajectory in the tube, so $\ghat_\Th$ winds monotonically around the annulus $N_{r_5'}$.  Being a periodic orbit of the autonomous field $\fTh$, it is a simple closed curve, by uniqueness of solutions; a simple closed curve in the annulus has winding number zero or $\pm1$ about $\Gamma$, and strict phase monotonicity excludes zero.  Its winding number is therefore one, so it meets the transverse section exactly once per period, whence $m=1$.  The crossing is therefore a fixed point of $\PTh$ in $\Sigma_{r_3'}$, and within $K_2'\varepsilon$ of $p$ by the a-priori bound.  We take $x_\Th^*$ to be this crossing and $\gamma_\Th:=\{\PhiTh(t;x_\Th^*)\}=\ghat_\Th$ its orbit, of period $T_\Th:=\tauTh(x_\Th^*)$ with $\abs{T_\Th-\That}\leq K_4'\varepsilon$, where $K_4':=\Lip(\tau_f)K_2'+2/v$ is built from the certificate-free constant $K_2'$ of this step and Lemma~\ref{lem:return}.  When the relaxation degenerates instead, the implementation declares the surrogate unevaluable and no certificate is claimed; the intermediate-value fixed point above still guarantees that the periodic orbit exists.

\emph{Step 3: the scalar multiplier.} Lemma~\ref{lem:liouville} applied to $x_\Th^*$ gives
\[
  D\PTh(x_\Th^*)
  =\exp\Bigl(\int_0^{T_\Th}\operatorname{div}\fTh(\gamma_\Th)\Bigr)>0,
\]
which in dimension two equals $\rho(D\PTh(x_\Th^*))$.

\emph{Step 4: surrogate versus multiplier.} The surrogate is $\tilde\rho_T(\Th)=\exp\bigl(\int_0^{\That}\operatorname{div}_x\fTh(\ghat_\Th)\bigr)$, the same integrand along the same orbit $\ghat_\Th=\gamma_\Th$, re-parametrized to start at $x_\Th^*$, a phase shift that changes neither integral by periodicity of the integrand; the two differ only in the integration length $\That$ versus $T_\Th$.  With $\abs{\operatorname{div}\fTh}\leq dB$ on the tube,
\[
  \bigl|\log\tilde\rho_T(\Th)-\log\rho(D\PTh(x_\Th^*))\bigr|
  \;=\;\Bigl|\int_{T_\Th}^{\That}\operatorname{div}\fTh(\gamma_\Th)\Bigr|
  \;\leq\; dB\,K_4'\varepsilon ,
\]
so, using $\abs{a-b}\leq\max(a,b)\,(e^{\kappa}-1)$ with $\kappa:=dB K_4'\varepsilon$ and $\max(a,b)\leq e^{dB(\That+1)}$, we get $\abs{\tilde\rho_T(\Th)-\rho(D\PTh(x_\Th^*))}\leq C\varepsilon$ with $C:=dB\,K_4'\,e^{dB(\That+2)}$, valid once $\kappa\leq dB$; the constant involves no certificate.

\emph{Step 5: conclusion.} If $\Ltraj(\Th)=0$ then $\Phi_\Th(\cdot;p)$ is $\That$-periodic, so $p$ is a fixed point of $\PTh$ and $\Gamma$ is an orbit of $\fTh$.  This alone does not identify $p$ with $x_\Th^*$: the map $\PTh$ is not yet known to be a contraction, and a second cycle of $\fTh$ could carry the relaxation elsewhere.  The hypothesis $\ghat_\Th=\Gamma$ of~(i) supplies the identification, and the implementation checks it by testing that the relaxed curve passes near $p$.  Under it $x_\Th^*=p$ and $T_\Th=\That$, so Step~4 collapses to equality and $\rho(D\PTh(p))=\tilde\rho_T(\Th)\leq\rho_*$, proving~(i).  In general, Step~4 and $\tilde\rho_T(\Th)\leq\rho_*$ give $\rho(D\PTh(x_\Th^*))\leq\rho_*+C\varepsilon<1$ once $\varepsilon\leq(1-\rho_*)/(2C)$, which is Assumption~\ref{asm:cert} at $x_\Th^*$; since $\norm{x_\Th^*-p}\leq K_2'\varepsilon$ and $D\PTh$ is Lipschitz (Lemma~\ref{lem:return}), the same holds at $p$ with $C':=C+\Lip(D\PTh)K_2'$, so $\rho(D\PTh(p))\leq\rho_*+C'\varepsilon<1$ once $\varepsilon\leq(1-\rho_*)/(2C')$, proving~(ii).  Theorem~\ref{thm:floquet} then applies and the chain \eqref{eq:main-chain} holds.
\end{proof}

\subsection{\texorpdfstring{Proof of Lemma~\ref{lem:nearaut}}{Proof of the near-autonomy reduction}}\label{app:pf-nearaut}

This part proves the reduction of the periodic encoding to the autonomous case.  The convention $\eta=0$ of Appendix~\ref{app:pf-lemmas} is dropped here.

\begin{proof}[Proof of Lemma~\ref{lem:nearaut}]
The proof has two parts.  Part (i) compares the encoded flow with the flow of the averaged field, in $C^0$ and in $C^1$, by a Gr\"onwall estimate whose forcing is the oscillation $\eta$.  Part (ii) applies Theorem~\ref{thm:floquet} to the averaged field and transfers its conclusions back to the encoded flow.

\emph{Part (i): the averaged flow is $C_B\eta$-close.}  Averaging preserves the $C^2$ bound, so $\norm{\bar f_{\Th}}_{C^2(N_{\delta_0})}\leq B$ and $D\bar f_{\Th}$ is $B$-Lipschitz.  Fix $s\in\R$ and $x\in N_{2r_1}$, and set $w(t):=\PhiTh^{(s)}(t;x)-\Phi_{\bar f_{\Th}}(t;x)$.  While both trajectories remain in $N_{\delta_0}$,
\[
  \dot w
  =\bigl[\fTh(\PhiTh^{(s)},s+t)-\bar f_{\Th}(\PhiTh^{(s)})\bigr]
  +\bigl[\bar f_{\Th}(\PhiTh^{(s)})-\bar f_{\Th}(\Phi_{\bar f_{\Th}})\bigr],
\]
so $\norm{\dot w}\leq\eta+B\norm{w}$, and Gr\"onwall with $w(0)=0$ gives $\norm{w(t)}\leq\eta\,t\,e^{Bt}$ on $[0,\That+2]$; confinement of both trajectories follows as in Lemma~\ref{lem:confine} once $\varepsilon+C_B\eta\leq\delta_0/4$.  For the derivative, the variational solutions $V^{(s)}:=D_x\PhiTh^{(s)}$ and $\bar V:=D_x\Phi_{\bar f_{\Th}}$ satisfy $\norm{V^{(s)}}, \norm{\bar V}\leq e^{Bt}$ and
\[
  \Bigl\|\tfrac{\mathrm{d}}{\mathrm{d}t}\bigl(V^{(s)}-\bar V\bigr)\Bigr\|
  \;\leq\;
  \bigl(\eta+B\norm{w}\bigr)e^{Bt}+B\norm{V^{(s)}-\bar V},
\]
using $\norm{D_x\fTh(\cdot,s+t)-D\bar f_{\Th}}\leq\eta$ and the $B$-Lipschitz continuity of $D\bar f_{\Th}$; Gr\"onwall again gives $\norm{V^{(s)}(t)-\bar V(t)}\leq \eta\,t\,(1+B t)\,e^{2Bt}$.  The sum of the two bounds is at most $C_B\eta$ on $[0,\That+2]$.  The closeness of $\bar f_{\Th}$ to $f$ follows from the case $s=0$ and the triangle inequality; on $\Gamma$, $\norm{\bar f_{\Th}-f}\leq\delta+\eta$ since $\bar f_{\Th}-\fTh(\cdot,t)$ has norm at most $\eta$.

\emph{(ii)} \emph{Step 1: the averaged field.} By (i), $\bar f_{\Th}$ is an autonomous $C^2$ field satisfying Assumptions~\ref{asm:learned}--\ref{asm:cert} with closeness $\varepsilon+C_B\eta$, orbit mismatch $\delta+\eta$, and contraction constant $\rho_*$; after decreasing $\varepsilon_6$ these place it in the scope of Theorem~\ref{thm:floquet}.  Its conclusions hold for $\Phi_{\bar f_{\Th}}$: a unique orbit $\bar\gamma_\Th$ with the persistence bounds of part (i), and
\[
  \norm{\Phi_{\bar f_{\Th}}(t;x_0)-\Phi_f(t;x_0)}
  \;\leq\;
  C(\varepsilon+C_B\eta)\Bigl(1+\frac{t}{\That}\Bigr).
\]

\emph{Step 2: the encoded field against the averaged field.} Rerun the proof of Theorem~\ref{thm:floquet}(iii) with reference $\bar f_{\Th}$, whose orbit is hyperbolic and stable with $\rho(D\bar P)\leq\rho_*$, and with the encoded flow as the perturbed system, one-period closeness $C_B\eta$ uniform in the launch phase by (i).  The barred constants $\bar K_1$ and $\bar v$ below are the analogues of $K_1$ and $v$ for the averaged field $\bar f_{\Th}$; by part (i) and Lemma~\ref{lem:delta-eps} they are finite and $O(\sqrt{\varepsilon}+\eta)$-close to the originals, while $\bar\rho=(1+\rho_*)/2$ keeps its value.  Four replacements are needed; no other step of that proof uses the autonomy of the perturbed field. First, for each phase $s$ the argument of Lemma~\ref{lem:return} applied to the pair $(\bar f_{\Th},\PhiTh^{(s)})$ yields a return map $\PTh^{(s)}$ with $\norm{\PTh^{(s)}-\bar P}_{C^0(\Sigma_{r_2})}\leq \bar K_1C_B\eta$ uniformly in $s$; the $k$-th crossing of an encoded trajectory is $z_{k+1}=\PTh^{(s_k)}(z_k)$ with $s_k$ the accumulated crossing time modulo $\That$. Second, in the proof of Lemma~\ref{lem:track} the recursion becomes
\[
  \tilde d_{k+1}
  \;\leq\;
  \norm{\bar P(\bar P^k\bar y)-\bar P(z_k)}_{*}
  +\norm{(\bar P-\PTh^{(s_k)})(z_k)}_{*}
  \;\leq\;
  \bar\rho\,\tilde d_k + \bar c_2\bar K_1C_B\eta,
\]
which uses only the contraction of $\bar P$ in its adapted norm, with equivalence constant $\bar c_2$, and the uniform $C^0$ bound; the confinement of the crossings $z_k$ in $\Sigma_{r_3}$ follows from $\tilde d_k\leq K_5'\eta$ by the same induction. Third, the initial crossing data is supplied by Lemma~\ref{lem:hitting} for the pair, whose winding estimate for the encoded flow holds with the time-uniform mismatch $\delta$ of Assumption~\ref{asm:learned}, the slope bound being pointwise in time.  Fourth, in Lemma~\ref{lem:clocks} each increment obeys $\abs{\tau^{(s_k)}_{\Th}(z_k)-\tau_{\bar f_{\Th}}(\bar P^k\bar y)} \leq(2/\bar v)C_B\eta+\Lip(\tau_{\bar f_{\Th}})\tilde d_k$, and the sums remain linear in $n$.  All constants of this step depend only on $(f,\ghat,\delta_0,B,\rho_*,\rhoT(f))$ for $\varepsilon+\eta\leq\varepsilon_6$, since the data of $\bar f_{\Th}$ (base point, speed, tube, Lipschitz constants) are $O(\sqrt{\varepsilon}+\eta)$-perturbations of those of $f$: the speed through the orbit mismatch $\delta+\eta$ of Lemma~\ref{lem:delta-eps}, the base point through persistence at rate $O(\varepsilon+\eta)$, and the tube and Lipschitz constants only through the uniform bound $B$.  Only their smallness enters the constants.  The assembly of Theorem~\ref{thm:floquet}(iii) then gives
\[
  \norm{\PhiTh(t;x_0)-\Phi_{\bar f_{\Th}}(t;x_0)}
  \;\leq\;
  C\eta\Bigl(1+\frac{t}{\That}\Bigr),
\]
and with Step 1 and the triangle inequality this proves \eqref{eq:nearaut-main}.  The attraction statement does not follow from the tracking bound, whose linear growth in $t$ would spoil uniformity; it follows from the recursion displayed above.  Iterating $\tilde d_{k+1}\leq\bar\rho\,\tilde d_k+\bar c_2\bar K_1C_B\eta$ gives $\tilde d_k\leq\bar\rho^{\,k}\tilde d_0+\bar c_2\bar K_1C_B\eta/(1-\bar\rho)$, uniformly in $k$: this is the cap recorded above.  Between consecutive crossings the trajectory is flowed for a time at most $\That+1$ by a field of spatial Lipschitz constant at most $B$, so a single-period Gr\"onwall factor $e^{B(\That+1)}$, together with the norm equivalences, converts the crossing bound into $\dist\bigl(\PhiTh^{(0)}(t;x_0),\bar\gamma_\Th\bigr)\leq C\bigl(\bar\rho^{\,t/(\That+1)}\dist(x_0,\bar\gamma_\Th)+\eta\bigr)$ for all $t\geq0$: the trajectory enters and remains in a $C\eta$-neighborhood of $\bar\gamma_\Th$.  The distance conclusion for the encoded flow follows by adding $\dist_H(\bar\gamma_\Th,\Gamma)$; the tracking bound and the triangle inequality enter only in the trajectory estimate \eqref{eq:nearaut-main}, which is where the linear term originates.
\end{proof}

\subsection{\texorpdfstring{Proofs of Proposition~\ref{prop:obstruction-local} and Theorem~\ref{thm:orbital}}{Proofs of the obstruction and the orbital guarantee}}\label{app:pf-orbital}

This part proves the local obstruction and the orbital guarantee.  Both proofs stand outside the convention of Appendix~\ref{app:pf-lemmas}: they use the standing hypotheses of Section~\ref{ssec:orbital}, namely Assumption~\ref{asm:true}, exact $\That$-periodicity of $\fTh$ in $t$, and $\norm{\fTh}_{C^2(\R^d\times\R)}\leq B$, together with the constant $\varepsilon$ of Assumption~\ref{asm:learned} where stated, and their constants depend only on $(d,\That,B,\rho_*)$.

\begin{proof}[Proof of Proposition~\ref{prop:obstruction-local}]
The proof has four steps.  The tangent to $\Gamma$ is an exact eigenvector of the target's stroboscopic map, with eigenvalue one.  An interpolation inequality upgrades the $C^0$ closeness of Assumption~\ref{asm:learned} to $C^1$ closeness, at the cost of a square root.  The tangent is therefore an approximate eigenvector of $M$ with approximate eigenvalue one, and Hadamard's inequality converts that into a lower bound on the spectral radius.

\emph{Step 1: the unit eigenvalue of the target.}  Since $\Gamma$ has period exactly $\That$, $\Phi_f(\That;\ghat(s))=\ghat(s)$ for every $s$.  Differentiating in $s$ gives $DS_f(\ghat(s))\,\dot\ghat(s)=\dot\ghat(s)$, where $S_f:=\Phi_f(\That;\cdot)$.  Fix $x\in\Gamma$ with $\norm{x^\sharp-x}\leq\varrho$ and let $\tau$ be the unit tangent to $\Gamma$ at $x$, which is well defined because $\norm{f}\geq v_{\min}>0$ on $\Gamma$.  Then $DS_f(x)\tau=\tau$.

\emph{Step 2: from $C^0$ to $C^1$ closeness.}  Put $g:=\Sth-S_f$ on $N_{2r_1}$.  Assumption~\ref{asm:learned} at $t=\That$ gives $\norm{g}_{C^0(N_{2r_1})}\leq\varepsilon$.  Both maps are $C^2$: for $\Sth$ this uses only the global bound $\norm{\fTh}_{C^2(\R^d\times\R)}\leq B$ supplied by the cutoff extension of Assumption~\ref{asm:learned}, with no confinement and no condition on $\varepsilon$; for $S_f$, the confinement of the true flow, built from $f$ alone in the proof of Lemma~\ref{lem:confine} and likewise free of any condition on $\varepsilon$, keeps every $f$-trajectory from $N_{2r_1}$ in $N_{\delta_0}$ for $t\in[0,\That]$, where $\norm{f}_{C^2(N_{\delta_0})}\leq B$; the first and second variational equations then bound $\norm{D^2S_f}$ and $\norm{D^2\Sth}$ on $N_{2r_1}$ by a constant depending only on $(d,\That,B)$.  Write $\Lambda_2\geq1$ for the resulting bound on $\norm{D^2g}$ there.  For unit vectors $u,w$ set $\varphi(h):=\langle w,g(x+hu)\rangle$, defined for $\abs{h}\leq r_1$ since $x\in\Gamma$.  Taylor expansion at $0$ in both directions and subtraction give, for $0<a\leq r_1$,
\[
  \abs{\varphi'(0)}\;\leq\;\frac{\varepsilon}{a}+\frac{a\Lambda_2}{2},
\]
and the choice $a=\sqrt{2\varepsilon/\Lambda_2}$, admissible under the stated bound on $\varepsilon$, yields $\abs{\varphi'(0)}\leq\sqrt{2\varepsilon\Lambda_2}$.  Taking the supremum over $u$ and $w$ gives $\norm{Dg(x)}\leq\sqrt{2\varepsilon\Lambda_2}$.

\emph{Step 3: the tangent is an approximate eigenvector of $M$.}  With $\Lambda_1$ a Lipschitz constant for $D\Sth$, again from the variational equations,
\begin{align*}
  \norm{(M-I)\tau}
  &\;\leq\;\norm{D\Sth(x^\sharp)-D\Sth(x)}+\norm{D\Sth(x)-DS_f(x)}+\norm{DS_f(x)\tau-\tau}\\
  &\;\leq\;\Lambda_1\varrho+\sqrt{2\varepsilon\Lambda_2}\;=:\;\Delta ,
\end{align*}
the last term vanishing by Step~1.

\emph{Step 4: Hadamard.}  Complete $\tau$ to an orthonormal basis of $\R^d$.  Hadamard's inequality bounds the determinant by the product of the column norms in that basis, so
\[
  \abs{\det(M-I)}\;\leq\;\norm{(M-I)\tau}\;\norm{M-I}^{\,d-1}\;\leq\;\Delta\,(1+\norm{M})^{d-1}.
\]
Since $\det(M-I)=\prod_i(\lambda_i-1)$ over the eigenvalues of $M$, some eigenvalue satisfies $\abs{\lambda-1}\leq[\Delta(1+\norm{M})^{d-1}]^{1/d}$, and $\rho(M)\geq\abs{\lambda}\geq1-\abs{\lambda-1}$ gives \eqref{eq:obstruction-local}.
\end{proof}

\begin{proof}[Proof of Theorem~\ref{thm:orbital}]
The proof has three steps.  Step~1 turns the spectral hypothesis \textup{(B1)} into a genuine contraction of $\Sth$ on a ball around its fixed point, in an adapted norm.  Step~2 uses the periodicity identity \eqref{eq:strobo-semigroup} to pass from the iterates of $\Sth$ to the flow at intermediate times.  Step~3 is a triangle inequality against $\Gamma$ through hypothesis \textup{(B2)}.  No section, return time or clock comparison appears, and Assumption~\ref{asm:learned} is never used.

\emph{Step 1: contraction of the stroboscopic map.}  Since $\norm{\fTh}_{C^2}\leq B$ on all of $\R^d\times\R$, the variational equation along any trajectory gives, for every $x$,
\begin{equation}\label{eq:strobo-c1}
  \norm{D\Sth(x)}\;\leq\;e^{B\That},
  \qquad
  \Lip\bigl(D\Sth\bigr)\;\leq\;\Lambda_B:=\That B\,e^{2B\That},
\end{equation}
the second by differentiating the variational equation once more and applying Gr\"onwall on $[0,\That]$; both constants depend only on $(\That,B)$.  Put $\theta:=(\rho_*+\bar\rho)/2\in(\rho_*,\bar\rho)$.  The matrix $M$ lies in the compact family $\{A:\rho(A)\leq\rho_*,\ \norm{A}\leq e^{B\That}\}$, so the adapted-norm construction in the proof of Lemma~\ref{lem:persist} applies.  That construction is pure linear algebra, and works for any compact family of square matrices of spectral radius at most $\rho_*$ and norm at most a fixed bound.  There it is applied to the $(d-1)\times(d-1)$ section matrices, here to the full $d\times d$ monodromy.  In particular it is applied for $d=2$, where the one-dimensional shortcut of that proof does not.  It supplies a norm $\norm{\cdot}_\diamond$ with $\norm{M}_\diamond\leq\theta$ and $\norm{\cdot}\leq\norm{\cdot}_\diamond\leq\cd\norm{\cdot}$, the lower constant being $1$ because the sum $\sum_i\theta^{-i}\norm{M^ix}$ starts at $i=0$.  We write $\cd$ rather than $c_2$, which is bound elsewhere in this appendix to the section norm; $\cd$ is uniform over the family and depends only on $(d,\That,B,\rho_*)$.  For a matrix $A$ this gives $\norm{A}_\diamond\leq\cd\norm{A}$ in the induced norms.  Set
\[
  R\;:=\;\frac{\bar\rho-\theta}{\cd\,\Lambda_B},
  \qquad
  B_\diamond:=\{y:\norm{y-x_\Th^\dagger}_\diamond\leq R\} .
\]
For $y\in B_\diamond$ we have $\norm{y-x_\Th^\dagger}\leq\norm{y-x_\Th^\dagger}_\diamond\leq R$, so by \eqref{eq:strobo-c1} and the choice of $R$,
\[
  \norm{D\Sth(y)}_\diamond
  \;\leq\;\norm{M}_\diamond+\cd\norm{D\Sth(y)-M}
  \;\leq\;\theta+\cd\Lambda_BR\;\leq\;\bar\rho .
\]
The set $B_\diamond$ is convex and $\Sth x_\Th^\dagger=x_\Th^\dagger$, so the mean value inequality along the segment from $x_\Th^\dagger$ to $y$, which lies in $B_\diamond$, gives $\norm{\Sth y-x_\Th^\dagger}_\diamond\leq\bar\rho\norm{y-x_\Th^\dagger}_\diamond$.  Hence $B_\diamond$ is invariant under $\Sth$ and, by induction,
\begin{equation}\label{eq:strobo-geom}
  \norm{\Sth^{\,n}y-x_\Th^\dagger}_\diamond\;\leq\;\bar\rho^{\,n}\norm{y-x_\Th^\dagger}_\diamond,
  \qquad y\in B_\diamond,\ n\in\mathbb{N} .
\end{equation}
Define $r_0:=R/\cd$.  If $\norm{x_0-x_\Th^\dagger}\leq r_0$ then $\norm{x_0-x_\Th^\dagger}_\diamond\leq\cd r_0=R$, so $x_0\in B_\diamond$; this is the direction in which the equivalence is used, and it costs $\cd$.

\emph{Step 2: from the iterates to the flow.}  Fix $t\geq0$ and write $t=n\That+s$ with $n=\lfloor t/\That\rfloor$ and $s\in[0,\That)$.  By \eqref{eq:strobo-semigroup}, $\PhiTh(t;x_0)=\PhiTh(s;\Sth^{\,n}x_0)$ and $\PhiTh(s;x_\Th^\dagger)\in\gamma_\Th^\dagger$.  Both points are flowed for the same time $s\leq\That$ by the same field, whose spatial Lipschitz constant is at most $B$, so Gr\"onwall gives
\[
  \norm{\PhiTh(t;x_0)-\PhiTh(s;x_\Th^\dagger)}
  \;\leq\;e^{B\That}\norm{\Sth^{\,n}x_0-x_\Th^\dagger}
  \;\leq\;e^{B\That}\,\bar\rho^{\,n}\,\norm{x_0-x_\Th^\dagger}_\diamond
  \;\leq\;\cd\,e^{B\That}\,\bar\rho^{\,n}\,\norm{x_0-x_\Th^\dagger},
\]
using \eqref{eq:strobo-geom} and then the two equivalences in the directions in which they are needed.

\emph{Step 3: comparison with $\Gamma$.}  By \textup{(B2)} every point of $\gamma_\Th^\dagger$ lies within $\varepsilon_\Gamma$ of $\Gamma$, in particular $\dist(\PhiTh(s;x_\Th^\dagger),\Gamma)\leq\varepsilon_\Gamma$.  The triangle inequality for the distance to the compact set $\Gamma$ then gives
\[
  \dist\bigl(\PhiTh(t;x_0),\Gamma\bigr)
  \;\leq\;\norm{\PhiTh(t;x_0)-\PhiTh(s;x_\Th^\dagger)}+\dist\bigl(\PhiTh(s;x_\Th^\dagger),\Gamma\bigr) ,
\]
which with Step~2 is \eqref{eq:orbital-bound} for $C:=\cd e^{B\That}\geq1$.  All constants depend only on $(d,\That,B,\rho_*)$.
\end{proof}

{\section*{Acknowledgments}
The authors sincerely thank Enrique Zuazua for suggesting this line of research and for his detailed feedback on earlier versions of the manuscript, which shaped the revision. Ziqian Li is supported by the Alexander von Humboldt Professorship program.}

\bibliographystyle{siam}
\bibliography{refs}

\end{document}